\newcommand{\overbar}[1]{\mkern 1.5mu\overline{\mkern-1.5mu#1\mkern-1.5mu}\mkern 1.5mu}
\declaretheorem[name=Proposition,numberwithin=section]{prop}
\declaretheorem[name=Lemma,numberwithin=section]{lemma}
\DeclareRobustCommand{\eg}{e.g.,\@\xspace}                         
\DeclareRobustCommand{\ie}{i.e.,\@\xspace}                         
\DeclareRobustCommand{\wrt}{w.r.t.\@\xspace}                       
\newcommand{\mathbr}[1]{\bm{\mathbf{#1}}}
\DeclareRobustCommand{\quotes}[1]{``#1''}
\DeclareMathOperator*{\argmax}{arg\,max}
\newcommand{\ev}{\mathop{\mathds{E}}}
\icmltitlerunning{Configurable Markov Decision Processes}
\begin{document}

\setlength{\abovedisplayskip}{2pt}
\setlength{\belowdisplayskip}{2pt}

\twocolumn[
\icmltitle{Configurable Markov Decision Processes}



\icmlsetsymbol{equal}{*}

\begin{icmlauthorlist}
\icmlauthor{Alberto Maria Metelli}{polimi,equal}
\icmlauthor{Mirco Mutti}{polimi,equal}
\icmlauthor{Marcello Restelli}{polimi}
\end{icmlauthorlist}

\icmlaffiliation{polimi}{Politecnico di Milano, 32, Piazza Leonardo da Vinci, Milan,
Italy}

\icmlcorrespondingauthor{Alberto Maria Metelli}{\href{mailto:albertomaria.metelli@polimi.it}{\texttt{albertomaria.metelli@polimi.it}}}

\icmlkeywords{Machine Learning, Reinforcement Learning, Configurable Markov Decision Processes, Safe Learning, ICML}

\vskip 0.3in
]



\printAffiliationsAndNotice{\icmlEqualContribution} 

\begin{abstract}
In many real-world problems, there is the possibility to configure, to a limited extent, some environmental parameters to improve the performance of a learning agent. In this paper, we propose a novel framework, Configurable Markov Decision Processes (Conf-MDPs), to model this new type of interaction with the environment. Furthermore, we provide a new learning algorithm, Safe Policy-Model Iteration (SPMI), to jointly and adaptively optimize the policy and the environment configuration. After having introduced our approach and derived some theoretical results, we present the experimental evaluation in two explicative problems to show the benefits of the environment configurability on the performance of the learned policy.
\end{abstract}

\section{Introduction}
\label{sec:introduction}
Markov Decision Processes (MDPs)~\cite{puterman2014markov} are a popular formalism to model sequential decision-making problems. Solving an MDP means to find a \emph{policy}, \ie a prescription of actions, that maximizes a given utility function.
Typically, the environment
dynamics is assumed to be fixed, unknown and out of the control of the agent. Several exceptions to this scenario can be found in the literature, especially in the context of Markov Decision Processes with imprecise probabilities (MDPIPs)~\cite{satia1973markovian, white1994markov, bueno2017modeling} and non-stationary environments~\cite{bowerman1974nonstationary, hopp1987new}. In the former case, the transition kernel is known under uncertainty. Therefore, it
cannot be specified using a conditional probability distribution, but it must be defined through a set of probability distributions~\cite{delgado2009representing}. In this context, Bounded-parameter Markov Decision Processes (BMDPs) consider a special case in which upper and lower bounds on transition probabilities are specified~\cite{givan1997boundedmdp, ni2008boundedpomdp}.
A common approach is to solve a minimax problem to find a robust policy maximizing the expected return under the worst possible transition model~\cite{osogami2015robustpomdp}.
In non-stationary environments, the transition probabilities (possibly also the reward function) change over time~\cite{bowerman1974nonstationary}. Several works tackle the problem of defining an optimality criterion~\cite{hopp1987new} and finding optimal
policies in non-stationary environments~\cite{garcia2000solving, cheevaprawatdomrong2007solution, ghate2013linear}.

Although the environment is no longer fixed, both Markov decision processes with imprecise probabilities and non-stationary Markov decision processes do not admit the possibility to dynamically alter the environmental parameters. However, there exist several real-world scenarios in
which the environment is \emph{partially controllable} and, therefore, it might be beneficial to configure some of its features in order to select the most convenient MDP to solve. For instance, a human car driver has at her/his disposal a number
of possible vehicle configurations she/he can act on (\eg seasonal tires, stability and vehicle attitude, engine model, automatic speed control, parking aid system) to improve the driving style or quicken the process of learning a good driving policy. Another example is the interaction between a student and an automatic teaching system: the teaching model can be tailored to improve the student's learning experience (\eg increasing or decreasing the difficulties of the questions or the speed at which the concepts are presented). It is worth noting that the active entity in the configuration process might be the agent itself or an external supervisor guiding the learning process. In the latter case, for instance, a supervisor can dynamically adapt where to place the products in a supermarket in order to maximize the customer (agent) satisfaction. Similarly, the design of a street network could be 
configured, by changing the semaphore transition times or the direction of motion, to reduce the drivers' journey time. In a more abstract sense, the possibility to act on the environmental parameters can have essentially two benefits: i) it allows improving the agent performance; ii) it may allow to speed up the learning process. This second instance has been previously addressed in \cite{ciosek2017offer, florensa2017reversecurriculum}, where the transition model and the initial state distribution are altered in order to reach a faster convergence to the optimal policy. However, in both the cases the environment modification is only simulated, while the underlying environment dynamic remains unchanged.

In this paper, we propose a framework
to model a \emph{Configurable Markov Decision Process} (Conf-MDP), \ie an MDP in which the environment can be configured to a limited extent. In principle, any of the Conf-MDP's parameters can be tuned, but we restrict our attention to the transition model and we focus to the problem of identifying the environment that allows achieving the highest performance possible. At an intuitive level, there exists a tight connection between environment and policy: variations of the environment induce modifications of the optimal policy. Furthermore, even for the same task, in presence of agents having access to different policy spaces, the optimal environment might be different.\footnote{In general, a modification of the environment (\eg changing the configuration of a car) is more expensive and more constrained \wrt to a modification of the policy.} The spirit of this work is to investigate and exercise the tight connection between policy and model, pursuing the goal of improving the final policy performance. After having introduced the definition of Conf-MDP, we propose a method to \emph{jointly} and \emph{adaptively} optimize the policy and the transition model, named \emph{Safe Policy-Model Iteration} (SPMI). The algorithm adopts a safe learning approach~\cite{pirotta2013safe} based on the maximization of a lower bound on the guaranteed performance improvement, yielding a sequence of model-policy pairs with monotonically increasing performance. The safe learning perspective makes our approach suitable for critical applications where performance degradation during learning is not allowed (\eg industrial scenarios where extensive exploration of the policy space might damage the machinery). In the standard Reinforcement Learning (RL) framework~\cite{sutton1998reinforcement}, the usage of a lower bound to guide the choice of the policy has been first introduced by Conservative Policy Iteration (CPI)~\cite{kakade2002approximately}, improved by Safe Policy Iteration (SPI)~\cite{pirotta2013safe} and subsequently exploited by~\cite{ghavamzadeh2016safe, abbasi2016fast, papini2017adaptive}. These methods revealed their potential thanks to the preference towards small policy updates, preventing from moving in a single step too far away from the current policy and avoiding premature convergence to suboptimal policies. A similar rationale is at the basis of Relative Entropy Policy Search (REPS)~\cite{peters2010relative}, and, more recently, Trust Region Policy Optimization (TRPO)~\cite{schulman2015trust} and Proximal Policy Optimization (PPO)~\cite{schulman2017proximal}. In order to introduce our framework and highlight its benefits, we limit our analysis to the scenario in which the model space (and the policy space) is known. However, when the model space is unknown, we could resort to a sample-based version of SPMI, which could be derived by adapting that of SPI~\cite{pirotta2013safe}.

We start in Section~\ref{sec:preliminaries} by recalling some basic notions about MDPs and providing the definition of Conf-MDP. In Section~\ref{sec:perf_impr} we derive the performance improvement bound and we outline the main features of SPMI (Section~\ref{sec:spmi}) along with some theoretical results (Section~\ref{sec:theo}).\footnote{The proofs of all the lemmas and theorems can be found in Appendix~\ref{apx:proofs}.} Then, we present the experimental evaluation (Section~\ref{sec:experimental}) in two explicative domains, representing simple abstractions of the main application of Conf-MDPs, with the purpose of showing how configuring the transition model can be beneficial for the final policy performance. 

\section{Preliminaries}
\label{sec:preliminaries}
A discrete-time Markov Decision Process (MDP)~\cite{puterman2014markov} is defined as
$\mathcal{M}=(\mathcal{S},\mathcal{A},P,R,\gamma,\mu)$ where
$\mathcal{S}$ is the state space, $\mathcal{A}$ is the action space,
$P(s'|s,a)$ is a Markovian transition model that defines the
conditional distribution of the next state $s'$ given the current state $s$ and
the current action $a$, $\gamma\in{(0,1)}$ is the discount factor, $R(s,a)\in [ 0 , 1 ]$ is
the reward for performing action $a$ in state $s$ and $\mu$ is the distribution of the initial state. A policy $\pi(a|s)$ defines the probability distribution of an action $a$ given the current state $s$. Given a model-policy pair $(P,\pi)$ we
indicate with $P^{\pi}$ the state kernel function defined as $P^{\pi}(s'|s) = \int_{\mathcal{A}} \pi(a|s)P(s'|s,a) \mathrm{d}a$. We now formalize the Configurable Markov Decision Process (Conf-MDP).

\begin{restatable}[]{defi}{}
	A \textup{Configurable Markov Decision Process} is a tuple $\mathcal{CM} = (\mathcal{S}, \mathcal{A}, R, \gamma, \mu, \mathcal{P}, \Pi)$ where $(\mathcal{S}, \mathcal{A}, R, \gamma, \mu)$ is an MDP without the transition model and $\mathcal{P}$ and $\Pi$ are the model and policy spaces.
\end{restatable}

More specifically, $\Pi$ is the set of policies the agent has access to and $\mathcal{P}$ is the set of available environment configurations (transition models). The performance of a model-policy pair $(P,\pi) \in \mathcal{P} \times \Pi$ is evaluated through the \emph{expected return}, \ie the expected discounted sum of the rewards collected along a trajectory:
\begin{equation}
	J^{P,\pi}_\mu = \frac{1}{1-\gamma} \int_{\mathcal{S}}  d_{\mu}^{P,\pi}(s) \int_{\mathcal{A}} \pi(a|s) R(s,a) \mathrm{d}a \mathrm{d}s,
\end{equation}
where $d_{\mu}^{P,\pi}$ is the $\gamma$-discounted state distribution~\cite{sutton2000policy}, defined recursively as:
\begin{equation} \label{eq:d_mu}
	d_{\mu}^{P,\pi}(s) = (1-\gamma) \mu(s) + \gamma \int_{\mathcal{S}} d_{\mu}^{P,\pi}(s')P^{\pi}(s'|s) \mathrm{d}s'.
\end{equation}
We can also define the $\gamma$-discounted state-action distribution as $\delta_{\mu}^{P,\pi}(s,a) = \pi(a|s)d_{\mu}^{P,\pi}(s)$.
While solving an MDP consists in finding a policy $\pi^*$ that maximizes $J^{P,\pi}_\mu$ under the given fixed environment $P$, solving a Conf-MDP consists in finding a model-policy pair $(P^*,\pi^*)$ such that $P^*,\pi^* = \argmax_{P \in \mathcal{P}, \pi \in \Pi} J^{\pi,P}_\mu$.
For control purposes, the state-action value function, or \emph{Q-function}, is introduced as the expected return starting from a state $s$ and performing action $a$:
\begin{equation} \label{eq:Q}
	Q^{P,\pi}(s,a) =  R(s,a) + \gamma \int_{\mathcal{S}} P(s'|s,a) V^{P,\pi}(s') \mathrm{d}s'.
\end{equation}
For learning the transition model we introduce the state-action-next-state value function or \emph{U-function}, defined as the expected return starting from the state $s$, performing action $a$ and landing to state $s'$:
\begin{equation} \label{eq:U}
	U^{P,\pi}(s,a,s') = R(s,a) + \gamma V^{P,\pi}(s'),
\end{equation}
where $V^{P,\pi}$ is the state value function or \emph{V-function}. These three functions are tightly connected due to the trivial relations: $V^{P,\pi}(s) = \int_{\mathcal{A}} \pi(a|s) Q^{P,\pi}(s,a)\mathrm{d}a$ and $Q^{P,\pi}(s,a) = \int_{\mathcal{S}} P(s'|s,a) U^{P,\pi}(s,a,s')\mathrm{d}s'$.
Furthermore, we define the \emph{policy advantage function} $A^{P, \pi} (s,a) = Q^{P, \pi}(s,a) - V^{P, \pi}(s)$ that quantifies how much an action is better than the others and the \emph{model advantage function} $A^{P, \pi} (s,a,s') = U^{P, \pi}(s,a,s') - Q^{P, \pi}(s,a)$ that quantifies how much the next state is better than the other ones.
In order to evaluate the \emph{one-step improvement} in performance attained by a new policy $\pi'$ or model $P'$ when the current policy is $\pi$ and the current model is $P$, we introduce the \emph{relative advantage functions}~\cite{kakade2002approximately}:
\begin{gather*}
	A_{P, \pi}^{P, \pi'}(s) = \int_{\mathcal{A}} \pi'(a|s) A^{P,\pi}(s,a) \mathrm{d}a,\\
    A_{P, \pi}^{P', \pi}(s,a) = \int_{\mathcal{S}} P'(s'|s,a) A^{P,\pi}(s,a,s') \mathrm{d}s',
\end{gather*}
and the corresponding expected values under the $\gamma$-discounted distributions: $
\mathds{A}_{P,\pi,\mu}^{P,\pi'} = \int_{\mathcal{S}} d_{\mu}^{P,\pi}(s) A_{P,\pi}^{P,\pi'}(s) \mathrm{d}s$ and $\mathds{A}_{P,\pi,\mu}^{P',\pi} = \int_{\mathcal{S}} \int_{\mathcal{A}} \delta_{\mu}^{P,\pi}(s,a) A_{P,\pi}^{P',\pi}(s,a) \mathrm{d}s\mathrm{d}a$.

\section{Performance Improvement}
\label{sec:perf_impr}
The goal of this section is to provide a lower bound to
the performance improvement obtained by moving from a model-policy pair $(P,\pi)$ to another pair $(P',\pi')$.

\subsection{Bound on the $\gamma$-discounted distribution}
We start providing a bound for 
the difference of $\gamma$-discounted distributions under different model-policy pairs.

\begin{restatable}[]{prop}{distributionsBoundCoupled}
	Let $(P,\pi)$ and $(P',\pi')$ be two model-policy pairs,
    the $\ell^1$-norm of the difference between the $\gamma$-discounted state distributions can be upper bounded as:
    \begin{equation*}
    	\Big\| d_{\mu}^{P',\pi'} - d_{\mu}^{P,\pi} \Big\|_1 \le  \frac{\gamma}{1-\gamma} D_{\ev}^{{P'}^{\pi'},{P}^{\pi}},
    \end{equation*}
    where $D_{\ev}^{{P'}^{\pi'},{P}^{\pi}} = \ev_{s\sim d_{\mu}^{P,\pi}} \big\| {P'}^{\pi'}(\cdot|s) - P^{\pi}(\cdot|s) \big\|_1$.
\end{restatable}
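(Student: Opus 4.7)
The plan is to exploit the Bellman-like fixed point equation satisfied by the $\gamma$-discounted state distribution. Introduce the forward (pushforward) operator associated with a model-policy pair, $(T^{P,\pi} f)(s) = \int_{\mathcal{S}} f(s') P^{\pi}(s|s') \mathrm{d}s'$, so that both $d = d_{\mu}^{P,\pi}$ and $d' = d_{\mu}^{P',\pi'}$ satisfy
\begin{equation*}
d = (1-\gamma)\mu + \gamma T^{P,\pi} d, \qquad d' = (1-\gamma)\mu + \gamma T^{P',\pi'} d'.
\end{equation*}

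Subtracting the two identities and adding/subtracting $\gamma T^{P',\pi'} d$ yields
\begin{equation*}
d' - d \;=\; \gamma T^{P',\pi'}(d'-d) \;+\; \gamma\bigl(T^{P',\pi'} - T^{P,\pi}\bigr) d,
\end{equation*}
which I would solve formally for $d'-d$ to get $d'-d = \gamma (I - \gamma T^{P',\pi'})^{-1}(T^{P',\pi'} - T^{P,\pi}) d$, with the inverse defined through its Neumann series.

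Next I would pass to the $\ell^1$-norm and use two standard facts. First, since $T^{P',\pi'}$ is induced by a stochastic kernel, it is non-expansive in $\ell^1$, so the geometric series gives $\|(I-\gamma T^{P',\pi'})^{-1}\|_{1} \le \sum_{k\ge 0}\gamma^{k} = \frac{1}{1-\gamma}$. Second, by Tonelli's theorem,
\begin{equation*}
\bigl\|(T^{P',\pi'} - T^{P,\pi}) d\bigr\|_{1} \le \int_{\mathcal{S}} d(s') \,\bigl\|{P'}^{\pi'}(\cdot|s') - P^{\pi}(\cdot|s')\bigr\|_{1} \mathrm{d}s' = D_{\ev}^{{P'}^{\pi'},{P}^{\pi}}.
\end{equation*}
Multiplying the two bounds and prepending the factor $\gamma$ gives exactly the claimed estimate.

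The only mildly delicate step is the first one: making sure the pushforward operator is well defined on signed measures and that $(I - \gamma T^{P',\pi'})^{-1}$ has the Neumann-series representation in $\ell^{1}$. This is routine since $\gamma < 1$ and $T^{P',\pi'}$ is a Markov operator, but it is the only place the argument could look non-obvious; the remaining computations are just the triangle inequality and Tonelli. As an alternative, one can avoid operator language entirely by unrolling the recursion for $d'-d$ iteratively and bounding each term, producing the same geometric-series factor $\tfrac{\gamma}{1-\gamma}$.
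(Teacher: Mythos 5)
Your proposal is correct and follows essentially the same route as the paper: both start from the recursive definition of $d_\mu^{P,\pi}$, use the identical add-and-subtract decomposition $d'-d=\gamma T^{P',\pi'}(d'-d)+\gamma(T^{P',\pi'}-T^{P,\pi})d$, and bound the second term by $D_{\ev}^{{P'}^{\pi'},P^{\pi}}$ via Tonelli and the stochasticity of the kernel. The only cosmetic difference is that the paper closes the argument by rearranging the self-bounding inequality $\|d'-d\|_1\le\gamma\|d'-d\|_1+\gamma D_{\ev}^{{P'}^{\pi'},P^{\pi}}$, whereas you invert $(I-\gamma T^{P',\pi'})$ by its Neumann series — these are equivalent.
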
 \label{thr:distributionsBoundCoupled}
This proposition provides a way to upper bound the difference of the $\gamma$-discounted state distributions
in terms of the state kernel dissimilarity.\footnote{More formally, $D_{\ev}^{{P'}^{\pi'},{P}^{\pi}}$ is just a \emph{premetric}~\cite{deza2009encyclopedia} and not a metric (see Appendix~\ref{apx:diss} for details).} The state kernel
couples the effects of the policy and the transition model, but it is convenient to keep their
contribution separated, getting the following looser bound.

\begin{restatable}[]{coroll}{distributionsBound}
	Let $(P,\pi)$ and $(P',\pi')$ be two model-policy pairs,
    the $\ell^1$-norm of the difference between the $\gamma$-discounted state distributions can be upper bounded as:
    \begin{equation*}
    	\Big\| d_{\mu}^{P',\pi'} - d_{\mu}^{P,\pi} \Big\|_1 \le  \frac{\gamma}{1-\gamma}  \Big( D_{\ev}^{\pi', \pi} + D_{\ev}^{P', P} \Big),
    \end{equation*}
     where $D_{\ev}^{\pi', \pi} = \ev_{s\sim d_{\mu}^{P,\pi}} \big\| \pi'(\cdot|s) - \pi(\cdot|s) \big\|_1$ and $D_{\ev}^{P', P} = \ev_{(s,a)\sim \delta_{\mu}^{P,\pi}} \big\| P'(\cdot|s,a) - P(\cdot|s,a) \big\|_1$.
\end{restatable}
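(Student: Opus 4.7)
The plan is to show that the kernel dissimilarity $D_{\ev}^{{P'}^{\pi'},{P}^{\pi}}$ appearing in the previous proposition admits the looser upper bound $D_{\ev}^{\pi',\pi} + D_{\ev}^{P',P}$; chaining with Proposition~\ref{thr:distributionsBoundCoupled} then yields the corollary immediately. So the whole work is a pointwise (in $s$) decomposition of $\|{P'}^{\pi'}(\cdot|s) - P^{\pi}(\cdot|s)\|_1$ followed by an expectation against $d_{\mu}^{P,\pi}$.

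Concretely, I would expand the state kernel according to its definition and introduce the cross term $\pi(a|s)P'(s'|s,a)$, writing
\[
\pi'(a|s)P'(s'|s,a) - \pi(a|s)P(s'|s,a) = \bigl(\pi'(a|s) - \pi(a|s)\bigr)P'(s'|s,a) + \pi(a|s)\bigl(P'(s'|s,a) - P(s'|s,a)\bigr).
\]
Applying the triangle inequality and then integrating $|{\cdot}|$ over $s'\in\mathcal{S}$, the first summand contributes $\int_{\mathcal{A}}|\pi'(a|s)-\pi(a|s)|\,\mathrm{d}a = \|\pi'(\cdot|s)-\pi(\cdot|s)\|_1$ after using $\int_{\mathcal{S}} P'(s'|s,a)\,\mathrm{d}s' = 1$, while the second summand contributes $\int_{\mathcal{A}}\pi(a|s)\|P'(\cdot|s,a)-P(\cdot|s,a)\|_1\,\mathrm{d}a$ by Fubini.

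Taking the expectation of the resulting pointwise bound with respect to $s\sim d_{\mu}^{P,\pi}$, the first piece becomes exactly $D_{\ev}^{\pi',\pi}$, and the second piece becomes $\int_{\mathcal{S}\times\mathcal{A}} d_{\mu}^{P,\pi}(s)\pi(a|s)\|P'(\cdot|s,a)-P(\cdot|s,a)\|_1\,\mathrm{d}s\,\mathrm{d}a$, which is $D_{\ev}^{P',P}$ by the definition $\delta_{\mu}^{P,\pi}(s,a)=\pi(a|s)d_{\mu}^{P,\pi}(s)$. Combining this with Proposition~\ref{thr:distributionsBoundCoupled} finishes the proof.

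There is no real obstacle here: the only subtle choice is the direction of the add-and-subtract decomposition, and one has to pick the cross term $\pi(a|s)P'(s'|s,a)$ (rather than $\pi'(a|s)P(s'|s,a)$) so that the expectations that appear match $\pi$ and $P$, which are the ingredients of the sampling distributions $d_{\mu}^{P,\pi}$ and $\delta_{\mu}^{P,\pi}$ used in the definitions of $D_{\ev}^{\pi',\pi}$ and $D_{\ev}^{P',P}$. The other decomposition would introduce expectations against $\pi'$ or $P'$ and would not reproduce the stated corollary.
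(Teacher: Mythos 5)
Your proposal is correct and follows essentially the same route as the paper: the paper also decomposes $\|{P'}^{\pi'}(\cdot|s)-P^{\pi}(\cdot|s)\|_1$ pointwise via the cross term $\pi(a|s)P'(s'|s,a)$, applies the triangle inequality and $\int_{\mathcal{S}}P'(s'|s,a)\,\mathrm{d}s'=1$, and then takes the expectation under $d_{\mu}^{P,\pi}$, identifying $d_{\mu}^{P,\pi}(s)\pi(a|s)=\delta_{\mu}^{P,\pi}(s,a)$ before chaining with Proposition~\ref{thr:distributionsBoundCoupled}. Your remark about why this particular add-and-subtract direction is needed matches the structure of the paper's argument.
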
 \label{thr:distributionsBound}
It is worth noting that when $P=P'$ the bound resembles Corollary 3.2 in~\cite{pirotta2013safe}, but it is tighter as:
\begin{equation*}
\ev_{s\sim d_{\mu}^{P,\pi}} \big\| \pi'(\cdot|s) - \pi(\cdot|s) \big\|_1 \le \sup_{s\in\mathcal{S}} \big\| \pi'(\cdot|s) - \pi(\cdot|s) \big\|_1,
\end{equation*}
in particular the bound of~\cite{pirotta2013safe} might yield a large bound value in case there exist states in which the policies are very
different even if those states are rarely visited according to $d_{\mu}^{P,\pi}$. In the context of policy learning, a lower bound employing the same dissimilarity index $D_{\ev}^{\pi', \pi}$ in the penalization term has been previously proposed in~\cite{achiam2017cpo}.

\subsection{Bound on the Performance Improvement}
In this section, we exploit the previous results to obtain a lower bound on the performance improvement
as an effect of the policy and model updates. We start introducing the \emph{coupled relative advantage function}:
\begin{equation*}
		A_{P,\pi}^{P',\pi'}(s) = \int_{\mathcal{S}} \int_{\mathcal{A}}  \pi'(a|s)P'(s'|s,a) \tilde{A}^{P,\pi}(s,a,s')\mathrm{d}s'\mathrm{d}a,
\end{equation*}
where $\tilde{A}^{P,\pi}(s,a,s') = U^{P,\pi}(s,a,s')  - V^{P,\pi}(s)$. $A_{P,\pi}^{P',\pi'}$ represents the one-step improvement attained by the new model-policy pair $(P',\pi')$ over
the current one $(P,\pi)$, \ie the local gain in performance yielded by selecting an action with $\pi'$ and the next state with $P'$. The corresponding expectation under the $\gamma$-discounted distribution is given by: $\mathds{A}_{P,\pi,\mu}^{P',\pi'} = \int_{\mathcal{S}} d_{\mu}^{P,\pi}(s) A_{P,\pi}^{P',\pi'}(s) \mathrm{d}s$.
Now, we have all the elements to express the performance improvement in terms of the relative advantage functions and the $\gamma$-discounted distributions.
\begin{restatable}[]{thr}{perfImprovement}
\label{thr:perfImprovement}
	The performance improvement of model-policy pair $(P',\pi')$ over $(P,\pi)$ is given by:
    \begin{equation*}
    	J^{P',\pi'}_\mu - J^{P,\pi}_\mu = \frac{1}{1-\gamma} \int_{\mathcal{S}} d_\mu^{P',\pi'}(s) A_{P,\pi}^{P',\pi'}(s) \mathrm{d}s.
    \end{equation*}
\end{restatable}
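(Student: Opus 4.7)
The plan is to prove this generalized performance difference lemma by a telescoping argument along trajectories generated under the new model--policy pair $(P',\pi')$. Writing $J^{P',\pi'}_\mu$ as the expectation of the discounted reward stream under trajectories $\tau=(s_0,a_0,s_1,a_1,\dots)$ with $s_0\sim\mu$, $a_t\sim\pi'(\cdot|s_t)$, $s_{t+1}\sim P'(\cdot|s_t,a_t)$, I would insert the identity $\sum_{t\ge 0}\gamma^t V^{P,\pi}(s_t)-\sum_{t\ge 0}\gamma^{t+1}V^{P,\pi}(s_{t+1})=V^{P,\pi}(s_0)$ (valid because $\gamma\in(0,1)$ and $V^{P,\pi}$ is bounded, so dominated convergence applies). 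Taking expectation of $V^{P,\pi}(s_0)$ over $s_0\sim\mu$ yields exactly $J^{P,\pi}_\mu$, so rearranging gives
\begin{equation*}
J^{P',\pi'}_\mu - J^{P,\pi}_\mu = \ev_{\tau\sim(P',\pi')}\left[\sum_{t=0}^{\infty}\gamma^t\bigl(R(s_t,a_t)+\gamma V^{P,\pi}(s_{t+1})-V^{P,\pi}(s_t)\bigr)\right].
\end{equation*}

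Next, I would recognize the bracket as $U^{P,\pi}(s_t,a_t,s_{t+1})-V^{P,\pi}(s_t)=\tilde{A}^{P,\pi}(s_t,a_t,s_{t+1})$ by the definition of the U-function in~\eqref{eq:U}. Conditioning on $s_t=s$ and integrating out $a_t\sim\pi'(\cdot|s)$ and $s_{t+1}\sim P'(\cdot|s,a_t)$ produces, by the very definition given just above the theorem, the coupled relative advantage $A_{P,\pi}^{P',\pi'}(s)$. Hence
\begin{equation*}
J^{P',\pi'}_\mu - J^{P,\pi}_\mu = \sum_{t=0}^{\infty}\gamma^t \int_{\mathcal{S}}\Pr\nolimits_{(P',\pi')}\!\bigl(s_t=s\,|\,s_0\sim\mu\bigr)\, A_{P,\pi}^{P',\pi'}(s)\,\mathrm{d}s.
\end{equation*}

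Finally, I would invoke the standard identity $d_\mu^{P',\pi'}(s)=(1-\gamma)\sum_{t\ge 0}\gamma^t\Pr_{(P',\pi')}(s_t=s\,|\,s_0\sim\mu)$, which follows directly from the recursion~\eqref{eq:d_mu} by unrolling, to replace the discounted sum and pull out the factor $\tfrac{1}{1-\gamma}$, giving the claimed formula.

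There is no genuine obstacle here beyond bookkeeping: the substantive content is the telescoping trick, and the only care required is in swapping the infinite sum with the expectation and conditioning on $s_t$, both justified by boundedness of rewards and $\gamma<1$. The main subtlety worth being explicit about is that the \emph{same} value functions $V^{P,\pi}$ and $U^{P,\pi}$ of the old pair are used for the telescoping, while the sampling distributions are those of the new pair $(P',\pi')$; this asymmetry is precisely what causes the new discounted distribution $d_\mu^{P',\pi'}$ (rather than $d_\mu^{P,\pi}$) to appear in the final expression.
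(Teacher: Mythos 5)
Your proof is correct. It takes a genuinely different, though equally standard, route from the paper's. The paper never introduces trajectories: it starts from the definition $J^{P',\pi'}_\mu = \frac{1}{1-\gamma}\int_{\mathcal{S}} d_\mu^{P',\pi'}(s)\int_{\mathcal{A}}\pi'(a|s)R(s,a)\,\mathrm{d}a\,\mathrm{d}s$, adds and subtracts $\int_{\mathcal{S}} d_\mu^{P',\pi'}(s)V^{P,\pi}(s)\,\mathrm{d}s$, applies the fixed-point recursion~\eqref{eq:d_mu} to \emph{one} copy of $d_\mu^{P',\pi'}$, and regroups the integrand into $\int_{\mathcal{A}}\pi'(a|s)\int_{\mathcal{S}}P'(s'|s,a)\bigl(R(s,a)+\gamma V^{P,\pi}(s')\bigr)\mathrm{d}s'\mathrm{d}a - V^{P,\pi}(s)$, which is exactly $A_{P,\pi}^{P',\pi'}(s)$, with the leftover term $(1-\gamma)\int_{\mathcal{S}}\mu(s)V^{P,\pi}(s)\mathrm{d}s=(1-\gamma)J_\mu^{P,\pi}$. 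Your telescoping argument along $(P',\pi')$-trajectories is the Kakade--Langford-style proof of the performance difference lemma; it reaches the same regrouping term by term in $t$ rather than in one shot via the occupancy recursion. The trade-off is that your route needs the additional (true, but unstated in the paper) identity $d_\mu^{P',\pi'}(s)=(1-\gamma)\sum_{t\ge0}\gamma^t\Pr_{(P',\pi')}(s_t=s)$, obtained by unrolling~\eqref{eq:d_mu}, together with the dominated-convergence justification for exchanging the infinite sum with the expectation --- both of which you correctly flag. The paper's route stays entirely at the level of the discounted distribution and uses the recursion only once, so it is shorter on bookkeeping; yours makes the probabilistic content (why the \emph{new} distribution $d_\mu^{P',\pi'}$ weights the \emph{old} advantage) more transparent. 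Your closing remark about the asymmetry between the value functions of $(P,\pi)$ and the sampling law of $(P',\pi')$ is exactly the right observation and matches the role this theorem plays in motivating the subsequent bounds.
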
 
This theorem is the natural extension of the result proposed by~\citet{kakade2002approximately}, but, unfortunately, it cannot be directly exploited in an algorithm
as the dependence of $d_\mu^{P',\pi'}$ on the candidate model-policy pair $(P',\pi')$ is highly nonlinear and difficult to treat. We aim to obtain, from this result, a lower bound on $J^{P',\pi'}_\mu - J^{P,\pi}_\mu$ that can be efficiently computed using information on the current pair $(P,\pi)$.

\begin{restatable}[Coupled Bound]{thr}{boundCoupled}
	The performance improvement of model-policy pair $(P',\pi')$ over $(P,\pi)$ can be lower bounded as:
    \begin{equation*}
		\underbrace{J^{{P}',\pi'}_{\mu} - J^{{P},\pi}_{\mu}}_{\substack{\text{performance}\\\text{improvement}}} \ge \underbrace{\frac{\mathds{A}_{{P},\pi,\mu}^{{P}',\pi'}}{1-\gamma}}_{\text{advantage}} - \underbrace{\frac{\gamma \Delta A^{{P'},\pi'}_{P,\pi}D_{\ev}^{{P'}^{\pi'},{P}^{\pi}}}{2(1-\gamma)^2}  }_{\text{dissimilarity penalization}},
	\end{equation*}
	where $\Delta A^{P',\pi'}_{P,\pi} = \sup_{s,s'\in\mathcal{S}} \big| A^{P',\pi'}_{P,\pi}(s') - A^{P',\pi'}_{P,\pi}(s) \big|$.
\end{restatable}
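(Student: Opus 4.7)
The plan is to start from the exact identity given by Theorem~\ref{thr:perfImprovement} and then replace the intractable term involving $d_\mu^{P',\pi'}$ by $d_\mu^{P,\pi}$ at the cost of a penalization term. Concretely, I would write
\begin{equation*}
J^{P',\pi'}_\mu - J^{P,\pi}_\mu = \frac{1}{1-\gamma}\int_\mathcal{S} d_\mu^{P,\pi}(s) A_{P,\pi}^{P',\pi'}(s)\,\mathrm{d}s + \frac{1}{1-\gamma}\int_\mathcal{S} \bigl(d_\mu^{P',\pi'}(s)-d_\mu^{P,\pi}(s)\bigr) A_{P,\pi}^{P',\pi'}(s)\,\mathrm{d}s.
\end{equation*}
The first term is exactly $\mathds{A}_{P,\pi,\mu}^{P',\pi'}/(1-\gamma)$, so it remains only to upper bound the absolute value of the second term.

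The key trick for the second term is that $d_\mu^{P',\pi'}$ and $d_\mu^{P,\pi}$ are both probability densities, so their difference integrates to zero. Hence for any constant $c\in\mathbb{R}$,
\begin{equation*}
\int_\mathcal{S}\bigl(d_\mu^{P',\pi'}(s)-d_\mu^{P,\pi}(s)\bigr)A_{P,\pi}^{P',\pi'}(s)\,\mathrm{d}s = \int_\mathcal{S}\bigl(d_\mu^{P',\pi'}(s)-d_\mu^{P,\pi}(s)\bigr)\bigl(A_{P,\pi}^{P',\pi'}(s)-c\bigr)\mathrm{d}s.
\end{equation*}
Applying H\"older's inequality and choosing the optimal shift $c = \tfrac{1}{2}\bigl(\sup_s A_{P,\pi}^{P',\pi'}(s) + \inf_s A_{P,\pi}^{P',\pi'}(s)\bigr)$ gives $\|A_{P,\pi}^{P',\pi'}-c\|_\infty \le \tfrac{1}{2}\Delta A_{P,\pi}^{P',\pi'}$, so the second term is bounded in absolute value by $\tfrac{1}{2}\Delta A_{P,\pi}^{P',\pi'}\cdot\|d_\mu^{P',\pi'}-d_\mu^{P,\pi}\|_1$.

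To finish, I would plug in Proposition~\ref{thr:distributionsBoundCoupled} to bound the $\ell^1$ distance of the discounted state distributions by $\frac{\gamma}{1-\gamma}D_{\ev}^{{P'}^{\pi'},P^\pi}$, combine it with the $\tfrac{1}{2}\Delta A_{P,\pi}^{P',\pi'}$ factor and the outer $1/(1-\gamma)$, and use the inequality $X\ge -|X|$ to turn the equality into the claimed lower bound. The only subtle step is the shift-by-$c$ trick, which is what produces the factor $\tfrac{1}{2}$ in the penalization term; without it one would only obtain the weaker constant $\sup_s |A_{P,\pi}^{P',\pi'}(s)|$, so this is where care is required. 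Everything else is routine once Proposition~\ref{thr:distributionsBoundCoupled} and Theorem~\ref{thr:perfImprovement} are in hand.
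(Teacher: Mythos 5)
Your proposal is correct and follows essentially the same route as the paper: the identical decomposition of $d_\mu^{P',\pi'}$ into $d_\mu^{P,\pi}$ plus a difference term, the $b\ge -|b|$ step, and Proposition~\ref{thr:distributionsBoundCoupled} to control the $\ell^1$ distance. The only difference is that where you derive the factor $\tfrac{1}{2}\Delta A_{P,\pi}^{P',\pi'}$ by hand via the shift-by-$c$ trick (valid, since the difference of the two state distributions integrates to zero), the paper obtains the same span-seminorm inequality by citing Corollary 2.4 of~\cite{haviv1984perturbation}.
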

The bound is composed of two terms, like in~\cite{kakade2002approximately, pirotta2013safe}: the first term, \emph{advantage}, represents how much gain in performance can be locally obtained by moving from $(P,\pi)$ to $(P',\pi')$, whereas the second term, \emph{dissimilarity penalization}, discourages
updates towards model-policy pairs that are too far away. 

The \emph{coupled bound}, however, is not suitable to be used in an algorithm as it does not separate the
contribution of the policy and that of the model. In practice, an agent cannot directly update the kernel function $P^{\pi}$ since the environment model can only partially be controlled, whereas, in many cases, we can assume
a full control on the policy. For this reason, it is convenient to derive a bound in which the policy and model effects are decoupled.

\begin{restatable}[Decoupled Bound]{thr}{boundUncoupled}
\label{thr:boundUncoupled}
	The performance improvement of model-policy pair $(P',\pi')$ over $(P,\pi)$ can be lower bounded as:
	\begingroup
\addtolength{\jot}{-2em}
	\begin{align*}
		\underbrace{J^{{P}',\pi'}_{\mu}- J^{{P},\pi}_{\mu}}_{\substack{\text{performance}\\ \text{improvement}}} & \ge B(P',\pi') = \\ 
		& = \underbrace{\frac{\mathds{A}_{P,\pi,\mu}^{P',\pi} +  \mathds{A}_{P,\pi,\mu}^{P,\pi'}}{1-\gamma}}_{\text{advantage}} - \underbrace{\frac{\gamma \Delta Q^{{P},\pi} D  }{2 (1-\gamma)^2} }_{\substack{\text{dissimilarity}\\ \text{penalization}}},
	\end{align*}
	\endgroup
	where $D$ is a dissimilarity term defined as:
	\begin{equation*}
		D = D^{\pi',\pi}_{\ev} \big( D^{\pi',\pi}_{\infty} +  D^{P',P}_{\infty} \big)  + D^{P',P}_{\ev}  \big( D^{\pi',\pi}_{\infty} +  \gamma D^{P',P}_{\infty} \big),
	\end{equation*}
	 $D^{\pi',\pi}_{\infty}= \sup_{s\in\mathcal{S}} \big\| \pi'(\cdot|s) - \pi(\cdot|s) \big\|_1$, $D^{P',P}_{\infty}= \sup_{s\in\mathcal{S},a\in\mathcal{A}} \big\| P'(\cdot|s,a) - P(\cdot|s,a) \big\|_1$ and $\Delta Q^{P,\pi} = \sup_{s,s'\in\mathcal{S}, a,a'\in \mathcal{A}} \big| Q^{P,\pi}(s',a') - Q^{P,\pi}(s,a) \big|$.
\end{restatable}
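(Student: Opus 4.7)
The plan is to start from the exact performance-difference identity in Theorem~\ref{thr:perfImprovement} and manipulate it so that the unknown distribution $d_\mu^{P',\pi'}$ is replaced by $d_\mu^{P,\pi}$ and the policy and model contributions are decoupled. The first step is an algebraic decomposition: since $\tilde{A}^{P,\pi}(s,a,s') = A^{P,\pi}(s,a) + A^{P,\pi}(s,a,s')$, one gets
\begin{equation*}
A_{P,\pi}^{P',\pi'}(s) = A_{P,\pi}^{P,\pi'}(s) + \int_{\mathcal{A}} \pi'(a|s)\,A_{P,\pi}^{P',\pi}(s,a)\,\mathrm{d}a,
\end{equation*}
so Theorem~\ref{thr:perfImprovement} splits the performance improvement into a \emph{policy-only} integral and a \emph{model-only} integral, both weighted by $d_\mu^{P',\pi'}$.

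I would then pay three swap costs via add-and-subtract: replace $d_\mu^{P',\pi'}$ by $d_\mu^{P,\pi}$ in the policy integral (error $E_1$); inside the model integral, replace $\pi'$ by $\pi$ (error $E_2$) and $d_\mu^{P',\pi'}$ by $d_\mu^{P,\pi}$ (error $E_3$). After the swaps the main terms are exactly $\mathds{A}_{P,\pi,\mu}^{P,\pi'}/(1-\gamma)$ and $\mathds{A}_{P,\pi,\mu}^{P',\pi}/(1-\gamma)$, as desired. To control the three errors, the key inequality is the centering estimate $|\int f\,\mathrm{d}(\nu'-\nu)|\le\tfrac{1}{2}(\sup f-\inf f)\|\nu'-\nu\|_1$ for probability measures $\nu,\nu'$. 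Applied to $Q^{P,\pi}(s,\cdot)$ (whose $\pi(\cdot|s)$-mean is $V^{P,\pi}(s)$) and to $U^{P,\pi}(s,a,\cdot)=R(s,a)+\gamma V^{P,\pi}(\cdot)$ (whose $P(\cdot|s,a)$-mean is $Q^{P,\pi}(s,a)$), together with $\sup V^{P,\pi}-\inf V^{P,\pi}\le\Delta Q^{P,\pi}$, it yields
\begin{gather*}
|A_{P,\pi}^{P,\pi'}(s)| \le \tfrac{1}{2}\Delta Q^{P,\pi}\|\pi'(\cdot|s)-\pi(\cdot|s)\|_1, \\
|A_{P,\pi}^{P',\pi}(s,a)| \le \tfrac{\gamma}{2}\Delta Q^{P,\pi}\|P'(\cdot|s,a)-P(\cdot|s,a)\|_1.
\end{gather*}
The extra $\gamma$ in the second bound, coming from $U=R+\gamma V$, is exactly what puts the $\gamma$ in front of $D^{P',P}_\infty$ inside $D$.

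Combining these bounds with Corollary~\ref{thr:distributionsBound} to control $\|d_\mu^{P',\pi'}-d_\mu^{P,\pi}\|_1$, and applying H\"older, the three errors contribute, up to the common prefactor $\gamma\Delta Q^{P,\pi}/[2(1-\gamma)^2]$, the quantities $(D^{\pi',\pi}_\ev+D^{P',P}_\ev)D^{\pi',\pi}_\infty$ from $E_1$, $(1-\gamma)D^{\pi',\pi}_\ev D^{P',P}_\infty$ from $E_2$, and $\gamma(D^{\pi',\pi}_\ev+D^{P',P}_\ev)D^{P',P}_\infty$ from $E_3$. Using $(1-\gamma)+\gamma=1$ to merge the two $D^{\pi',\pi}_\ev D^{P',P}_\infty$ cross-terms, the total rearranges into $D^{\pi',\pi}_\ev(D^{\pi',\pi}_\infty+D^{P',P}_\infty)+D^{P',P}_\ev(D^{\pi',\pi}_\infty+\gamma D^{P',P}_\infty)=D$. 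I expect the main obstacle to be this final bookkeeping: the three swap contributions are each bilinear in $\ev$- and $\infty$-norms, but only the specific powers of $\gamma$ and $(1-\gamma)$ coming from the decomposition conspire (through $(1-\gamma)+\gamma=1$) to produce the asymmetric $D$ in the statement.
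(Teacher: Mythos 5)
Your proposal is correct and takes essentially the same route as the paper's proof: it rests on the same ingredients, namely the performance-difference identity, the decomposition $A_{P,\pi}^{P',\pi'}(s)=A_{P,\pi}^{P,\pi'}(s)+\int_{\mathcal{A}}\pi'(a|s)\,A_{P,\pi}^{P',\pi}(s,a)\,\mathrm{d}a$ (stated as a lemma in the appendix), the corollary bounding $\big\|d_\mu^{P',\pi'}-d_\mu^{P,\pi}\big\|_1$ by $\tfrac{\gamma}{1-\gamma}\big(D^{\pi',\pi}_{\ev}+D^{P',P}_{\ev}\big)$, and centering estimates in $\Delta Q^{P,\pi}$ — the paper merely groups the error terms differently (one distribution swap applied to the still-coupled advantage, passed through the Coupled Bound, plus one decoupling error on $\mathds{A}_{P,\pi,\mu}^{P',\pi'}$, instead of your three swaps), and the resulting penalization terms coincide term by term after the same $(1-\gamma)+\gamma=1$ merge. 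The one detail to fix in a full write-up is the order of the swaps in the model integral: you must replace $d_\mu^{P',\pi'}$ by $d_\mu^{P,\pi}$ \emph{before} replacing $\pi'$ by $\pi$, so that the latter error is an expectation under $d_\mu^{P,\pi}$ and genuinely equals $D^{\pi',\pi}_{\ev}$ rather than an expectation under the unknown $d_\mu^{P',\pi'}$.
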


\section{Safe Policy Model Iteration}
\label{sec:spmi}
To deal with the learning problem in the Conf-MDP framework we could, in principle, learn the optimal policy by using a classical RL algorithm and adapt it to learn the optimal model, sequentially or in parallel. Alternatively, we could resort to general-purpose global optimization tools, like CEM~\cite{rubinstein1999cross} or genetic algorithms~\citep{holland1989genetic}, using as objective function the performance of the policy learned by
a standard RL algorithm. Nonetheless, they may not correspond to the preferable, nor the safest, choices in this context as there exists an inherent connection between policy and model we could not overlook during the learning process. Indeed, a policy learned by interacting with a sub-optimal model could result in poor performance paired with a different, optimal model. At the same time, a policy far from the optimum could mislead the search for the optimal model. 
The goal of this section is to present an approach, \emph{Safe Policy-Model Iteration} (SPMI), inspired by~\citep{pirotta2013safe}, capable of learning the policy and the model simultaneously,\footnote{In the context of Conf-MDPs we believe that knowing the model of the configurable part of the environment is a reasonable requirement.} possibly taking advantage of the inter-connection mentioned above. 

\subsection{The Algorithm}
\begin{table*}[ht]
	\caption{The four possible optimal $(\alpha,\beta)$ pairs, the optimal pair is the one yielding the maximum bound value (all values are clipped in $[0,1]$). The corresponding guaranteed performance improvements can be found in Appendix~\ref{apx:proofs}.}
	\label{tab:coefficients}
	\vskip 0.15in
	\begin{center}
	\begin{small}
	\begin{tabular}{cccc}
		\toprule
		$\beta^*=0$ & $\alpha^*=0$ & $\beta^*=1$ & $\alpha^*=1$ \\
		\midrule
		$\alpha^*_0 = \frac{(1-\gamma) \mathds{A}_{P,\pi,\mu}^{P,\overline{\pi}}}{\gamma \Delta Q^{P,\pi} D_{\infty}^{\overline{\pi},\pi}D_{\mathds{E}}^{\overline{\pi},\pi}}$ & $\beta^*_0=\frac{(1-\gamma) \mathds{A}_{P,\pi,\mu}^{\overline{P},\pi}}{\gamma^2 \Delta Q^{P,\pi} D_{\infty}^{\overline{P},P}D_{\mathds{E}}^{\overline{P},P}}$ & $\alpha^*_1 = \alpha^*_0 - \frac{1}{2} \bigg( \frac{D_{\mathds{E}}^{\overline{P},P}}{D_{\mathds{E}}^{\overline{\pi},\pi}} + \frac{D_{\infty}^{\overline{P},P}}{D_{\infty}^{\overline{\pi},\pi}} \bigg)$ & $\beta^*_1 = \beta^*_0 -  \frac{1}{2\gamma} \bigg( \frac{D_{\mathds{E}}^{\overline{\pi},\pi}}{D_{\mathds{E}}^{\overline{P},P}} + \frac{D_{\infty}^{\overline{\pi},\pi}}{D_{\infty}^{\overline{P},P}} \bigg) $\\
		\bottomrule
	\end{tabular}
	\end{small}
	\end{center}
	\vskip -0.1in
\end{table*}

Following the approach proposed in~\citep{pirotta2013safe}, we define the policy and model improvement update rules:
\begin{equation*}
	\pi' = \alpha \overline{\pi} + (1-\alpha) \pi, \;\;  P'= \beta \overline{P} + (1-\beta) P,
\end{equation*}
where $\alpha,\beta \in [0,1]$, $\overline{\pi}\in\Pi$ and $\overline{P}\in\mathcal{P}$ are the target policy and the target model respectively. 
Extending the rationale of~\citep{pirotta2013safe} to our context, we aim to determine the values of $\alpha$ and $\beta$ which jointly maximize the \emph{decoupled bound} (Theorem~\ref{thr:boundUncoupled}). In the following we will abbreviate $B(P',\pi')$ with $B(\alpha,\beta)$.
\begin{restatable}[]{thr}{optimalUpdate} 
For any $\overline{\pi}\in\Pi$ and $\overline{P}\in\mathcal{P}$, the decoupled bound is optimized for:
\begin{equation*}
	\alpha^*, \beta^* = \argmax_{\alpha, \beta} \{B(\alpha,\beta): (\alpha,\beta)\in\mathcal{V} \},
\end{equation*}
where $\mathcal{V} = \{(\alpha^*_{0},0),(\alpha^*_{1},1),(0, \beta^*_{0}),(1, \beta^*_{1}) \}$ and the values of $\alpha^*_{0}$, $\alpha^*_{1}$, $\beta^*_{0}$ and $\beta^*_{1}$ are reported in Table~\ref{tab:coefficients}.
\end{restatable}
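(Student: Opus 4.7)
The plan is to use the linearity of the mixture updates in $(\alpha,\beta)$ to rewrite $B(P',\pi')$ as an explicit bivariate quadratic $B(\alpha,\beta)$ on $[0,1]^{2}$, then to show that its Hessian is indefinite so that no interior point is a maximizer, and finally to optimize the four univariate concave restrictions obtained on the edges of the unit box. Since $\pi'(\cdot|s)-\pi(\cdot|s)=\alpha\bigl(\overline{\pi}(\cdot|s)-\pi(\cdot|s)\bigr)$ and $P'(\cdot|s,a)-P(\cdot|s,a)=\beta\bigl(\overline{P}(\cdot|s,a)-P(\cdot|s,a)\bigr)$, each dissimilarity rescales linearly, $D^{\pi',\pi}_{\bullet}=\alpha D^{\overline{\pi},\pi}_{\bullet}$ and $D^{P',P}_{\bullet}=\beta D^{\overline{P},P}_{\bullet}$ (for both $\bullet=\ev$ and $\bullet=\infty$), and the relative advantages scale as $\mathds{A}_{P,\pi,\mu}^{P,\pi'}=\alpha\mathds{A}_{P,\pi,\mu}^{P,\overline{\pi}}$, $\mathds{A}_{P,\pi,\mu}^{P',\pi}=\beta\mathds{A}_{P,\pi,\mu}^{\overline{P},\pi}$. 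Plugging into Theorem~\ref{thr:boundUncoupled} I get
\begin{equation*}
(1-\gamma)B(\alpha,\beta)=\alpha a+\beta b-\tfrac{\gamma\Delta Q^{P,\pi}}{2(1-\gamma)}\bigl(\alpha^{2}e_{\pi}i_{\pi}+\alpha\beta(e_{\pi}i_{P}+e_{P}i_{\pi})+\gamma\beta^{2}e_{P}i_{P}\bigr),
\end{equation*}
with shorthand $a=\mathds{A}_{P,\pi,\mu}^{P,\overline{\pi}}$, $b=\mathds{A}_{P,\pi,\mu}^{\overline{P},\pi}$, $e_{\pi}=D_{\ev}^{\overline{\pi},\pi}$, $i_{\pi}=D_{\infty}^{\overline{\pi},\pi}$, $e_{P}=D_{\ev}^{\overline{P},P}$, $i_{P}=D_{\infty}^{\overline{P},P}$.

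Next I would argue that the maximum of $B$ on $[0,1]^{2}$ is attained on $\partial[0,1]^{2}$. The Hessian of the quadratic part has, up to a positive multiplicative constant, determinant $4\gamma e_{\pi}i_{\pi}e_{P}i_{P}-(e_{\pi}i_{P}+e_{P}i_{\pi})^{2}$. Setting $u=e_{\pi}i_{P}$ and $v=e_{P}i_{\pi}$, this equals $4\gamma uv-(u+v)^{2}\le 4(\gamma-1)uv\le 0$ by AM-GM, and is strictly negative whenever $\gamma<1$ and $uv>0$. A symmetric $2\times 2$ matrix with negative determinant has one positive and one negative eigenvalue, so the Hessian of $B$ is indefinite; any interior critical point is therefore a saddle, and the maximum over the compact box $[0,1]^{2}$ must lie on the boundary.

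Then I would optimize $B$ on each of the four edges. The restriction $B(\alpha,0)$ is a concave quadratic in $\alpha$ (leading coefficient $-\tfrac{\gamma\Delta Q^{P,\pi}}{2(1-\gamma)^{2}}e_{\pi}i_{\pi}\le 0$), whose unconstrained maximizer, obtained by setting $\partial B/\partial\alpha=0$, is exactly $\alpha^{*}_{0}$ of Table~\ref{tab:coefficients}; clipping to $[0,1]$ yields the candidate $(\alpha^{*}_{0},0)$. By symmetry, $B(0,\beta)$ is concave in $\beta$ with clipped maximizer $\beta^{*}_{0}$. On $\beta=1$ the first-order condition acquires an extra linear contribution $-(e_{\pi}i_{P}+e_{P}i_{\pi})$ from the cross term, giving the corrected maximizer $\alpha^{*}_{1}=\alpha^{*}_{0}-\tfrac{1}{2}\bigl(i_{P}/i_{\pi}+e_{P}/e_{\pi}\bigr)$; similarly on $\alpha=1$ one obtains $\beta^{*}_{1}=\beta^{*}_{0}-\tfrac{1}{2\gamma}\bigl(i_{\pi}/i_{P}+e_{\pi}/e_{P}\bigr)$, matching the table. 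Since the global maximum lies on one of the four edges and each edge-maximizer is in $\mathcal{V}$, the theorem follows.

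The main obstacle is the algebraic bookkeeping that collapses the nested dissimilarity $D$ of Theorem~\ref{thr:boundUncoupled} into the explicit polynomial above, together with the AM-GM step that proves the Hessian indefinite. A minor caveat is the degenerate case in which some denominator in Table~\ref{tab:coefficients} vanishes: the corresponding edge-quadratic then becomes affine in the free variable, its maximizer is a corner of $[0,1]^{2}$, and the clipping convention already places that corner inside $\mathcal{V}$, so no separate treatment is needed.
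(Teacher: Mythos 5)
Your proposal is correct and follows essentially the same route as the paper's proof: write $B$ as an explicit quadratic in $(\alpha,\beta)$ via the linear scaling of the advantages and dissimilarities, show the Hessian is indefinite (you via AM--GM, the paper by completing the square to $-(u-v)^2$) so the maximum lies on the boundary of $[0,1]^2$, and then solve the four concave edge restrictions to obtain the clipped maximizers of Table~\ref{tab:coefficients}. Your explicit handling of the degenerate vanishing-denominator case matches the paper's separate treatment of $\overline{P}=P$ and $\overline{\pi}=\pi$.
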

The theorem expresses the fact that the optimal $(\alpha,\beta)$ pair lies on the boundary of $[0,1]\times [0,1]$, \ie either one between policy and model is moved and the other is kept unchanged or one is moved and the other is set to target.

Algorithm~\ref{SPMI} reports the basic structure of SPMI. The algorithm stops when both the expected relative advantages fall below a threshold $\epsilon$. The procedures \emph{PolicyChooser} and \emph{ModelChooser} are designated for selecting the target policy and model (see Section~\ref{sec:targetChoice}).

\begin{algorithm}[t]
\caption{Safe Policy Model Iteration} \label{SPMI}
\begin{algorithmic}
\small
\STATE initialize $\pi_0,P_0$.
\FOR{$i = 0,1,2,... $ until $\epsilon$-convergence}
\STATE  $\overline{\pi}_i = \textit{PolicyChooser}(\pi_i)$
\STATE  $\overline{P}_i = \textit{ModelChooser}(P_i)$ 
\STATE $\mathcal{V}_i = \{(\alpha^*_{0,i},0),(\alpha^*_{1,i},1),(0, \beta^*_{0,i}),(1, \beta^*_{1,i}) \}$
\STATE $\alpha^*_i, \beta^*_i = \argmax_{\alpha, \beta} \{B(\alpha,\beta): (\alpha,\beta)\in\mathcal{V}_i \}$
\STATE $\pi_{i+1} = \alpha^*_i \overline{\pi}_i + (1-\alpha^*_i) \pi_i$
\STATE $P_{i+1}= \beta^*_i \overline{P}_i + (1-\beta^*_i) P_i$ 
\ENDFOR
\end{algorithmic}
\end{algorithm}

\subsection{Policy and Model Spaces}
The selection of the target policy and model is a rather crucial component of the algorithm since the quality of the updates largely depends on it.
To effectively adopt a target selection strategy we have to know which are the degrees of freedom on the policy and model spaces.
Focusing on the model space first, it is easy to discriminate two macro-classes in real-world scenarios. In some cases, there are almost no constraints on the direction in which to update the model. In other cases, only a limited model portion, typically a set of parameters inducing transition probabilities, can be accessed.
While we can naturally design the first scenario as an \emph{unconstrained} model space, to represent the second scenario we limit the model space to the convex hull $\mathrm{co}(\bm{P})$, where $\bm{P}$ is a set of extreme (or vertex) models. Since only the convex combination coefficients can be controlled, we refer to the latter as a \emph{parametric} model space.
It is noteworthy that we can symmetrically extend the dichotomy to the policy space, although the need to limit the agent on the direction of policy updates is less significant in our perspective.

\subsection{Target Choice}
\label{sec:targetChoice}
To deal with unconstrained spaces, it is quite natural to adopt the target selection strategy presented in~\citep{pirotta2013safe}, by introducing the concept of greedy model as $P^+(\cdot|s,a) \in \argmax_{s'\in\mathcal{S}} U^{P,\pi}(s,a,s')$, \ie the model that maximizes the relative advantage in each state-action pair. At each step, the greedy policy and model \wrt the $Q^{P,\pi}$ and $U^{P,\pi}$ are selected as targets.
When we are not free to choose the greedy model, like in the parametric setting, we select 
the vertex model that maximizes the expected relative advantage (\emph{greedy choice}).
The greedy strategy is based on local information and is not guaranteed to provide a model-policy pair maximizing the bound. However, testing all the model-policy pairs is highly inefficient in the presence of large model-policy spaces. A reasonable compromise is to select, as a target, the model that yields the maximum bound value between the greedy target $\overline{P}_i \in \argmax_{P \in \bm{P}} \mathds{A}_{P_i,\pi,\mu}^{P,\pi}$ and the previous target $\overline{P}_{i-1}$ (the same procedure can be employed for the policy). This procedure, named \emph{persistent choice}, effectively avoids the oscillating behavior, common with the greedy choice.

\section{Theoretical Analysis}
\label{sec:theo}
In this section, we outline some relevant theoretical results related to SPMI. We start by analyzing the scenario in which the model/policy space is parametric, \ie is limited to the convex hull of a set of vertex models/policies, and then we provide some rationales for the target choices adopted. In most of the section, we restrict our attention to the transition model, as for the policy all results apply symmetrically. 

\subsection{Parametric Model Space}
We consider the setting in which the transition model space is limited to the convex hull of a finite set of vertex models (\eg a set of deterministic models): $\mathcal{P} = \mathrm{co}(\bm{P})$, where $\bm{P} = \{P_1, P_2, ..., P_M\}$.
Each model in $\mathrm{co}(\bm{P})$ is defined by means of a  
coefficient vector $\mathbr{\omega}$ belonging to the $M$-dimensional fundamental simplex: $P_{\mathbr{\omega}} = \sum_{i=1}^{M} \omega_i P_i$. For the sake of brevity, we omit the dependency on $\pi$ of all the quantities. Moreover, we define the optimal transition model $P_{\mathbr{\omega}^*}$ as the model that maximizes the expected return, \ie $J^{P_{\mathbr{\omega}^*}}_\mu \ge J^{P_{\mathbr{\omega}}}_\mu$ for all $P_{\mathbr{\omega}} \in \mathrm{co}(\bm{P})$. We start by stating some results on the expected relative advantage functions.
\begin{restatable}[]{lemma}{oppositeSignAdv}
	For any transition model $P_{\mathbr{\omega}} \in \mathrm{co}(\bm{P})$ it holds that: $\sum_{i=1}^M \omega_i A_{P_{\mathbr{\omega}}}^{P_i}(s,a) = 0$ for all $s\in\mathcal{S}$ and $a\in\mathcal{A}$.
\end{restatable}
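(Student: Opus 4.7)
The plan is to unwind the definitions and exploit the linearity of the model advantage in the transition kernel. By definition,
\begin{equation*}
A_{P_{\boldsymbol{\omega}}}^{P_i}(s,a) = \int_{\mathcal{S}} P_i(s'|s,a)\, A^{P_{\boldsymbol{\omega}}}(s,a,s')\, \mathrm{d}s',
\end{equation*}
where $A^{P_{\boldsymbol{\omega}}}(s,a,s') = U^{P_{\boldsymbol{\omega}}}(s,a,s') - Q^{P_{\boldsymbol{\omega}}}(s,a)$ does not depend on $i$. So I would first pull the sum inside the integral:
\begin{equation*}
\sum_{i=1}^M \omega_i A_{P_{\boldsymbol{\omega}}}^{P_i}(s,a) = \int_{\mathcal{S}} \Bigl(\sum_{i=1}^M \omega_i P_i(s'|s,a)\Bigr) A^{P_{\boldsymbol{\omega}}}(s,a,s')\, \mathrm{d}s' = \int_{\mathcal{S}} P_{\boldsymbol{\omega}}(s'|s,a)\, A^{P_{\boldsymbol{\omega}}}(s,a,s')\, \mathrm{d}s',
\end{equation*}
using $P_{\boldsymbol{\omega}} = \sum_i \omega_i P_i$.

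Next I would apply the identity $Q^{P,\pi}(s,a) = \int_{\mathcal{S}} P(s'|s,a)\, U^{P,\pi}(s,a,s')\,\mathrm{d}s'$ recalled just after \eqref{eq:U}, specialised to $P = P_{\boldsymbol{\omega}}$, to conclude that the integral of $P_{\boldsymbol{\omega}}(s'|s,a) U^{P_{\boldsymbol{\omega}}}(s,a,s')$ over $s'$ equals $Q^{P_{\boldsymbol{\omega}}}(s,a)$. Since $\int P_{\boldsymbol{\omega}}(s'|s,a)\mathrm{d}s' = 1$, the constant term contributes $-Q^{P_{\boldsymbol{\omega}}}(s,a)$, and the two terms cancel, giving $0$.

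There is really no hard step here; the statement is essentially a one-line consequence of the fact that $A^{P_{\boldsymbol{\omega}}}(s,a,\cdot)$ has zero mean under $P_{\boldsymbol{\omega}}(\cdot|s,a)$ combined with the convex decomposition $P_{\boldsymbol{\omega}} = \sum_i \omega_i P_i$. The only thing to be careful about is making the dependence on $\pi$ (which is suppressed in the notation of this subsection) explicit enough that swapping the finite sum with the integral is unambiguous, and noting that the identity holds pointwise in $(s,a)$ rather than only in expectation.
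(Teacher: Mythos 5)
Your proof is correct and follows essentially the same route as the paper's: both exploit the linearity of the relative advantage in the convex combination $P_{\boldsymbol{\omega}} = \sum_i \omega_i P_i$ and the identity $Q^{P_{\boldsymbol{\omega}}}(s,a) = \int_{\mathcal{S}} P_{\boldsymbol{\omega}}(s'|s,a) U^{P_{\boldsymbol{\omega}}}(s,a,s')\,\mathrm{d}s'$; the paper merely performs the cancellation by writing $A_{P_{\boldsymbol{\omega}}}^{P_i}(s,a) = \int_{\mathcal{S}} (P_i - P_{\boldsymbol{\omega}})(s'|s,a)\, U^{P_{\boldsymbol{\omega}}}(s,a,s')\,\mathrm{d}s'$ before summing, whereas you sum first and then observe that $A^{P_{\boldsymbol{\omega}}}(s,a,\cdot)$ has zero mean under $P_{\boldsymbol{\omega}}(\cdot|s,a)$. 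The two orderings are algebraically identical, so nothing further is needed.
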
\label{thr:oppositeSignAdv}
As a consequence, we observe that also the expected relative advantage functions $\mathds{A}_{P_{\mathbr{\omega}}, \mu}^{P_i}$ sums up to zero when weighted by
the coefficients $\mathbr{\omega}$. An analogous statement holds when the policy is defined as a convex combination of vertex policies. The following theorem establishes an essential property of the optimal transition model.
\begin{restatable}[]{thr}{optimalAdv}
\label{thr:optimalAdv}
	For any transition model $P_{\mathbr{\omega}} \in \mathrm{co}(\bm{P})$ it holds that $\mathds{A}_{P_{{\mathbr{\omega}}^*}, \mu}^{P_{\mathbr{\omega}}} \le 0$. Moreover, for all $P_{\mathbr{\omega}} \in \mathrm{co}\big( \{P_i \in \bm{P} : \omega^*_i > 0 \} \big)$, it holds that $\mathds{A}_{P_{{\mathbr{\omega}}^*}, \mu}^{P_{\mathbr{\omega}}} = 0$.
\end{restatable}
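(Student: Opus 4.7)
The plan is to leverage two structural facts that follow directly from the definitions. First, the relative advantage is \emph{linear} in the target model: since $A_{P,\pi}^{P',\pi}(s,a) = \int_{\mathcal{S}} P'(s'|s,a) A^{P,\pi}(s,a,s')\mathrm{d}s'$, writing $P' = \sum_i \omega_i P_i$ and integrating against $\delta_\mu^{P,\pi}$ gives $\mathds{A}_{P,\mu}^{P'} = \sum_i \omega_i \mathds{A}_{P,\mu}^{P_i}$. Second, the self-advantage vanishes, $\mathds{A}_{P_{\mathbr{\omega}^*},\mu}^{P_{\mathbr{\omega}^*}} = 0$, because $A^{P,\pi}(s,a,s')$ has zero mean under $P(\cdot|s,a)$. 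These two facts together mean that on the segment $P'(\alpha) = (1-\alpha) P_{\mathbr{\omega}^*} + \alpha P_{\mathbr{\omega}}$ the expected relative advantage is exactly $\alpha\, \mathds{A}_{P_{\mathbr{\omega}^*},\mu}^{P_{\mathbr{\omega}}}$, i.e.\ linear in $\alpha$ and vanishing at $\alpha=0$.

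For the non-positivity claim I would argue by contradiction. Suppose $\mathds{A}_{P_{\mathbr{\omega}^*},\mu}^{P_{\mathbr{\omega}}} > 0$ for some $P_{\mathbr{\omega}} \in \mathrm{co}(\bm{P})$. Apply Theorem~\ref{thr:boundUncoupled} with the fixed policy (so $D^{\pi',\pi}_{\ev} = D^{\pi',\pi}_{\infty} = 0$ and the dissimilarity term collapses to $D = \gamma D^{P'(\alpha), P_{\mathbr{\omega}^*}}_{\ev} D^{P'(\alpha), P_{\mathbr{\omega}^*}}_{\infty}$) to the candidate $P'(\alpha)$ just described:
\[ J^{P'(\alpha)}_\mu - J^{P_{\mathbr{\omega}^*}}_\mu \ge \frac{\alpha\, \mathds{A}_{P_{\mathbr{\omega}^*},\mu}^{P_{\mathbr{\omega}}}}{1-\gamma} - \frac{\gamma^2 \Delta Q^{P_{\mathbr{\omega}^*}} D^{P'(\alpha), P_{\mathbr{\omega}^*}}_{\ev} D^{P'(\alpha), P_{\mathbr{\omega}^*}}_{\infty}}{2(1-\gamma)^2}. \]
Both dissimilarity factors are $O(\alpha)$, so the penalty is $O(\alpha^2)$ whereas the advantage is $O(\alpha)$, giving a strictly positive lower bound for all sufficiently small $\alpha>0$. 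Since $P'(\alpha)\in\mathrm{co}(\bm{P})$ by convexity, this contradicts the optimality of $P_{\mathbr{\omega}^*}$.

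For the equality claim, I would combine the non-positivity just proved with Lemma~\ref{thr:oppositeSignAdv} applied at the optimum: integrating $\sum_{i} \omega_i^* A^{P_i}_{P_{\mathbr{\omega}^*}}(s,a)=0$ against $\delta_\mu^{P_{\mathbr{\omega}^*}}$ yields $\sum_{i=1}^M \omega_i^* \mathds{A}_{P_{\mathbr{\omega}^*},\mu}^{P_i} = 0$. Each summand is non-positive and each weight $\omega_i^*$ is non-negative, so every index $i$ with $\omega_i^*>0$ must satisfy $\mathds{A}_{P_{\mathbr{\omega}^*},\mu}^{P_i}=0$. Invoking linearity in the target once more, any $P_{\mathbr{\omega}} \in \mathrm{co}(\{P_i : \omega_i^* > 0\})$ inherits $\mathds{A}_{P_{\mathbr{\omega}^*},\mu}^{P_{\mathbr{\omega}}} = \sum_{i:\omega_i^*>0} \omega_i \mathds{A}_{P_{\mathbr{\omega}^*},\mu}^{P_i} = 0$.

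The main obstacle is conceptual rather than computational: the exact performance identity Theorem~\ref{thr:perfImprovement} integrates the relative advantage against $d_\mu^{P'}$, not against $d_\mu^{P_{\mathbr{\omega}^*}}$, so one cannot read off the sign of $\mathds{A}_{P_{\mathbr{\omega}^*},\mu}^{P_{\mathbr{\omega}}}$ from $J^{P_{\mathbr{\omega}^*}}_\mu \ge J^{P_{\mathbr{\omega}}}_\mu$. The infinitesimal perturbation along the segment is what makes the leading-order $O(\alpha)$ advantage dominate the $O(\alpha^2)$ penalty and forces the non-positivity through the safe lower bound.
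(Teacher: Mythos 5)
Your proposal is correct and follows essentially the same route as the paper: non-positivity via the safe performance-improvement bound applied to a small step toward the candidate model (the paper quotes the closed-form positive bound value $B(0,\beta_0^*)$ at a vertex and then extends by linearity, while you make the equivalent $O(\alpha)$-advantage versus $O(\alpha^2)$-penalty argument explicit for an arbitrary $P_{\mathbr{\omega}}$), and the equality claim via Lemma~\ref{thr:oppositeSignAdv} exactly as in the paper. The differences are cosmetic, not substantive.
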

The theorem provides a necessary condition for a transition model to be optimal, \ie all the
expected relative advantages must be non-positive and, moreover, those of the vertex transition 
models associated with non-zero coefficients must be zero.
It is worth noting that the expected relative advantage $\mathds{A}_{P_{\mathbr{\omega}},\mu}^{P_{\mathbr{\omega}'}}$ represents only a \emph{local} measure of the performance improvement, as it is defined by taking the expectation of the relative advantage $A_{P_{\mathbr{\omega}}}^{P_{\mathbr{\omega}'}}(s,a)$ \wrt the current $\delta_{\mu}^{P_{\mathbr{\omega}}}$. On the other hand, the actual performance
improvement $J^{P_{\mathbr{\omega}'}}_\mu - J^{P_{\mathbr{\omega}}}_\mu$ is a \emph{global} measure, being obtained by averaging the relative advantage $A_{P_{\mathbr{\omega}}}^{P_{\mathbr{\omega}'}}(s,a)$ over the new $\delta_{\mu}^{P_{\mathbr{\omega}'}}$ (Theorem~\ref{thr:perfImprovement}). This is intimately related to the \emph{measure mismatch} claim provided in~\cite{kakade2003sample} as
the model expected relative advantage $\mathds{A}_{P_{\mathbr{\omega}}, \mu}^{P_{\mathbr{\omega}^*}}$ might be null even if $J^{P_{\mathbr{\omega}^*}}_\mu > J^{P_{\mathbr{\omega}}}_\mu$,
making SPMI, like CPI and SPI, stop into locally optimal models. Furthermore, it is intuitive to get convinced that asking for a guaranteed performance improvement may prevent from finding the global optimum, as this may require visiting a lower performance region (see Appendix~\ref{apx:C1} for an example).
Nevertheless, we can provide a bound
for the performance gap between a locally optimal model and the global optimal model.
\begin{restatable}[]{prop}{performanceGap}
	Let $P_{\overbar{\mathbr{\omega}}}$ be a transition model having non-positive relative advantage functions
	\wrt the target models. Then:
	\begin{equation*}
		J^{P_{{\mathbr{\omega}}^*}}_\mu - J^{P_{\overbar{\mathbr{\omega}}}}_\mu \le \frac{1}{1-\gamma}  \sup_{s\in\mathcal{S},a\in\mathcal{A}} \max_{i=1,2,...,M} A_{P_{\overbar{\mathbr{\omega}}}}^{P_i}(s,a).
	\end{equation*}
\end{restatable}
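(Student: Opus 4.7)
The proof should be a short, one-shot application of the performance difference identity (Theorem~\ref{thr:perfImprovement}) combined with the convex structure of the model space $\mathrm{co}(\bm{P})$; no induction or fixed-point argument seems required. First, I would invoke Theorem~\ref{thr:perfImprovement} with the fixed policy $\pi$ (which is omitted throughout Section~\ref{sec:theo}) and with the model transition $P_{\overbar{\mathbr{\omega}}}\to P_{\mathbr{\omega}^*}$, rewriting the formula in the equivalent $(s,a)$-form under $\delta_\mu^{P_{\mathbr{\omega}^*}}$ (this is the same manipulation used right after the definition of the coupled relative advantage when $\pi'=\pi$), so that
\begin{equation*}
J^{P_{\mathbr{\omega}^*}}_\mu - J^{P_{\overbar{\mathbr{\omega}}}}_\mu \;=\; \frac{1}{1-\gamma}\int_{\mathcal{S}}\int_{\mathcal{A}} \delta_\mu^{P_{\mathbr{\omega}^*}}(s,a)\, A_{P_{\overbar{\mathbr{\omega}}}}^{P_{\mathbr{\omega}^*}}(s,a)\,\mathrm{d}s\,\mathrm{d}a.
\end{equation*}

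Next I would exploit linearity of the model relative advantage in its upper-script model: since $A_{P}^{P'}(s,a)=\int P'(s'|s,a) A^P(s,a,s')\,\mathrm{d}s'$ is an integral against $P'$, and $P_{\mathbr{\omega}^*}=\sum_{i=1}^{M}\omega_i^* P_i$, we have $A_{P_{\overbar{\mathbr{\omega}}}}^{P_{\mathbr{\omega}^*}}(s,a)=\sum_i \omega_i^*\, A_{P_{\overbar{\mathbr{\omega}}}}^{P_i}(s,a)$. Because $\mathbr{\omega}^{*}$ lies in the $M$-simplex, this convex combination is pointwise dominated by $\max_i A_{P_{\overbar{\mathbr{\omega}}}}^{P_i}(s,a)$. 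Bounding the integral by the supremum over $(s,a)$ (using that $\delta_\mu^{P_{\mathbr{\omega}^*}}$ is a probability measure) then yields exactly the claimed inequality in one line.

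The main subtlety, and the only thing that could trip the presentation up, is reconciling the hypothesis with the bound. Read pointwise, assuming $A_{P_{\overbar{\mathbr{\omega}}}}^{P_i}(s,a)\le 0$ everywhere would collapse the right-hand side to something non-positive while the left-hand side is non-negative, forcing equality trivially. The intended reading, consistent with Theorem~\ref{thr:optimalAdv} and the \emph{measure mismatch} discussion just above the proposition, is that the \emph{expected} relative advantages $\mathds{A}^{P_i}_{P_{\overbar{\mathbr{\omega}}},\mu}$ under $\delta_\mu^{P_{\overbar{\mathbr{\omega}}}}$ are non-positive, i.e.\ $P_{\overbar{\mathbr{\omega}}}$ is a local optimum at which SPMI halts. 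Under that reading the hypothesis is not actually consumed in the derivation: the inequality holds for any $P_{\overbar{\mathbr{\omega}}}$, and the hypothesis is only present to make the bound meaningful, since it explains why the right-hand side---a pointwise supremum over $(s,a)$---can remain strictly positive even though all integrated advantages vanish, quantifying the discrepancy between $\delta_\mu^{P_{\overbar{\mathbr{\omega}}}}$ and $\delta_\mu^{P_{\mathbr{\omega}^*}}$.
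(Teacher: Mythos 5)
Your proof is correct and follows essentially the same route as the paper's: apply Theorem~\ref{thr:perfImprovement} with the policy held fixed to get $J^{P_{\mathbr{\omega}^*}}_\mu - J^{P_{\overbar{\mathbr{\omega}}}}_\mu = \frac{1}{1-\gamma}\int_{\mathcal{S}}\int_{\mathcal{A}}\delta_\mu^{P_{\mathbr{\omega}^*}}(s,a)A_{P_{\overbar{\mathbr{\omega}}}}^{P_{\mathbr{\omega}^*}}(s,a)\,\mathrm{d}a\,\mathrm{d}s$, decompose $A_{P_{\overbar{\mathbr{\omega}}}}^{P_{\mathbr{\omega}^*}}(s,a)=\sum_{i}\omega_i^* A_{P_{\overbar{\mathbr{\omega}}}}^{P_i}(s,a)\le\max_{i} A_{P_{\overbar{\mathbr{\omega}}}}^{P_i}(s,a)$ by linearity over the simplex, and bound the integral against the probability measure by the supremum over $(s,a)$. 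Your closing remark is also accurate: the paper's own proof never consumes the non-positivity hypothesis either, which indeed only makes sense in the expected (local-optimum) reading consistent with Theorem~\ref{thr:optimalAdv}.
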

From this result we notice that a sufficient condition for a model to be optimal is that $A_{P_{\overbar{\mathbr{\omega}}}}^{P_i}(s,a) = 0$ for all state-action pairs. This is 
a stronger requirement than the maximization of $J^{P_{\overbar{\mathbr{\omega}}}}_\mu$ as it asks
the model to be optimal in \emph{every} state-action pair independently of the initial state distribution $\mu$;\footnote{This is the same difference between a policy that maximizes the value function $V^{\pi}$ in all states and a policy that maximizes the expected return $J^{\pi}$.}  such a model might not exist when considering
a model space $\mathcal{P}$ that does not include all the possible transition models (see Appendix~\ref{apx:C2} for an example).

\subsection{Analogy with Policy Gradient Methods} \label{sec:gradient}
In this section, we elucidate the relationship
between the relative advantage function and the gradient of the expected return.
Let us start by stating the expression of the gradient of the expected return \wrt a 
parametric transition model. This is the equivalent of the Policy Gradient Theorem~\cite{sutton2000policy} for the transition model.
\begin{restatable}[$P$-Gradient Theorem]{thr}{PGradientTheorem}
\label{thr:PGradientTheorem}
	Let $P_{\mathbr{\omega}}$ be a class of parametric stochastic transition models 
	differentiable in $\mathbr{\omega}$, the gradient of the expected return \wrt $\mathbr{\omega}$
	is given by:
	\begin{align*}
		\nabla_{\mathbr{\omega}} J^{P_{\mathbr{\omega}}}_\mu = \int_{\mathcal{S}} \int_{\mathcal{A}} & \delta_{\mu}^{P_{\mathbr{\omega}}} (s,a) \int_{\mathcal{S}} \nabla_{\mathbr{\omega}} P_{\mathbr{\omega}}(s'|s,a) \times \\
		& \times U^{P_{\mathbr{\omega}}}(s,a,s') \mathrm{d}s' \mathrm{d}a \mathrm{d}s.
	\end{align*}
\end{restatable}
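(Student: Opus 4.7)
The approach mirrors the classical Policy Gradient Theorem of Sutton et al., only with the parameter $\mathbr{\omega}$ entering through the transition model rather than through the policy. I would differentiate the Bellman equation for $V^{P_{\mathbr{\omega}}}$, obtain a linear fixed-point equation in $\nabla_{\mathbr{\omega}} V^{P_{\mathbr{\omega}}}$, unroll it via a Neumann series in $\gamma P_{\mathbr{\omega}}^{\pi}$ to expose the $\gamma$-discounted state visitation $d_\mu^{P_{\mathbr{\omega}},\pi}$, and finally convert the remaining $V^{P_{\mathbr{\omega}}}$ factor into $U^{P_{\mathbr{\omega}}}$ using that $P_{\mathbr{\omega}}(\cdot|s,a)$ is a probability measure for every $\mathbr{\omega}$.

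Concretely, I would start from the identity $J^{P_{\mathbr{\omega}}}_\mu = \int_{\mathcal{S}} \mu(s) V^{P_{\mathbr{\omega}}}(s)\,\mathrm{d}s$, which is equivalent to (1) through the Bellman unrolling encoded by (2). Differentiating the definitions $V^{P_{\mathbr{\omega}}}(s)=\int_{\mathcal{A}} \pi(a|s) Q^{P_{\mathbr{\omega}}}(s,a)\,\mathrm{d}a$ and $Q^{P_{\mathbr{\omega}}}(s,a)=R(s,a)+\gamma \int_{\mathcal{S}} P_{\mathbr{\omega}}(s'|s,a) V^{P_{\mathbr{\omega}}}(s')\,\mathrm{d}s'$, and observing that $R$ and $\pi$ are independent of $\mathbr{\omega}$, yields
\begin{equation*}
\nabla_{\mathbr{\omega}} V^{P_{\mathbr{\omega}}}(s) \;=\; h(s) + \gamma \int_{\mathcal{S}} P_{\mathbr{\omega}}^{\pi}(s'|s)\, \nabla_{\mathbr{\omega}} V^{P_{\mathbr{\omega}}}(s')\,\mathrm{d}s',
\end{equation*}
where $h(s) = \gamma \int_{\mathcal{A}} \pi(a|s) \int_{\mathcal{S}} \nabla_{\mathbr{\omega}} P_{\mathbr{\omega}}(s'|s,a)\, V^{P_{\mathbr{\omega}}}(s')\,\mathrm{d}s'\,\mathrm{d}a$.

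Iterating this fixed point produces a geometric expansion in $\gamma P_{\mathbr{\omega}}^{\pi}$, and integrating against $\mu$ then identifies the accumulated weight with (a rescaling of) $d_\mu^{P_{\mathbr{\omega}},\pi}(s')$ via its recursive characterization in (2). After this step the gradient becomes an integral against $\delta_\mu^{P_{\mathbr{\omega}},\pi}$ of $\gamma \int_{\mathcal{S}} \nabla_{\mathbr{\omega}} P_{\mathbr{\omega}}(s'|s,a) V^{P_{\mathbr{\omega}}}(s')\,\mathrm{d}s'$. The final rewriting uses $\gamma V^{P_{\mathbr{\omega}}}(s') = U^{P_{\mathbr{\omega}}}(s,a,s') - R(s,a)$ together with the key identity
\begin{equation*}
\int_{\mathcal{S}} \nabla_{\mathbr{\omega}} P_{\mathbr{\omega}}(s'|s,a)\,\mathrm{d}s' = \nabla_{\mathbr{\omega}}\! \int_{\mathcal{S}} P_{\mathbr{\omega}}(s'|s,a)\,\mathrm{d}s' = \nabla_{\mathbr{\omega}} 1 = 0,
\end{equation*}
so that the $R(s,a)$ contribution vanishes and the $\gamma$ prefactor is absorbed into $U^{P_{\mathbr{\omega}}}$, giving the claimed formula.

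The main obstacle is more bookkeeping than conceptual: justifying the interchange of $\nabla_{\mathbr{\omega}}$ with the infinite Neumann sum (equivalently, with the resolvent $(I-\gamma P^{\pi}_{\mathbr{\omega}})^{-1}$) and with the state-action integrals. Both are standard under the stated differentiability of $P_{\mathbr{\omega}}$ in $\mathbr{\omega}$ and are taken for granted in the analogous proof of the Policy Gradient Theorem, so once these regularity issues are settled the derivation reduces to the algebraic manipulations above.
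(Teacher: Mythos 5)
Your proof is correct and follows essentially the same route as the paper's: both differentiate the Bellman recursion, unroll it to expose the $\gamma$-discounted visitation distribution, and read the gradient off the resulting series. The only cosmetic difference is that the paper propagates $\nabla_{\mathbr{\omega}} Q^{P_{\mathbr{\omega}}}$ so that $U^{P_{\mathbr{\omega}}}$ appears directly from the product rule on $Q^{P_{\mathbr{\omega}}}(s,a)=\int_{\mathcal{S}}P_{\mathbr{\omega}}(s'|s,a)U^{P_{\mathbr{\omega}}}(s,a,s')\,\mathrm{d}s'$, whereas you propagate $\nabla_{\mathbr{\omega}} V^{P_{\mathbr{\omega}}}$ and recover $U^{P_{\mathbr{\omega}}}$ at the end via $\int_{\mathcal{S}}\nabla_{\mathbr{\omega}}P_{\mathbr{\omega}}(s'|s,a)\,\mathrm{d}s'=0$; both steps are valid, and your parenthetical about the rescaling of $d_\mu^{P_{\mathbr{\omega}}}$ correctly flags the only normalization subtlety.
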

Let us now show the connection between $\nabla_{\mathbr{\omega}} J^{P_{\mathbr{\omega}}}_\mu$
and the expected relative advantage functions. This result extends that of~\citet{kakade2003sample}
to multiple parameter updates.
\begin{restatable}[]{prop}{gradientAdv}
	Let $P$ be the current transition model. Let us consider a target model which is a convex combination of the models in $\mathcal{P}$: $\overbar{P} = \sum_{i=1}^{M} \eta_i P_i$ and the update rule:
	\begin{equation*}
		P' = \beta \overbar{P} + (1-\beta) P, \quad \beta \in [ 0 , 1 ].
	\end{equation*}
	Then, the derivative of the expected return of $P'$ \wrt the $\beta$ coefficients evaluated in $P$ is given by:
	\begin{equation*}
		 \frac{\partial J^{P'}_\mu}{\partial \beta} \bigg\rvert_{\beta = 0} = \sum_{i=1}^{M} \eta_i \mathds{A}_{P,\mu}^{P_i}.
	\end{equation*}
\end{restatable}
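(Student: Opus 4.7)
The plan is to reduce everything to the $P$-Gradient Theorem (Theorem~\ref{thr:PGradientTheorem}) by treating $\beta$ as the single scalar parameter governing the parametric family $P_\beta := \beta \overline{P} + (1-\beta)P$. Since $P_\beta$ is linear in $\beta$, the derivative of the transition kernel is simply $\partial_\beta P_\beta(s'|s,a) = \overline{P}(s'|s,a) - P(s'|s,a)$, and at $\beta=0$ we have $P_\beta = P$, so both $\delta_\mu^{P_\beta}$ and $U^{P_\beta}$ collapse to $\delta_\mu^{P}$ and $U^{P}$ respectively. Applying the theorem gives
\begin{equation*}
\frac{\partial J^{P'}_\mu}{\partial \beta}\bigg|_{\beta=0} = \int_{\mathcal{S}}\int_{\mathcal{A}} \delta_\mu^{P}(s,a) \int_{\mathcal{S}} \big(\overline{P}(s'|s,a) - P(s'|s,a)\big) U^{P}(s,a,s')\, \mathrm{d}s'\,\mathrm{d}a\,\mathrm{d}s.
\end{equation*}

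Next I would recognize the inner integral as the model relative advantage. Using the identity $Q^{P,\pi}(s,a) = \int_{\mathcal{S}} P(s'|s,a) U^{P,\pi}(s,a,s')\,\mathrm{d}s'$ stated in the preliminaries, the subtraction of $P(s'|s,a)U^{P}$ produces a $-Q^{P,\pi}(s,a)$, and substituting $\overline{P} = \sum_i \eta_i P_i$ together with $\sum_i \eta_i = 1$ lets me rewrite
\begin{equation*}
\int_{\mathcal{S}} \big(\overline{P}(s'|s,a) - P(s'|s,a)\big) U^{P}(s,a,s')\,\mathrm{d}s' = \sum_{i=1}^{M} \eta_i \int_{\mathcal{S}} P_i(s'|s,a)\big(U^{P}(s,a,s') - Q^{P,\pi}(s,a)\big)\,\mathrm{d}s'.
\end{equation*}
By definition of the model advantage $A^{P,\pi}(s,a,s') = U^{P,\pi}(s,a,s') - Q^{P,\pi}(s,a)$ and of the relative advantage $A_{P,\pi}^{P_i,\pi}(s,a) = \int_{\mathcal{S}} P_i(s'|s,a) A^{P,\pi}(s,a,s')\,\mathrm{d}s'$, the right-hand side is exactly $\sum_i \eta_i A_{P,\pi}^{P_i,\pi}(s,a)$.

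Plugging this back in and swapping the finite sum with the $(s,a)$-integral yields
\begin{equation*}
\frac{\partial J^{P'}_\mu}{\partial \beta}\bigg|_{\beta=0} = \sum_{i=1}^{M} \eta_i \int_{\mathcal{S}}\int_{\mathcal{A}} \delta_\mu^{P}(s,a) A_{P,\pi}^{P_i,\pi}(s,a)\,\mathrm{d}a\,\mathrm{d}s = \sum_{i=1}^{M} \eta_i \mathds{A}_{P,\mu}^{P_i},
\end{equation*}
which is the claim. There is no genuine obstacle here: the whole argument is a linearity-plus-chain-rule computation riding on the $P$-Gradient Theorem. The only subtlety worth flagging in the write-up is the use of $\sum_i \eta_i = 1$ to absorb the $-P(s'|s,a)U^P$ term into the sum, since this is what converts a difference of $U$-integrals into a sum of genuine \emph{relative} advantages (and it also makes transparent why the analogous result for policies — which is the one proved by Kakade and Langford — generalizes verbatim to multiple simultaneous model perturbations).
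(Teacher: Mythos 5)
Your proposal is correct and follows essentially the same route as the paper: both apply the $P$-Gradient Theorem with $\beta$ as the scalar parameter, use $\partial_\beta P' = \overline{P}-P = \sum_i \eta_i(P_i-P)$ (which requires $\sum_i\eta_i=1$), evaluate at $\beta=0$ so that $\delta_\mu^{P'}$ and $U^{P'}$ reduce to $\delta_\mu^{P}$ and $U^{P}$, and identify the inner integral with the relative advantage. The only cosmetic difference is that you subtract $Q^{P,\pi}$ via the identity $Q=\int P\,U$ before decomposing over the vertices, while the paper decomposes $\overline{P}-P$ first; these are the same computation.
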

The proposition provides an interesting interpretation of the expected relative advantage function. Suppose that $P_{\mathbr{\omega}}$ is the current model and we have to choose which target model(s) we should move toward. The local performance improvement, at the first order, is given by
$J^{P'}_\mu - J^{P}_\mu \simeq \frac{\partial J^{P'}_\mu}{\partial \beta} \big\rvert_{\beta = 0}  \beta = \beta \sum_{i=1}^M \eta_i \mathds{A}_{P,\mu}^{P_i}$. Given that $\beta$ will be determined later by maximizing the bound, the local performance improvement is maximized by assigning one to the coefficient of the model yielding the maximal advantage. 
Therefore, the choice of the direction to follow, when considering the greedy target choice, is based on local information only (gradient),
while the step size $\beta$ is obtained by maximizing the bound on the guaranteed performance improvement (safe), as done in~\cite{pirotta2013adaptive}.

\section{Experimental Evaluation}
\label{sec:experimental}
The goal of this section is to show the benefits of configuring
the environment while the policy learning goes on. The experiments are conducted on two explicative domains: the Student-Teacher domain (unconstrained model space) the Racetrack Simulator (parametric model space). We compare different target choices (greedy and persistent, see Section~\ref{sec:targetChoice}) and different \emph{update strategies}. Specifically, SPMI, that adaptively updates policy and model, is compared with some alternative model learning approaches: SPMI-alt(ernated) in which model and policy updates are forced to be alternated,  SPMI-sup that uses a looser bound, obtained from Theorem~\ref{thr:boundUncoupled} by replacing $D^{\star',\star}_{\ev}$ with $D^{\star',\star}_{\infty}$,\footnote{When considering only policy updates, this is equivalent to the bound used in SPI~\cite{pirotta2013safe}.} SPI+SMI\footnote{SMI (Safe Model Iteration) is SPMI without policy updates.} that optimizes policy and model in sequence and SMI+SPI that does the opposite.

\subsection{Student-Teacher domain}

\begin{figure*}[ht]
\begin{center}
	\centerline{\includegraphics[scale=1]{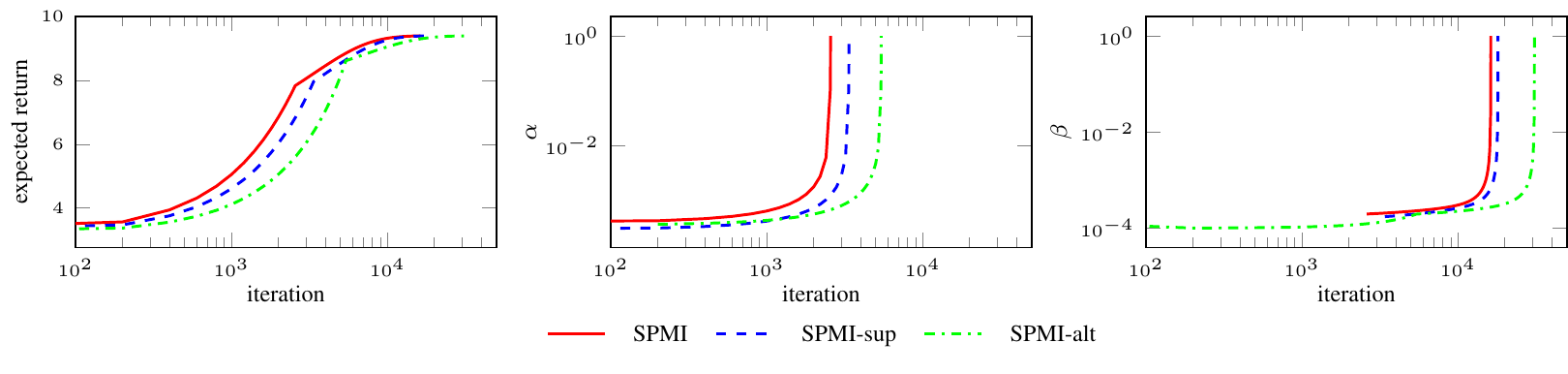}}
	\vskip -0.1in
	\caption{Expected return, $\alpha$ and $\beta$ coefficients for the Student-Teacher domain 2-1-1-2 for different update strategies.}
	\label{fig:2-1-1-2-10-main}
	\end{center}
	\vskip -0.2in
\end{figure*}

\begin{figure*}[ht]
\centering
\begin{minipage}[t]{.33\textwidth}
  \centering
  \includegraphics[scale=1]{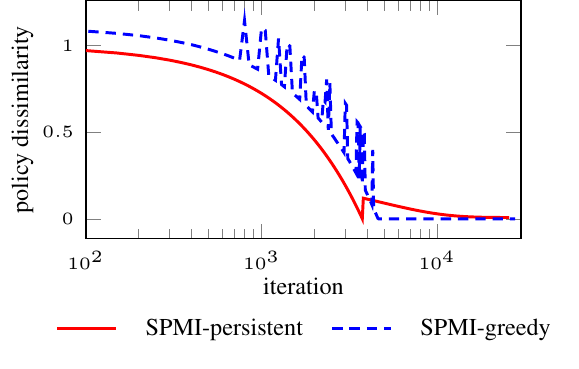}
  \vskip -0.1in
  \captionof{figure}{Policy dissimilarity for greedy and persistent target choices in the 2-1-1-2 case.}\label{fig:2-1-1-2-10-target_}
\end{minipage}%
\hfill
\begin{minipage}[t]{.64\textwidth}
  \centering
  \includegraphics[scale=1]{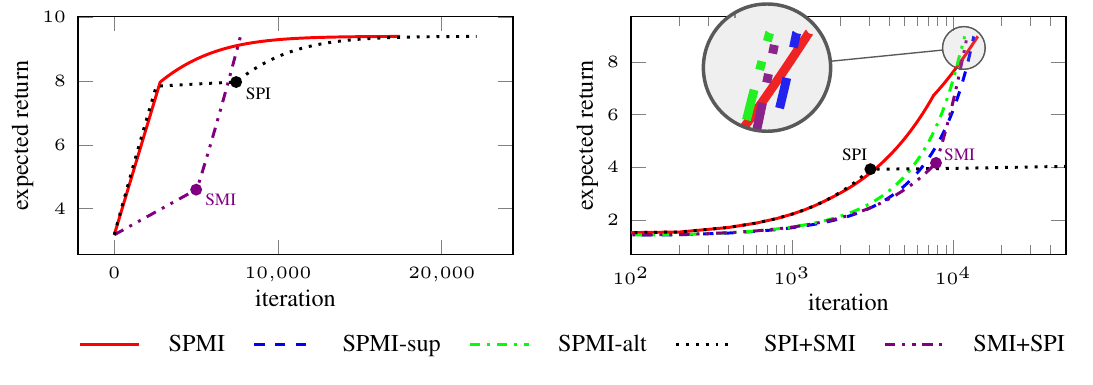}
  \vskip -0.1in
  \captionof{figure}{Expected return for the Student-Teacher domains 2-1-1-2 (left) and 2-3-1-2 (right) for different update strategies.}
  \label{fig:2-1-1-2-10-SPMI-SMI-SPI_}
\end{minipage}
\vskip -0.1in
\end{figure*}

The Student-Teacher domain is a simple model of concept learning, inspired by~\cite{rafferty2011faster}. A student (agent) learns to perform consistent assignments to literals as a result of the statements (\eg \quotes{A+C=3}) provided by an automatic teacher (environment, \eg online platform). The student has a limited policy space as she/he can only change the values of the literals by a finite quantity, but it is possible to configure the difficulty of the teacher's statements, selecting the number of literals in the statement, in order to improve the student's performance (detailed description in Appendix~\ref{apx:student_description}).\footnote{A problem setting is defined by the 4-tuple \emph{number of literals - maximum literal value - maximum update allowed - maximum number of literals in the statement} (\eg 2-1-1-2)}

We start considering the illustrative example in which there are two binary literals, and the student can change only one literal at a time (2-1-1-2). This example aims to illustrate benefits of SPMI over other update strategies and target choices. Further scenarios are reported in Appendix~\ref{apx:student_exp}. In Figure~\ref{fig:2-1-1-2-10-main}, we show the behavior of the different update strategies starting from a uniform initialization. We can see that both SPMI and SPMI-sup perform the policy updates and the model updates in sequence. This is a consequence of the fact that, by looking only at the local advantage function, it is more convenient for the student to learn an almost optimal policy with no intervention on the teacher and then refining the teacher model to gain further reward. The joint and adaptive strategy of SPMI outperforms both SPMI-sup and SPMI-alt. The alternated
model-policy update (SPMI-alt) is not convenient since, with an initial poor-performing policy, updating the model does not yield a significant performance improvement. It is worth noting that all the methods converge in a finite number of steps and the learning rates $\alpha$ and $\beta$ exhibit an exponential growth trend.

In Figure~\ref{fig:2-1-1-2-10-target_}, we compare the greedy target selection with the persistent target selection. The former, while being the best local choice maximizing the advantage, might result in an unstable behavior that slows down the convergence of the algorithm.
In Figure~\ref{fig:2-1-1-2-10-SPMI-SMI-SPI_}, we can notice that learning both policy and model is convenient since the performance of SPMI at convergence is higher than the one of SPI (only policy learned) and SMI (only model learned), corresponding to the markers in Figure~\ref{fig:2-1-1-2-10-SPMI-SMI-SPI_}. Although SPMI adopts the tightest bound, its update strategy is not guaranteed to yield globally the fastest convergence as it is based on local information, \ie expected relative advantage (Figure~\ref{fig:2-1-1-2-10-SPMI-SMI-SPI_} right).

\subsection{Racetrack simulator}
\begin{figure*}[ht]
\centering
\begin{minipage}[t]{.64\textwidth}
  \centering
  \includegraphics[scale=1]{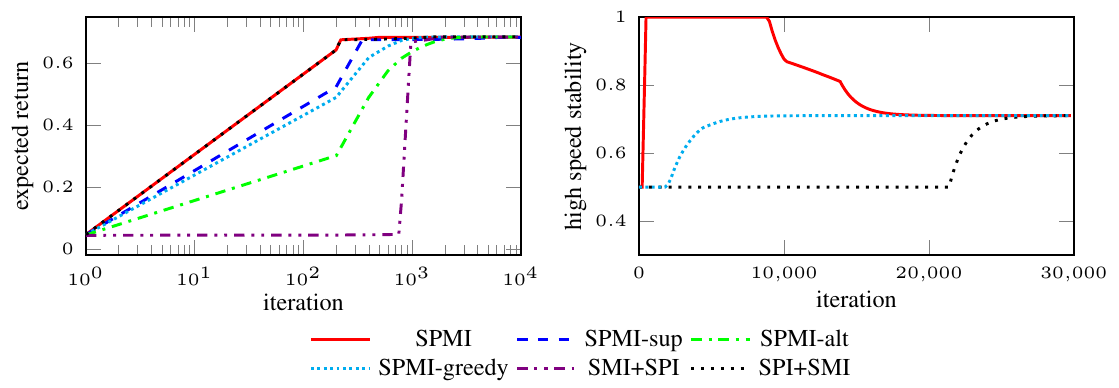}
  \vskip -0.1in
  \captionof{figure}{Expected return and coefficient of the high speed stabiliy vertex model for different update strategies in track T1.}\label{fig:simul2_performance_}
\end{minipage}%
\hfill
\begin{minipage}[t]{.34\textwidth}
  \centering
  \includegraphics[scale=1]{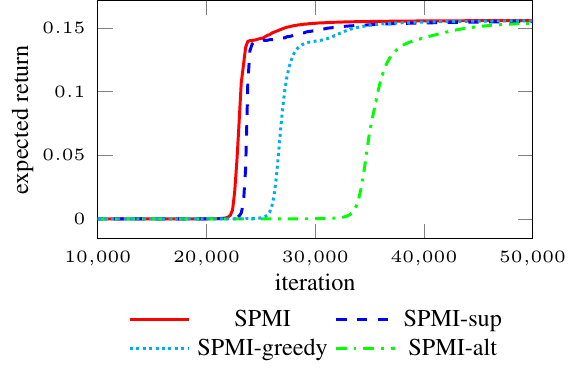}
  \vskip -0.1in
  \captionof{figure}{Expected return in track T2 with 4 vertex models.}
  \label{fig:simul4_performance}
\end{minipage}
\vskip -0.1in
\end{figure*}

The Racetrack simulator is an abstract representation of a car driving problem. The autonomous driver (agent) has to optimize a driving policy to run the vehicle on the track, reaching the finish line as fast as possible. During the process, the agent can configure two vehicle settings to improve her/his driving performance: the \emph{vehicle stability} and the \emph{engine boost} (detailed description in Appendix~\ref{apx:racetrack_description}). We first present an introductory example on a simple track (T1) in which only the vehicle stability can be configured and then we show a case on a different track (T2) including also engine boost configuration. These examples show that the optimal model is not necessarily one of the vertex models. Results on other tracks are reported in Appendix~\ref{apx:racetrack_exp}.

In Figure~\ref{fig:simul2_performance_} left, we highlight the effectiveness of SPMI updates over SPMI-sup and SPMI-alt and sequential executions of SMI and SPI on track T1. Furthermore, the SPMI-greedy, which selects the target greedily in each iteration, results in lower performance \wrt SPMI.
Comparing SPMI with the sequential approaches, we can easily deduce that is not valuable to configure the vehicle stability, \ie updating the model, while the driving policy is still really rough.
Although in the showed example the difference between SPMI and SPI+SMI is way less significant in terms of expected return, their learning paths are quite peculiar. In Figure~\ref{fig:simul2_performance_} right, we show the trend of the model coefficient related to high-speed stability. While the optimal configuration results in a mixed model for vehicle stability, SPMI exploits the maximal high-speed stability to learn the driving policy efficiently in an early stage, SPI+SMI, instead, executes all the policy updates and then directly leads the model to the optimal configuration. SPMI-greedy prefers to avoid the maximal high-speed stability region as well. It is worthwhile to underline that SPMI could temporarily drive the process aside from the optimum if it leads to higher performance from a local perspective. We consider this behavior quite valuable, especially in scenarios where performance degradations during learning are unacceptable.

Figure~\ref{fig:simul4_performance} shows how the previous considerations generalize to an example on a morphologically different track (T2), in which also the engine boost can be configured.
The learning process is characterized by a long exploration phase, both in the model and the policy space, in which the driver cannot lead the vehicle to the finish line to collect any reward. Then, we observe a fast growth in expected return when the agent has acquired enough information to reach the finish line consistently. SPMI displays a more efficient exploration phase compared to other update strategies and target choices, leading the process to a quicker convergence to the optimal model that prefers high speed stability and an intermediate engine boost configuration.

\section{Discussion and Conclusions}
In this paper, we proposed a novel framework (Conf-MDP) to model a set of real-world decision-making scenarios that, from our perspective, have not been covered by the literature so far. In Conf-MDPs the environment dynamics can be partially modified to improve the performance of the learning agent. Conf-MDPs allow modeling many relevant sequential-decision making problems that we believe cannot be effectively addressed using traditional frameworks.

\textbf{Why not a unique agent?}
Representing the environment configurability in the agent model when the environment is under the control of a supervisor is certainly inappropriate. Even when the environment configuration is carried out by the agent, this approach would require the inclusion of \quotes{configuration actions} in the action space to allow the agent to configure the environment directly as a part of the policy optimization. However, in our framework, the environment configuration is performed just once at the beginning of the episode. Moreover, with configuration actions the agent is not really learning a probability distribution on actions, \ie a policy, but a probability distribution on state-state couples, \ie a state kernel. This formulation prevents distinguishing, during the process, the effects of the policy from those of the model, making it difficult to finely constrain the configurations, limit the feasible model space, and recovering, a posteriori, the optimal model-policy pair.

\textbf{Why not a multi-agent system?} When there is no supervisor, the agent is the only learning entity and the environment is completely passive. 
In the presence of a supervisor, it would be misleading to adopt a cooperative multi-agent approach. The supervisor acts externally, at a different level and could be, possibly, totally transparent to the learning agent. Indeed, the supervisor does not operate inside the environment but it is in charge of selecting the most suitable configuration, whereas the agent needs to learn the optimal policy for the given environment.

The second significant contribution of this paper is the formulation of a safe approach, suitable to manage critical tasks, to solve a learning problem in the context of the newly introduced Conf-MDP framework. To this purpose, we proposed a novel tight lower bound on the performance improvement and an algorithm (SPMI) optimizing this bound to learn the policy and the model configuration simultaneously. We then presented an empirical study to show the effectiveness of SPMI in our context and to uphold the introduction of the Conf-MDP framework.

This is a seminal paper on Conf-MDPs and the proposed approach represents only a first step in solving these kinds of problems: many future research directions are open.
Clearly, a first extension could tackle the problem from a sample-based perspective, removing the requirement of knowing the full model space. Furthermore, we could consider different learning approaches, like policy search methods, suitable for continuous state-action spaces.

\clearpage


\bibliography{biblio}
\bibliographystyle{icml2018}

\onecolumn
\appendix

\section{Proofs and Derivations}
\label{apx:proofs}

\distributionsBoundCoupled*
\begin{proof}

	Exploiting the recursive equation of the $\gamma$-discounted state distribution ($\ref{eq:d_mu}$) we can write the distributions difference as follows:

	\begin{align}
		d^{P',\pi'}_\mu(s) - d^{P,\pi}_\mu(s) & = (1 - \gamma) \mu(s) + \gamma \int_{\mathcal{S}} d^{P',\pi'}_\mu(s') P'^{\pi'}(s|s') \mathrm{d}s' - (1 - \gamma) \mu(s) - \gamma \int_{\mathcal{S}} d^{P,\pi}_\mu(s') P^{\pi}(s|s') \mathrm{d}s' = \nonumber \\
		& = \gamma \int_{\mathcal{S}} d^{P',\pi'}_\mu(s') P'^{\pi'}(s|s') \mathrm{d}s' - \gamma \int_{\mathcal{S}} d^{P,\pi}_\mu(s') P^{\pi}(s|s') \mathrm{d}s'  \pm \gamma \int_{\mathcal{S}} d^{P,\pi}_\mu(s')  P'^{\pi'}(s|s') \mathrm{d}s' = \nonumber \\
		& = \gamma \int_{\mathcal{S}} \big( d^{P',\pi'}_\mu(s') - d^{P,\pi}_\mu(s') \big) P'^{\pi'}(s|s') \mathrm{d}s'  +  \gamma \int_{\mathcal{S}} d^{P,\pi}_\mu(s') \big( P'^{\pi'}(s|s') - P^{\pi}(s|s') \big) \mathrm{d}s'.		\label{line:d1}
	\end{align}

	Applying the 	$\ell^1$-norm to the equation $(\ref{line:d1})$ we can state the following:
	\begin{align}
		\Big\| d^{P',\pi'}_\mu - d^{P,\pi}_\mu \Big\|_1  & \leq  \gamma \int_{\mathcal{S}} \bigg| \int_{\mathcal{S}} \big( d^{P',\pi'}_\mu(s') - d^{P,\pi}_\mu(s') \big) P'^{\pi'}(s|s') \mathrm{d}s' \bigg| \mathrm{d}s  +  \gamma  \int_{\mathcal{S}} \bigg|  \int_{\mathcal{S}} d^{P,\pi}_\mu(s') \big( P'^{\pi'}(s|s') - P^{\pi}(s|s') \big) \mathrm{d}s'  \bigg| \mathrm{d}s \le \label{line:d2}\\
		& \le \gamma  \int_{\mathcal{S}} \Big| d^{P',\pi'}_\mu(s') - d^{P,\pi}_\mu(s') \Big| \int_{\mathcal{S}} P'^{\pi'}(s|s') \mathrm{d}s \mathrm{d}s'   +  \gamma \int_{\mathcal{S}} d^{P,\pi}_\mu(s')  \int_{\mathcal{S}} \Big| P'^{\pi'}(s|s') - P^{\pi}(s|s') \Big| \mathrm{d}s \mathrm{d}s' \le \label{line:d3} \\
		& \le \gamma  \Big\| d^{P',\pi'}_\mu  -   d^{P,\pi}_\mu \Big\|_{1}  +  \gamma \ev_{s\sim d_{\mu}^{P,\pi}} \Big\| {P'}^{\pi'}(\cdot|s) - P^{\pi}(\cdot|s) \Big\|_1 \label{line:d4} \le\\
		& \leq \frac{\gamma}{1 - \gamma} \ev_{s\sim d_{\mu}^{P,\pi}} \Big\| {P'}^{\pi'}(\cdot|s) - P^{\pi}(\cdot|s) \Big\|_1 = D_{\ev}^{{P'}^{\pi'}, P^{\pi}}. \notag
	\end{align}
	In (\ref{line:d2}) we exploited the subadditivity of the norm $\|x+y\| \le \|x\| + \|y\|$ and (\ref{line:d4}) derives from (\ref{line:d3}) by observing that $\int_{\mathcal{S}} P'^{\pi'}(s|s') \mathrm{d}s = 1$.

\end{proof}

\distributionsBound*
\begin{proof}
	We prove this corollary by decomposing the expression $\ev_{s\sim d_{\mu}^{P,\pi}} \Big\| {P'}^{\pi'}(\cdot|s) - P^{\pi}(\cdot|s) \Big\|_1 $:
	\begin{align*}
		 \Big\| {P'}^{\pi'}(\cdot|s) - P^{\pi}(\cdot|s) \Big\|_1 = & \int_{\mathcal{S}} \Big| P'^{\pi'}(s'|s) - P^{\pi}(s'|s) \Big| \mathrm{d}s' = \\
		 & = \int_{\mathcal{S}} \Big| \int_{\mathcal{A}} \Big( P'(s'|s,a)\pi'(a|s) - P(s'|s,a)\pi(a|s) \Big) \mathrm{d}a \Big| \mathrm{d}s' = \\
		 & = \int_{\mathcal{S}} \Big| \int_{\mathcal{A}} \Big( P'(s'|s,a)\pi'(a|s) - P(s'|s,a)\pi(a|s) \pm P'(s'|s,a)\pi(a|s) \Big) \mathrm{d}a \Big| \mathrm{d}s' = \\
		 & =  \int_{\mathcal{S}} \Big| \int_{\mathcal{A}} \Big( \big(\pi'(a|s) - \pi(a|s) \big) P'(s'|s,a) + \pi(a|s') \big(P'(s'|s,a) -  P(s'|s,a) \big) \Big) \mathrm{d}a \Big| \mathrm{d}s' \le \\
		 & \le \int_{\mathcal{S}} \int_{\mathcal{A}} \Big| \pi'(a|s) - \pi(a|s) \Big| P'(s'|s,a) \mathrm{d}a \mathrm{d}s' +  \int_{\mathcal{S}} \int_{\mathcal{A}} \pi(a|s) \Big| P'(s'|s,a) -  P(s'|s,a) \Big| \mathrm{d}a \mathrm{d}s'.
	\end{align*}
	We now take the expectation \wrt $d_{\mu}^{P,\pi}$ and exploit the monotonicity property:
	\begin{align}
		\ev_{s\sim d_{\mu}^{P,\pi}} \Big\| {P'}^{\pi'}(\cdot|s) - P^{\pi}(\cdot|s) \Big\|_1 & \le 
			\int_{\mathcal{S}}  d_{\mu}^{P,\pi}(s) \int_{\mathcal{S}} \int_{\mathcal{A}} \Big| \pi'(a|s) - \pi(a|s) \Big| P'(s'|s,a) \mathrm{d}a \mathrm{d}s \mathrm{d}s'+ \notag \\
			& \quad + \int_{\mathcal{S}}  d_{\mu}^{P,\pi}(s) \int_{\mathcal{S}} \int_{\mathcal{A}} \pi(a|s) \Big| P'(s'|s,a) -  P(s'|s,a) \Big| \mathrm{d}a \mathrm{d}s \mathrm{d}s' \le \label{eq4} \\
			& \le \int_{\mathcal{S}}  d_{\mu}^{P,\pi}(s)\int_{\mathcal{A}} \Big| \pi'(a|s) - \pi(a|s) \Big| \mathrm{d}a \mathrm{d}s' + \notag \\
			& \quad + \int_{\mathcal{S}}   \int_{\mathcal{A}} \delta_{\mu}^{P,\pi}(s,a) \int_{\mathcal{S}} \Big| P'(s'|s,a) -  P(s'|s,a) \Big| \mathrm{d}s \mathrm{d}a \mathrm{d}s' = \label{eq5}\\
			& = \ev_{s\sim d_{\mu}^{P,\pi}} \Big\| \pi'(\cdot|s) - \pi(\cdot|s) \Big\|_1 + \ev_{(s,a)\sim \delta_{\mu}^{P,\pi}} \big\| P'(\cdot|s,a) - P(\cdot|s,a) \big\|_1 = D_{\ev}^{\pi', \pi} + D_{\ev}^{P', P},\notag		
	\end{align}
	where (\ref{eq5}) follows from (\ref{eq4}) by observing that $d_{\mu}^{P,\pi}(s)  \pi(a|s') = \delta_{\mu}^{P,\pi}(s,a)$.
\end{proof}

\perfImprovement*
\begin{proof}
	Let us start from the definition of $J^{P',\pi'}_\mu$:
	\begin{align}
		(1-\gamma) J^{P',\pi'}_\mu & = \int_{\mathcal{S}} \int_{\mathcal{A}} d_\mu^{P',\pi'}(s) \pi'(a|s)R(s,a) \mathrm{d}a \mathrm{d}s = \nonumber \\
		\label{line:impr1}
        & = \int_{\mathcal{S}}  d_\mu^{P',\pi'}(s) \int_{\mathcal{A}} \pi'(a|s)  R(s,a)  \mathrm{d}a \mathrm{d}s \pm \int_{\mathcal{S}}  d_\mu^{P',\pi'}(s) V^{P,\pi}(s) \mathrm{d}s = \\ 
        \label{line:impr2}
        & = \int_{\mathcal{S}}  d_\mu^{P',\pi'}(s) \int_{\mathcal{A}} \pi'(a|s) R(s,a)  \mathrm{d}a \mathrm{d}s + \\ 
        & \quad +  \int_{\mathcal{S}} \bigg( (1-\gamma)\mu(s') + \gamma \int_{\mathcal{S}} \int_{\mathcal{A}} d_\mu^{P',\pi'}(s) \pi'(a|s) P'(s'|s,a)  \mathrm{d}a \mathrm{d}s \bigg) V^{P,\pi}(s') \mathrm{d}s' - \int_{\mathcal{S}}  d_\mu^{P',\pi'}(s) V^{P,\pi}(s) \mathrm{d}s =  \nonumber \\
        & = \int_{\mathcal{S}}  d_\mu^{P',\pi'}(s) \bigg( \int_{\mathcal{A}} \pi'(a|s)   \int_{\mathcal{S}} P'(s'|s,a) \Big( R(s,a) + \gamma V^{P,\pi}(s') \Big) \mathrm{d}s'   \mathrm{d}a - V^{P,\pi}(s) \bigg) \mathrm{d}s + \label{line:impr3} \\
        & \quad + (1-\gamma) \int_{\mathcal{S}} \mu(s') V^{P,\pi}(s') \mathrm{d}s'= \nonumber \\
        & =  \int_{\mathcal{S}} d_\mu^{P',\pi'}(s) A_{P,\pi}^{P',\pi'}(s) \mathrm{d}s + (1-\gamma) J_\mu^{P,\pi},\label{line:impr4}
    \end{align}
	where we have exploited the recursive formulation of $d_\mu^{P',\pi'}$ (\ref{eq:d_mu}) to rewrite (\ref{line:impr1}) into (\ref{line:impr2}) and~\eqref{line:impr4} follows from~\eqref{line:impr3} by observing that $\int_{\mathcal{S}} \mu(s') V^{P,\pi}(s') \mathrm{d}s' = J_\mu^{P,\pi}$ and using the definition $U^{P,\pi}(s,a,s') =  R(s,a) + \gamma V^{P,\pi}(s')$.
\end{proof}

\boundCoupled*
\begin{proof}
		
	Exploiting the bounds on the $\gamma$-discounted state distributions difference (Proposition $\ref{thr:distributionsBoundCoupled}$) we can easily attain the performance improvement bound:
	
	\begin{align}
		J^{P',\pi'}_{\mu} - J^{P,\pi}_{\mu} & = \frac{1}{1-\gamma} \int_{\mathcal{S}} d_{\mu}^{P',\pi'}(s) A_{P,\pi}^{P',\pi'}(s) \mathrm{d}s = \notag\\
		& = \frac{1}{1-\gamma} \int_{\mathcal{S}} d_{\mu}^{P,\pi}(s) A_{P,\pi}^{P',\pi'}(s) \mathrm{d}s +  \frac{1}{1-\gamma} \int_{\mathcal{S}} \big( d_{\mu}^{P',\pi'}(s) - d_{\mu}^{P,\pi}(s) \big) A_{P,\pi}^{P',\pi'}(s) \mathrm{d}s  \ge \label{line:cb1}\\
		& \ge \frac{\mathds{A}_{P,\pi,\mu}^{P',\pi'}}{1-\gamma} - \frac{1}{1-\gamma} \bigg|  \int_{\mathcal{S}} \big( d_{\mu}^{P',\pi'}(s) - d_{\mu}^{P,\pi}(s) \big) A_{P,\pi}^{P',\pi'}(s) \mathrm{d}s \bigg| \ge \label{line:cb2}\\
		& \geq   \frac{\mathds{A}_{{P,\pi},\mu}^{P',\pi'}}{1 - \gamma} -  \frac{1}{1 - \gamma}  \big\| d^{P',\pi'}_{\mu} - d^{P,\pi}_{\mu} \big\|_{1} \frac{\Delta A_{P,\pi}^{P',\pi'}}{2} \ge \label{line:cb3}\\
		& \geq   \frac{\mathds{A}_{{P,\pi},\mu}^{P',\pi'}}{1 - \gamma} -  \frac{\gamma}{(1 - \gamma)^{2}}  \ev_{s\sim d_{\mu}^{P,\pi}} \big\| {P'}^{\pi'} - P^{\pi} \big\|_1  \frac{\Delta A_{P,\pi}^{P',\pi'}}{2}, \label{line:cb4}
	\end{align}	
	where~\eqref{line:cb2} follows from~\eqref{line:cb1} by observing that $b \ge -|b|$, line~\eqref{line:cb3} follows from~\eqref{line:cb2} by applying Corollary 2.4 of~\cite{haviv1984perturbation} and~\eqref{line:cb4} is obtained by using Corollary~\ref{thr:distributionsBoundCoupled}.
\end{proof}

\begin{lemma}
\label{thr:lemmaAdvantage}
	The following equality relates the joint relative advantage function and the relative advantage functions:
	\begin{equation*}
		A_{P,\pi}^{P',\pi'}(s) = A_{P,\pi}^{P,\pi'}(s) + \int_{\mathcal{A}} \pi'(a|s) A_{P,\pi}^{P',\pi}(s,a) \mathrm{d}a.
	\end{equation*}
\end{lemma}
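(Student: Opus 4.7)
The plan is to prove the decomposition by a direct algebraic manipulation starting from the definition of the coupled relative advantage function $A_{P,\pi}^{P',\pi'}(s)$. The key observation is that the integrand $\tilde{A}^{P,\pi}(s,a,s') = U^{P,\pi}(s,a,s') - V^{P,\pi}(s)$ can be split into the model and policy advantage functions by inserting $\pm Q^{P,\pi}(s,a)$, since by definition $A^{P,\pi}(s,a,s') = U^{P,\pi}(s,a,s') - Q^{P,\pi}(s,a)$ and $A^{P,\pi}(s,a) = Q^{P,\pi}(s,a) - V^{P,\pi}(s)$, so that
\begin{equation*}
\tilde{A}^{P,\pi}(s,a,s') = A^{P,\pi}(s,a,s') + A^{P,\pi}(s,a).
\end{equation*}

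Next, I would substitute this identity into the definition of $A_{P,\pi}^{P',\pi'}(s)$ and split the resulting integral by linearity. The piece containing $A^{P,\pi}(s,a,s')$ yields
\begin{equation*}
\int_{\mathcal{A}} \pi'(a|s) \int_{\mathcal{S}} P'(s'|s,a) A^{P,\pi}(s,a,s') \mathrm{d}s' \mathrm{d}a = \int_{\mathcal{A}} \pi'(a|s) A_{P,\pi}^{P',\pi}(s,a) \mathrm{d}a,
\end{equation*}
using directly the definition of the model relative advantage. The piece containing $A^{P,\pi}(s,a)$ does not depend on $s'$, so integrating $P'(s'|s,a)$ out gives one, and what remains is exactly $\int_{\mathcal{A}} \pi'(a|s) A^{P,\pi}(s,a) \mathrm{d}a = A_{P,\pi}^{P,\pi'}(s)$, the policy relative advantage.

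There is no real obstacle here: the result is essentially a telescoping of $U - V = (U - Q) + (Q - V)$ lifted to expectations under $\pi'$ and $P'$. The only thing to be careful about is keeping track of which arguments survive which integration (the policy piece is independent of $s'$, while the model piece is independent of the action integral aside from the $\pi'(a|s)$ weight), and making sure the definitions of $A_{P,\pi}^{P,\pi'}$ and $A_{P,\pi}^{P',\pi}$ are applied in their correct forms.
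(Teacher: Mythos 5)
Your proof is correct and is essentially the paper's own argument: the paper adds and subtracts $\int_{\mathcal{A}}\int_{\mathcal{S}}\pi'(a|s)P(s'|s,a)U^{P,\pi}(s,a,s')\,\mathrm{d}s'\mathrm{d}a$, which is exactly your pointwise insertion of $\pm Q^{P,\pi}(s,a)$ after integrating over $s'$, so both reduce to the telescoping $U-V=(U-Q)+(Q-V)$ matched against the definitions of the two relative advantage functions. No gap to report.
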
 

\begin{proof}
	\begin{align}
		A_{P,\pi}^{P',\pi'}(s) & = \int_{\mathcal{A}} \int_{\mathcal{S}} \pi'(a|s)  P'(s'|s,a) U^{P,\pi}(s,a,s') \mathrm{d}s'\mathrm{d}a - V^{P,\pi}(s) = \notag \\
			& =\int_{\mathcal{A}} \int_{\mathcal{S}} \pi'(a|s)  P'(s'|s,a) U^{P,\pi}(s,a,s') \mathrm{d}s'\mathrm{d}a - V^{P,\pi}(s)\pm \int_{\mathcal{A}} \int_{\mathcal{S}} \pi'(a|s)  P(s'|s,a) U^{P,\pi}(s,a,s') \mathrm{d}s'\mathrm{d}a = \notag \\
			& = \int_{\mathcal{A}} \int_{\mathcal{S}} \pi'(a|s)  P(s'|s,a) U^{P,\pi}(s,a,s') \mathrm{d}s'\mathrm{d}a - V^{P,\pi}(s)+ \notag \\
			& \quad +  \int_{\mathcal{A}} \int_{\mathcal{S}} \pi'(a|s)  \Big( P'(s'|s,a) - P(s'|s,a) \Big) U^{P,\pi}(s,a,s') \mathrm{d}s'\mathrm{d}a = \notag \\
			& = \int_{\mathcal{A}} \pi'(a|s)  Q^{P,\pi}(s,a) \mathrm{d}a - V^{P,\pi}(s)  + \int_{\mathcal{A}}  \pi'(a|s) \int_{\mathcal{S}} \Big( P'(s'|s,a) - P(s'|s,a) \Big) U^{P,\pi}(s,a,s') \mathrm{d}s'\mathrm{d}a = \label{line:1}\\
			& = A_{P,\pi}^{P,\pi'}(s) + \int_{\mathcal{A}} \pi'(a|s) A_{P,\pi}^{P',\pi}(s,a) \mathrm{d}a, \label{line:2}
	\end{align}
	where we have applied in the first addendum (\ref{line:1}) the definition of $A_{P,\pi}^{P,\pi'}(s)$ observing that $A_{P,\pi}^{P,\pi'}(s) = \int_{\mathcal{A}} \pi'(a|s)A^{P,\pi}(s,a) \mathrm{d}a = \int_{\mathcal{A}} \pi'(a|s)\big( Q^{P,\pi}(s,a) - V^{P,\pi}(s) \big)$ to get (\ref{line:2}); and similarly for the second addendum of (\ref{line:1}) the fact that $A_{P,\pi}^{P',\pi}(s,a) = \int_{\mathcal{S}} P'(s'|s,a) A^{P,\pi}(s,a,s') \mathrm{d}s' = \int_{\mathcal{S}} P'(s'|s,a) \big( U^{P,\pi}(s,a,s') - Q^{P,\pi}(s,a) \big)\mathrm{d}s'$.
\end{proof}

\begin{lemma}
\label{thr:boundAdvantage}
	The following bound relates the joint relative advantage function and the relative advantage functions:
	\begin{equation*}
		\bigg| \mathds{A}_{P,\pi,\mu}^{P',\pi'} -\Big(\mathds{A}_{P,\pi,\mu}^{P',\pi} +  \mathds{A}_{P,\pi,\mu}^{P,\pi'}\Big) \bigg| \le \gamma D^{\pi', \pi}_{\ev} D^{P',P}_{\infty} \frac{\Delta Q^{P,\pi}}{2},
	\end{equation*}
	where $D^{P',P}_{\infty}= \sup_{s\in\mathcal{S},a\in\mathcal{A}} \big\| P'(\cdot|s,a) - P(\cdot|s,a) \big\|_1$ and $\Delta Q^{P,\pi} = \sup_{s,s'\in\mathcal{S}, a,a'\in\mathcal{A}} \big| Q^{P,\pi}(s',a') - Q^{P,\pi}(s,a) \big|$.
\end{lemma}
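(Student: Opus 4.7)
The plan is to first invoke Lemma~\ref{thr:lemmaAdvantage} after taking expectation with respect to $d_\mu^{P,\pi}$. Integrating the pointwise identity against $d_\mu^{P,\pi}$ gives
\begin{equation*}
\mathds{A}_{P,\pi,\mu}^{P',\pi'} = \mathds{A}_{P,\pi,\mu}^{P,\pi'} + \int_{\mathcal{S}} d_\mu^{P,\pi}(s) \int_{\mathcal{A}} \pi'(a|s)\, A_{P,\pi}^{P',\pi}(s,a)\, \mathrm{d}a\, \mathrm{d}s,
\end{equation*}
and subtracting $\mathds{A}_{P,\pi,\mu}^{P',\pi} = \int d_\mu^{P,\pi}(s) \int \pi(a|s) A_{P,\pi}^{P',\pi}(s,a)\, \mathrm{d}a\,\mathrm{d}s$ turns the quantity inside the absolute value into the double integral of $(\pi'(a|s)-\pi(a|s))\, A_{P,\pi}^{P',\pi}(s,a)$ against $d_\mu^{P,\pi}$.

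Next I would rewrite $A_{P,\pi}^{P',\pi}(s,a)$ in a ``centered'' form to get the $\Delta Q^{P,\pi}$ factor. Since $\int_{\mathcal{S}} P(s'|s,a)\, A^{P,\pi}(s,a,s')\, \mathrm{d}s' = Q^{P,\pi}(s,a)-Q^{P,\pi}(s,a)=0$, we may add this zero and write
\begin{equation*}
A_{P,\pi}^{P',\pi}(s,a) = \int_{\mathcal{S}} \bigl( P'(s'|s,a) - P(s'|s,a) \bigr)\, U^{P,\pi}(s,a,s')\, \mathrm{d}s'.
\end{equation*}
Because $U^{P,\pi}(s,a,s') = R(s,a)+\gamma V^{P,\pi}(s')$ only depends on $s'$ through $\gamma V^{P,\pi}(s')$, and $\sup_{s'} V^{P,\pi} - \inf_{s'} V^{P,\pi} \le \Delta Q^{P,\pi}$ (since $V$ is a $\pi$-average of $Q$), the oscillation of $U^{P,\pi}(s,a,\cdot)$ is at most $\gamma \Delta Q^{P,\pi}$. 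Applying the standard signed-measure/oscillation inequality (Corollary 2.4 of \cite{haviv1984perturbation}, already used in the proof of the coupled bound) yields
\begin{equation*}
\bigl| A_{P,\pi}^{P',\pi}(s,a) \bigr| \le \frac{\gamma\, \Delta Q^{P,\pi}}{2}\, \bigl\| P'(\cdot|s,a) - P(\cdot|s,a) \bigr\|_1 \le \frac{\gamma\, \Delta Q^{P,\pi}\, D^{P',P}_{\infty}}{2}.
\end{equation*}

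Finally I would plug this uniform bound back into the double integral and pull the constant out:
\begin{equation*}
\bigl| \mathds{A}_{P,\pi,\mu}^{P',\pi'} - \mathds{A}_{P,\pi,\mu}^{P,\pi'} - \mathds{A}_{P,\pi,\mu}^{P',\pi} \bigr| \le \frac{\gamma\, \Delta Q^{P,\pi}\, D^{P',P}_{\infty}}{2} \int_{\mathcal{S}} d_\mu^{P,\pi}(s) \int_{\mathcal{A}} \bigl|\pi'(a|s)-\pi(a|s)\bigr|\, \mathrm{d}a\, \mathrm{d}s,
\end{equation*}
and the remaining integral is exactly $D^{\pi',\pi}_{\ev}$, giving the claim. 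The only mildly delicate step is the centering trick together with the oscillation bound on $U^{P,\pi}(s,a,\cdot)$; once one notices that $V^{P,\pi}$ inherits its range from $Q^{P,\pi}$ (so that $\Delta Q^{P,\pi}$, not some larger quantity, controls the oscillation), everything else is a direct chain of triangle inequalities.
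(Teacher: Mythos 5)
Your proposal is correct and follows essentially the same route as the paper: decompose $\mathds{A}_{P,\pi,\mu}^{P',\pi'}$ via the pointwise identity of Lemma~\ref{thr:lemmaAdvantage} integrated against $d_\mu^{P,\pi}$, isolate the residual $\int d_\mu^{P,\pi}(s)\int(\pi'(a|s)-\pi(a|s))A_{P,\pi}^{P',\pi}(s,a)\,\mathrm{d}a\,\mathrm{d}s$, and control $A_{P,\pi}^{P',\pi}(s,a)=\int(P'-P)U^{P,\pi}$ by the oscillation inequality using $\Delta V^{P,\pi}\le\Delta Q^{P,\pi}$. The only cosmetic difference is that you bound $|A_{P,\pi}^{P',\pi}(s,a)|$ uniformly in one pass while the paper first introduces the intermediate quantity $\Delta A_{P,\pi}^{P',\pi}/2$ and then bounds it by the same $\gamma D_{\infty}^{P',P}\Delta Q^{P,\pi}/2$.
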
 

\begin{proof}

	We can rewrite the expected relative advantage $\mathds{A}_{{P,\pi},\mu}^{P',\pi'}$ exploiting the definition:

	\begin{align}
		\mathds{A}_{{P,\pi},\mu}^{P',\pi'} & = \int_{\mathcal{S}} d_{\mu}^{P,\pi} A_{P,\pi}^{P',\pi'}(s) \mathrm{d}s =  \nonumber \\
		& = \int_{\mathcal{S}} d_{\mu}^{P,\pi}(s) \Big( A_{P,\pi}^{P,\pi'}(s) + \int_{\mathcal{A}} \pi'(a|s) A_{P,\pi}^{P',\pi}(s,a) \mathrm{d}a \Big) \mathrm{d}s =\nonumber \\
		& = \int_{\mathcal{S}} d_{\mu}^{P,\pi}(s) A_{P,\pi}^{P,\pi'}(s) \mathrm{d}s + \int_{\mathcal{S}} \int_{\mathcal{A}} d_{\mu}^{P,\pi} \pi(a|s) A_{P,\pi}^{P',\pi}(s,a) \mathrm{d}a \mathrm{d}s + \nonumber \\
		& \quad + \int_{\mathcal{S}} d_{\mu}^{P,\pi}(s)  \int_{\mathcal{A}} \big( \pi'(a|s) - \pi(a|s) \big) A_{P,\pi}^{P',\pi}(s,a) \mathrm{d}a \mathrm{d}s = \nonumber \\
		& = \mathds{A}_{{P,\pi},\mu}^{P,\pi'} +  \mathds{A}_{{P,\pi},\mu}^{P',\pi} + \int_{\mathcal{S}} d_{\mu}^{P,\pi}(s)  \int_{\mathcal{A}} \big( \pi'(a|s) - \pi(a|s) \big) A_{P,\pi}^{P',\pi}(s,a) \mathrm{d}a \mathrm{d}s. \label{line:b1}
	\end{align}
	
	From equation $(\ref{line:b1})$ we can straightforwardly state the following inequalities:

	\begin{align*}
		\mathds{A}_{{P,\pi},\mu}^{P',\pi'} & \geq \mathds{A}_{{P,\pi},\mu}^{P,\pi'} +  \mathds{A}_{{P,\pi},\mu}^{P',\pi} - \bigg| \int_{\mathcal{S}} d_{\mu}^{P,\pi}(s) \int_{\mathcal{A}} \big( \pi'(a|s) - \pi(a|s) \big) A_{P,\pi}^{P',\pi}(s,a) \mathrm{d}a \mathrm{d}s  \bigg|, \\
		\mathds{A}_{{P,\pi},\mu}^{P',\pi'} & \leq \mathds{A}_{{P,\pi},\mu}^{P,\pi'} +  \mathds{A}_{{P,\pi},\mu}^{P',\pi} + \bigg| \int_{\mathcal{S}} d_{\mu}^{P,\pi}(s) \int_{\mathcal{A}} \big( \pi'(a|s) - \pi(a|s) \big) A_{P,\pi}^{P',\pi}(s,a) \mathrm{d}a \mathrm{d}s  \bigg|,
	\end{align*}
	
	then we bound the right hand side:
	
	\begin{align*}
		\Big| \mathds{A}_{{P,\pi},\mu}^{P',\pi'} - (\mathds{A}_{{P,\pi},\mu}^{P,\pi'}  +  \mathds{A}_{{P,\pi},\mu}^{P',\pi}) \Big| & \leq \bigg| \int_{\mathcal{S}} d_{\mu}^{P,\pi}(s)  \int_{\mathcal{A}} \big( \pi'(a|s) - \pi(a|s) \big) A_{P,\pi}^{P',\pi}(s,a) \mathrm{d}a \mathrm{d}s  \bigg| \le \\
		& \leq \int_{\mathcal{S}} \int_{\mathcal{A}} d_{\mu}^{P,\pi}(s) \Big| \big( \pi'(a|s) - \pi(a|s) \big)  A_{P,\pi}^{P',\pi}(s,a) \Big| \mathrm{d}a \mathrm{d}s \le \\
		& \leq \int_{\mathcal{S}}  d_{\mu}^{P,\pi}(s) \int_{\mathcal{A}} \Big| \pi'(a|s) - \pi(a|s) \Big| \mathrm{d}a \mathrm{d}s  \frac{\Delta A_{P,\pi}^{P',\pi}}{2} \le \\
		& \leq \ev_{s\sim d_{\mu}^{P,\pi} } \big\| \pi'(\cdot|s) - \pi(\cdot|s) \big\|_{1} \frac{\Delta A_{P,\pi}^{P',\pi}}{2} \le \\
		& \le D_{\ev}^{\pi',\pi} \frac{\Delta A_{P,\pi}^{P',\pi}}{2}.
	\end{align*}
	We conclude by bounding the term $\frac{\Delta A_{P,\pi}^{P',\pi}}{2}$:
	\begin{align}
		\frac{\Delta A_{P,\pi}^{P',\pi}}{2} & \le \Big\| \Delta A_{P,\pi}^{P',\pi} \Big\|_{\infty} \le \notag \\
		& \le \sup_{s\in\mathcal{S},a\in\mathcal{A}}  \int_{\mathcal{S}} \Big( P'(s'|s,a) - P(s'|s,a) \Big) U^{P,\pi}(s,a,s') \mathrm{d}s' \le \label{line:adv1} \\
		& \le \gamma \sup_{s\in\mathcal{S},a\in\mathcal{A}}  \int_{\mathcal{S}} \Big( P'(s'|s,a) - P(s'|s,a) \Big) V^{P,\pi}(s') \mathrm{d}s' \le \label{line:adv2}\\
		& \le \gamma \sup_{s\in\mathcal{S},a\in\mathcal{A}} \big\| P'(\cdot|s,a) - P(\cdot|s,a) \big\|_1 \frac{\Delta V^{P,\pi}}{2} \le \label{line:adv3}\\
		& \le \gamma D_{\infty}^{P',P} \frac{\Delta Q^{P,\pi}}{2}\notag,
	\end{align}
	where~\eqref{line:adv2} follows from~\eqref{line:adv1} by observing that $ \int_{\mathcal{S}} \big( P'(s'|s,a) - P(s'|s,a) \big) U^{P,\pi}(s,a,s') \mathrm{d}s' =  \int_{\mathcal{S}} \big( P'(s'|s,a) - P(s'|s,a) \big) \big( R(s,a) + \gamma V^{P,\pi}(s') \big) \mathrm{d}s'$ and~\eqref{line:adv3} is obtained by observing that $\Delta V^{P,\pi} \le \Delta Q^{P,\pi}$. Putting all together we get the lemma.
\end{proof}

\begin{lemma}
\label{thr:boundDeltaAdvantage}
	Let $(P,\pi)$ and $(P',\pi')$ be two model-policy pairs, it holds that:
	\begin{equation*}
		\frac{\Delta A_{P,\pi}^{P',\pi'}}{2}  \leq  (D_{\infty}^{\pi', \pi} + \gamma D_{\infty}^{P', P}) \frac{\Delta Q^{P,\pi}}{2}, \\
	\end{equation*}
	where $D^{P',P}_{\infty}= \sup_{s\in\mathcal{S},a\in\mathcal{A}} \big\| P'(\cdot|s,a) - P(\cdot|s,a) \big\|_1$, $D^{\pi',\pi}_{\infty}= \sup_{s\in\mathcal{S}} \big\| \pi'(\cdot|s) - \pi(\cdot|s) \big\|_1$ and $\Delta Q^{P,\pi} = \sup_{s,s'\in\mathcal{S}, a,a'\in\mathcal{A}} \big| Q^{P,\pi}(s',a') - Q^{P,\pi}(s,a) \big|$.
\end{lemma}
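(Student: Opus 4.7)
My plan is to leverage the decomposition already established in Lemma~\ref{thr:lemmaAdvantage}, which splits the coupled relative advantage into a pure policy contribution plus a $\pi'$-weighted model contribution:
\begin{equation*}
A_{P,\pi}^{P',\pi'}(s) = A_{P,\pi}^{P,\pi'}(s) + \int_{\mathcal{A}} \pi'(a|s) A_{P,\pi}^{P',\pi}(s,a) \mathrm{d}a.
\end{equation*}
The strategy is then to bound $\|A_{P,\pi}^{P',\pi'}\|_\infty$ pointwise in $s$, and finally pass to $\Delta A_{P,\pi}^{P',\pi'}/2$ via the elementary inequality $\tfrac{1}{2}\sup_{s,s'}|f(s')-f(s)| \le \|f\|_\infty$, exactly as is done for $\Delta V^{P,\pi} \le \Delta Q^{P,\pi}$ at the tail of the proof of Lemma~\ref{thr:boundAdvantage}.

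First I would handle the policy piece. Since $\int_{\mathcal{A}} \pi(a|s) A^{P,\pi}(s,a) \mathrm{d}a = 0$, I can rewrite $A_{P,\pi}^{P,\pi'}(s) = \int_{\mathcal{A}} \bigl(\pi'(a|s) - \pi(a|s)\bigr) Q^{P,\pi}(s,a) \mathrm{d}a$, at which point the standard span-norm inequality (Corollary~2.4 of~\cite{haviv1984perturbation}, already used in the earlier proofs) yields $|A_{P,\pi}^{P,\pi'}(s)| \le \|\pi'(\cdot|s)-\pi(\cdot|s)\|_1 \tfrac{\Delta Q^{P,\pi}}{2} \le D_\infty^{\pi',\pi} \tfrac{\Delta Q^{P,\pi}}{2}$.

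Next I would handle the model piece analogously: since $\int_{\mathcal{S}} P(s'|s,a) A^{P,\pi}(s,a,s') \mathrm{d}s' = 0$, we can write $A_{P,\pi}^{P',\pi}(s,a) = \int_{\mathcal{S}} \bigl(P'(s'|s,a) - P(s'|s,a)\bigr) U^{P,\pi}(s,a,s') \mathrm{d}s'$, and because the reward $R(s,a)$ cancels under the signed difference this equals $\gamma \int_{\mathcal{S}} \bigl(P'(s'|s,a) - P(s'|s,a)\bigr) V^{P,\pi}(s') \mathrm{d}s'$. Applying the same span-norm bound together with $\Delta V^{P,\pi} \le \Delta Q^{P,\pi}$ gives $|A_{P,\pi}^{P',\pi}(s,a)| \le \gamma D_\infty^{P',P} \tfrac{\Delta Q^{P,\pi}}{2}$, which survives integration against the probability density $\pi'(\cdot|s)$. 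Summing the two pointwise bounds and invoking $\tfrac{\Delta A_{P,\pi}^{P',\pi'}}{2} \le \|A_{P,\pi}^{P',\pi'}\|_\infty$ produces the claimed inequality.

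I do not expect a real obstacle, since every ingredient (the decomposition lemma, the span-norm inequality, the cancellation of the reward under $P'-P$, and the inequality $\Delta V^{P,\pi}\le \Delta Q^{P,\pi}$) has already been used in the proof of Lemma~\ref{thr:boundAdvantage}; the only mild subtlety is being careful to subtract the right baseline ($\pi$ in the policy term, $P$ in the model term) so that the span rather than the sup of $Q^{P,\pi}$ (resp.\ $V^{P,\pi}$) appears, and to keep the factor $\gamma$ attached only to the model contribution by invoking the $R+\gamma V$ decomposition of $U^{P,\pi}$ before bounding.
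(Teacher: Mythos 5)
Your proof is correct and follows essentially the same route as the paper: both arguments reduce $A_{P,\pi}^{P',\pi'}(s)$ to the sum $\int_{\mathcal{A}}(\pi'(a|s)-\pi(a|s))Q^{P,\pi}(s,a)\,\mathrm{d}a + \gamma\int_{\mathcal{A}}\int_{\mathcal{S}}\pi'(a|s)(P'(s'|s,a)-P(s'|s,a))V^{P,\pi}(s')\,\mathrm{d}s'\,\mathrm{d}a$ and then apply the span-norm inequality together with $\Delta V^{P,\pi}\le\Delta Q^{P,\pi}$ and $\tfrac{1}{2}\Delta A\le\|A\|_\infty$. The only cosmetic difference is that you reach this decomposition by specializing Lemma~\ref{thr:lemmaAdvantage} and cancelling baselines, whereas the paper re-derives it inline by adding and subtracting terms.
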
 

\begin{proof}
	Let us start rewriting the expression of the relative advantage $A_{P,\pi}^{P',\pi'}$:
    \begin{align*}
     A_{P,\pi}^{P',\pi'}(s) & =  \int_{\mathcal{A}} \pi'(a|s) \bigg( R(s,a) + \gamma \int_{\mathcal{S}} P'(s'|s,a) V^{P,\pi}(s') \mathrm{d}s'\bigg) \mathrm{d}a - V^{P,\pi}(s) = \\
     & = \int_{\mathcal{A}} \pi'(a|s) \bigg( R(s,a) + \gamma \int_{\mathcal{S}} P'(s'|s,a) V^{P,\pi}(s') \mathrm{d}s'\bigg) \mathrm{d}a - \int_{\mathcal{A}} \pi(a|s) \bigg( R(s,a) + \gamma \int_{\mathcal{S}} P(s'|s,a) V^{P,\pi}(s') \mathrm{d}s'\bigg) \mathrm{d}a = \\
     & = \int_{\mathcal{A}} \Big( \pi'(a|s) - \pi(a|s) \Big) R(s,a) \mathrm{d}a + \gamma \int_{\mathcal{A}}  \int_{\mathcal{S}} \Big( \pi'(a|s)P'(s'|s,a) - \pi(a|s) P(s'|s,a) \Big) V^{P,\pi}(s') \mathrm{d}s \mathrm{d}a = \\
     & = \int_{\mathcal{A}} \Big( \pi'(a|s) - \pi(a|s) \Big) R(s,a) \mathrm{d}a + \gamma \int_{\mathcal{A}}  \int_{\mathcal{S}} \Big( \pi'(a|s) - \pi(a|s) \Big) P(s'|s,a) V^{P,\pi}(s') \mathrm{d}s \mathrm{d}a + \\
     & \quad + \gamma \int_{\mathcal{A}}  \int_{\mathcal{S}} \pi'(a|s) \Big( P'(s'|s,a) - P(s'|s,a) \Big) V^{P,\pi}(s') \mathrm{d}s \mathrm{d}a =  \\
     & =  \int_{\mathcal{A}} \Big( \pi'(a|s) - \pi(a|s) \Big) \Big( R(s,a) + \gamma \int_{\mathcal{S}} P(s'|s,a) V^{P,\pi}(s') \mathrm{d}s\Big) \mathrm{d}a + \\
     & \quad + \gamma \int_{\mathcal{A}}  \int_{\mathcal{S}} \pi'(a|s) \Big( P'(s'|s,a) - P(s'|s,a) \Big) V^{P,\pi}(s') \mathrm{d}s\mathrm{d}a= \\
     & = \int_{\mathcal{A}} \Big( \pi'(a|s) - \pi(a|s) \Big) Q^{P,\pi}(s,a) \mathrm{d}a + \gamma \int_{\mathcal{A}}  \int_{\mathcal{S}} \pi'(a|s) \Big( P'(s'|s,a) - P(s'|s,a) \Big) V^{P,\pi}(s') \mathrm{d}s \mathrm{d}a,
    \end{align*}
    then we can obtain the proof straightforwardly, noticing that $\sup_{s\in\mathcal{S}} \Delta Q^{P,\pi}(s,\cdot) \le \Delta Q^{P,\pi}$ and recalling that $\Delta V^{P,\pi}\le \Delta Q^{P,\pi}$:
    \begin{align*}
    	\frac{\Delta A_{P,\pi}^{P',\pi'}}{2} \le \Big\|A_{P,\pi}^{P',\pi'} \Big\|_{\infty} \le D^{\pi',\pi}_{\infty} \frac{\sup_{s\in\mathcal{S}} \Delta Q^{P,\pi}(s,\cdot)}{2} + \gamma D^{P',P}_{\infty} \frac{\Delta V^{P,\pi}}{2} \le (D_{\infty}^{\pi', \pi} + \gamma D_{\infty}^{P', P}) \frac{\Delta Q^{P,\pi}}{2}.
    \end{align*}
\end{proof}

\boundUncoupled*
\begin{proof}
	Exploiting the bounds on the coupled expected relative advantage $\mathds{A}_{{P,\pi},\mu}^{P',\pi'}$ (Lemma~\ref{thr:boundAdvantage}), on the $\gamma$-discounted state distributions difference (Corollary~\ref{thr:distributionsBound}) and on $\Delta A_{P,\pi}^{P',\pi}$ (Lemma~\ref{thr:boundDeltaAdvantage}), we can state the following:	
	\begin{align*}
		J^{P',\pi'}_{\mu} - J^{P,\pi}_{\mu} 	 & \geq   \frac{\mathds{A}_{{P,\pi},\mu}^{P',\pi'}}{1 - \gamma} -  \frac{\big\| d^{P',\pi'}_{\mu} - d^{P,\pi}_{\mu} \big\|_{1}}{1 - \gamma}  \frac{\Delta A_{P,\pi}^{P',\pi'}}{2} \ge \\
		& \geq \frac{\mathds{A}_{{P,\pi},\mu}^{P,\pi'} + \mathds{A}_{{P,\pi},\mu}^{P',\pi}}{1 - \gamma} - \frac{\gamma}{(1 - \gamma)^{2}}  (D_{\ev}^{\pi', \pi} + D_{\ev}^{P', P}) (D_{\infty}^{\pi', \pi} + \gamma D_{\infty}^{P', P})  \frac{\Delta Q^{P,\pi}}{2} - \frac{\gamma}{1-\gamma}  D_{\ev}^{\pi', \pi} D_{\infty}^{P', P}  \frac{\Delta Q^{P,\pi}}{2} \ge \\
		& \geq \frac{\mathds{A}_{{P,\pi},\mu}^{P,\pi'} + \mathds{A}_{{P,\pi},\mu}^{P',\pi}}{1 - \gamma} - \frac{\gamma}{(1 - \gamma)^{2}}  \big( D_{\ev}^{\pi', \pi} D_{\infty}^{\pi', \pi}  +  D_{\ev}^{\pi', \pi} D_{\infty}^{P', P}  +  D_{\infty}^{\pi', \pi} D_{\ev}^{P', P}  +  \gamma  D_{\infty}^{P', P} D_{\ev}^{P', P} \big)  \frac{\Delta Q^{P,\pi}}{2}. \\
	\end{align*}
	With a factorization of the last expression we get the result.
\end{proof}

\optimalUpdate*
\begin{proof}
	Let us write explicitly the update coefficients in the decoupled bound (\ref{thr:boundUncoupled}):
	\begin{equation*}
		J^{{P}',\pi'}_{\mu}- J^{{P},\pi}_{\mu} \geq \frac{\alpha \mathds{A}_{{P,\pi},\mu}^{P,\overline{\pi}} + \beta \mathds{A}_{{P,\pi},\mu}^{\overline{P},\pi}}{1 - \gamma} - \frac{\gamma}{(1 - \gamma)^{2}}  \big( \alpha^2 D_{\ev}^{\overline{\pi}, \pi} D_{\infty}^{\overline{\pi}, \pi}  +  \alpha \beta D_{\ev}^{\overline{\pi}, \pi} D_{\infty}^{\overline{P}, P}  +  \alpha \beta D_{\infty}^{\overline{\pi}, \pi} D_{\ev}^{\overline{P}, P}  +  \gamma  \beta^2 D_{\infty}^{\overline{P}, P} D_{\ev}^{\overline{P}, P} \big)  \frac{\Delta Q^{P,\pi}}{2},
	\end{equation*}
	we now take the derivatives w.r.t. $\alpha$ and $\beta$ to find the stationary points:
	\begin{align*}
		\frac{\partial B}{\partial \alpha} & =  \frac{\mathds{A}_{{P,\pi},\mu}^{P,\overline{\pi}}}{1 - \gamma} - \frac{\gamma}{(1 - \gamma)^2}  \big( 2\alpha D_{\ev}^{\overline{\pi}, \pi} D_{\infty}^{\overline{\pi}, \pi}  +  \beta D_{\ev}^{\overline{\pi}, \pi} D_{\infty}^{\overline{P}, P}  +  \beta D_{\infty}^{\overline{\pi}, \pi} D_{\ev}^{\overline{P}, P} \big)  \frac{\Delta Q^{P,\pi}}{2}, \\
		\frac{\partial B}{\partial \beta} & = \frac{\mathds{A}_{{P,\pi},\mu}^{\overline{P},\pi}}{1 - \gamma} - \frac{\gamma}{(1 - \gamma)^2}  \big( \alpha D_{\ev}^{\overline{\pi}, \pi} D_{\infty}^{\overline{P}, P}  +  \alpha D_{\infty}^{\overline{\pi}, \pi} D_{\ev}^{\overline{P}, P}  +  2 \gamma  \beta D_{\infty}^{\overline{P}, P} D_{\ev}^{\overline{P}, P} \big)  \frac{\Delta Q^{P,\pi}}{2}.
	\end{align*}
	When the target policy is different from the current one and, symmetrically, the target model is different from the current model the linear system of the derivatives admits a unique solution. We compute the second order derivative to discover the nature of such point:
	\begin{align*}
		\frac{\partial B^2}{\partial^2 \alpha} & = - \frac{\gamma}{(1 - \gamma)^2}   D_{\ev}^{\overline{\pi}, \pi} D_{\infty}^{\overline{\pi}, \pi}  \Delta Q^{P,\pi}, \\
		\frac{\partial B^2}{\partial \alpha \partial \beta} = \frac{\partial B^2}{\partial \beta \partial \alpha} & = - \frac{\gamma}{(1 - \gamma)^2}  \big( D_{\ev}^{\overline{\pi}, \pi} D_{\infty}^{\overline{P}, P}  +  D_{\infty}^{\overline{\pi}, \pi} D_{\ev}^{\overline{P}, P} \big)  \frac{\Delta Q^{P,\pi}}{2}, \\
		\frac{\partial B^2}{\partial^2 \beta} & = - \frac{\gamma^2}{(1 - \gamma)^2}  D_{\infty}^{\overline{P}, P} D_{\ev}^{\overline{P}, P}  \Delta Q^{P,\pi},
	\end{align*}
	from the second order derivatives we can write the Hessian matrix:
	\begin{equation*}
		\mathcal{H}(\alpha,\beta) = - \frac{\gamma \Delta Q^{P,\pi}}{(1 - \gamma)^2} 
			\begin{pmatrix}
    			2 D_{\ev}^{\overline{\pi}, \pi} D_{\infty}^{\overline{\pi}, \pi} & D_{\ev}^{\overline{\pi}, \pi} D_{\infty}^{\overline{P}, P}  +  D_{\infty}^{\overline{\pi}, \pi} D_{\ev}^{\overline{P}, P} \\
    			D_{\ev}^{\overline{\pi}, \pi} D_{\infty}^{\overline{P}, P}  +  D_{\infty}^{\overline{\pi}, \pi} D_{\ev}^{\overline{P}, P} &  2 \gamma D_{\infty}^{\overline{P}, P} D_{\ev}^{\overline{P}, P} \\
  			\end{pmatrix},
	\end{equation*}
	having trace and determinant:
	\begin{align*}
		\mathrm{tr}(\mathcal{H}(\alpha,\beta)) & = - \frac{\gamma \Delta Q^{P',\pi}}{(1 - \gamma)^2}  \Big(2 D_{\ev}^{\overline{\pi}, \pi} D_{\infty}^{\overline{\pi}, \pi} + 2 \gamma D_{\infty}^{\overline{P}, P} D_{\ev}^{\overline{P}, P} \Big) \le 0 \\
		\mathrm{det}(\mathcal{H}(\alpha,\beta)) & =  \frac{\gamma^2 \Delta {Q_{P,\pi}^{P',\pi}}^2}{(1 - \gamma)^4} \Big( (4 \gamma - 2) D_{\ev}^{\overline{\pi}, \pi} D_{\infty}^{\overline{\pi}, \pi} D_{\infty}^{\overline{P}, P} D_{\ev}^{\overline{P}, P} - (D_{\ev}^{\overline{\pi}, \pi} D_{\infty}^{\overline{P}, P} )^2 - (D_{\infty}^{\overline{\pi}, \pi} D_{\ev}^{\overline{P}, P})^2 \Big) \le \\
		& \le \frac{\gamma^2 \Delta {Q_{P,\pi}^{P',\pi}}^2}{(1 - \gamma)^4} \Big( 2 D_{\ev}^{\overline{\pi}, \pi} D_{\infty}^{\overline{\pi}, \pi} D_{\infty}^{\overline{P}, P} D_{\ev}^{\overline{P}, P} - (D_{\ev}^{\overline{\pi}, \pi} D_{\infty}^{\overline{P}, P} )^2 - (D_{\infty}^{\overline{\pi}, \pi} D_{\ev}^{\overline{P}, P})^2 \Big) \le \\
		& \le - \frac{\gamma^2 \Delta {Q_{P,\pi}^{P',\pi}}^2}{(1 - \gamma)^4} \Big( D_{\ev}^{\overline{\pi}, \pi} D_{\infty}^{\overline{P}, P} - D_{\infty}^{\overline{\pi}, \pi} D_{\ev}^{\overline{P}, P}\Big) ^2 \le 0.
	\end{align*}
	When $\overline{P} \neq P$ and $\overline{\pi} \neq \pi$ we observe that the Hessian matrix
	is indefinite since both the trace and the determinant are negative. This means that
	the unique stationary point is a saddle point which is uninteresting for optimization
	purposes. By the way, $B(\alpha,\beta)$ is a quadratic function, therefore it is continuous
	on the compact set $[0,1]^2$ and therefore, from Weierstrass theorem, it admits a global
	maximum (and minimum). Since such point is not a stationary point it must lie on the boundary of $[0,1]^2$.

	Then, by setting to zero the equations $\frac{\partial B}{\partial \alpha} \big|_{\beta=0}$, $\frac{\partial B}{\partial \alpha} \big|_{\beta=1}$, $\frac{\partial B}{\partial \beta} \big|_{\alpha=0}$, $\frac{\partial B}{\partial \beta} \big|_{\alpha=1}$ we can obtain the following optimal values (which are clipped to lie in the interval $[0,1]$):
	\begin{align*}
		\alpha^*_0 = & \frac{(1-\gamma)}{\gamma \Delta Q^{P,\pi}} \frac{\mathds{A}_{P,\pi,\mu}^{P,\overline{\pi}}}{D_{\infty}^{\overline{\pi},\pi}D_{\ev}^{\overline{\pi},\pi}} \\
		\alpha^*_1 = & \frac{(1-\gamma)}{\gamma \Delta Q^{P,\pi}} \frac{\mathds{A}_{P,\pi,\mu}^{P,\overline{\pi}}}{D_{\infty}^{\overline{\pi},\pi}D_{\ev}^{\overline{\pi},\pi}} - \frac{1}{2}\bigg( \frac{D_{\ev}^{\overline{P},P}}{D_{\ev}^{\overline{\pi},\pi}} + \frac{D_{\infty}^{\overline{P},P}}{D_{\infty}^{\overline{\pi},\pi}} \bigg) \\
		\beta^*_0 = & \frac{(1-\gamma)}{\gamma^2 \Delta Q^{P,\pi}} \frac{\mathds{A}_{P,\pi,\mu}^{\overline{P},\pi}}{D_{\infty}^{\overline{P},P}D_{\ev}^{\overline{P},P}} \\
		\beta^*_1 = & \frac{(1-\gamma)}{\gamma^2 \Delta Q^{P,\pi}} \frac{\mathds{A}_{P,\pi,\mu}^{\overline{P},\pi}}{D_{\infty}^{\overline{P},P}D_{\ev}^{\overline{P},P}} - \frac{1}{2\gamma}\bigg( \frac{D_{\ev}^{\overline{\pi},\pi}}{D_{\ev}^{\overline{P},P}} + \frac{D_{\infty}^{\overline{\pi},\pi}}{D_{\infty}^{\overline{P},P}} \bigg) 
	\end{align*}
	Instead, for $\gamma\in (0,1)$, the Hessian is singular when either the target policy or the target model are equal to
	the current one. Those cases can be treated separately and clearly yield maxima points. When $\overline{P}= P$ then we have $\alpha^* = \alpha^*_0$, when $\overline{\pi} = \pi$ we have $\beta^* = \beta^*_0$.
	
We report the values of the decoupled bound $B(\alpha,\beta)$ (Theorem~\ref{thr:distributionsBound}) in correspondence of the optimal coefficients:
\begin{align*}
	B(\alpha^*_0, 0) &= \frac{(1-\gamma){\mathds{A}_{P,\pi,\mu}^{P, \overline{\pi}}}^2}{2\gamma \Delta Q^{P,\pi} D_{\infty}^{\overline{\pi},\pi} D_{\ev}^{\overline{\pi},\pi}}, \\
	B(0, \beta_0^*) &= \frac{(1-\gamma){\mathds{A}_{P,\pi,\mu}^{\overline{P},\pi}}^2}{2\gamma^2 \Delta Q^{P,\pi} D_{\infty}^{\overline{P},P} D_{\ev}^{\overline{P},P}}, \\
	B(1, \beta_1^*) &= \mathds{A}_{P,\pi,\mu}^{P, \overline{\pi}} + \frac{(1-\gamma){\mathds{A}_{P,\pi,\mu}^{\overline{P},\pi}}^2}{2\gamma^2 \Delta Q^{P,\pi} D_{\infty}^{\overline{P},P} D_{\ev}^{\overline{P},P}} - \frac{\mathds{A}_{P,\pi,\mu}^{\overline{P},\pi}}{2 \gamma} \bigg( \frac{D_{\ev}^{\overline{\pi},\pi}}{D_{\infty}^{\overline{P},P}} + \frac{D_{\infty}^{\overline{\pi},\pi}}{D_{\ev}^{\overline{P},P}} \bigg) + \\
	 & \quad + \frac{\Delta Q^{P,\pi}}{2(1-\gamma)} \bigg( \frac{1}{2} D_{\ev}^{\overline{P},P} D_{\ev}^{\overline{\pi},\pi} D_1 + \frac{1}{2} D_{\infty}^{\overline{P},P} D_{\ev}^{\overline{\pi},\pi} D_1 - \frac{1}{4} D_{\infty}^{\overline{P},P} D_{\ev}^{\overline{P},P} D_1^2 - \gamma D_{\ev}^{\overline{\pi},\pi} D_{\infty}^{\overline{\pi},\pi}  \bigg),\\
	 B(\alpha_1^*, 1) &= \mathds{A}_{P,\pi,\mu}^{\overline{P}, {\pi}} + \frac{(1-\gamma){\mathds{A}_{P,\pi,\mu}^{{P},\overline{\pi}}}^2}{2\gamma \Delta Q^{P,\pi} D_{\infty}^{\overline{\pi},\pi} D_{\ev}^{\overline{\pi},\pi}} - \frac{\mathds{A}_{P,\pi,\mu}^{{P},\overline{\pi}}}{2} \bigg( \frac{D_{\ev}^{\overline{P},P}}{D_{\infty}^{\overline{\pi},\pi}} + \frac{D_{\infty}^{\overline{P},P}}{D_{\ev}^{\overline{\pi},\pi}} \bigg) + \\
	 & \quad + \frac{\gamma \Delta Q^{P,\pi}}{2(1-\gamma)} \bigg( \frac{1}{2} D_{\ev}^{\overline{\pi},\pi} D_{\ev}^{\overline{P},P} D_2 + \frac{1}{2} D_{\infty}^{\overline{\pi},\pi} D_{\ev}^{\overline{P},P} D_2 - \frac{1}{4} D_{\infty}^{\overline{\pi},\pi} D_{\ev}^{\overline{\pi},\pi} D_2^2 - \gamma D_{\ev}^{\overline{P},P} D_{\infty}^{\overline{P},P}  \bigg),\\
\end{align*}
where 
\begin{align*}
	D_1 &= \frac{D_{\ev}^{\overline{\pi},\pi}}{D_{\ev}^{\overline{P}, P}} + \frac{D_{\infty}^{\overline{\pi},\pi}}{D_{\infty}^{\overline{P}, P}}, \\
	D_2 &=  \frac{D_{\ev}^{\overline{P},P}}{D_{\ev}^{\overline{\pi}, \pi}} + \frac{D_{\infty}^{\overline{P},P}}{D_{\infty}^{\overline{\pi}, \pi}}. \\
\end{align*}
\end{proof}

\oppositeSignAdv*
\begin{proof}
	Let us rewrite the expected relative advantage by decomposing $P_{\mathbr{\omega}}$:
	\begin{align*}
		A_{P_{\mathbr{\omega}}}^{P_i}(s,a) = & \int_{\mathcal{S}} \Big( P_i(s'|s,a) - P_{\mathbr{\omega}}(s'|s,a) \Big) U^{P_{\mathbr{\omega}}}(s,a,s') \mathrm{d}s' = \\
		& =\int_{\mathcal{S}} \bigg( P_i(s'|s,a) - \sum_{j=1}^M \omega_j P_j(s'|s,a) \bigg) U^{P_{\mathbr{\omega}}}(s,a,s') \mathrm{d}s'.
	\end{align*}
	Now we take the weighted sum of the previous equation:
	\begin{align*}
		\sum_{i=1}^M  \omega_i  A_{P_{\mathbr{\omega}}}^{P_i}(s,a) & = \sum_{i=1}^M  \omega_i \int_{\mathcal{S}} \bigg( P_i(s'|s,a) - \sum_{j=1}^M \omega_j P_j(s'|s,a) \bigg) U^{P_{\mathbr{\omega}}}(s,a,s') \mathrm{d}s'= \\
		& = \int_{\mathcal{S}} \bigg( \sum_{i=1}^M \omega_i P_i(s'|s,a) - \sum_{j=1}^M \omega_j P_j(s'|s,a) \bigg) U^{P_{\mathbr{\omega}}}(s,a,s') \mathrm{d}s' = 0,
	\end{align*}
	where we just observed that $\sum_{i=1}^M \omega_i P_i(s'|s,a) - \sum_{j=1}^M \omega_j P_j(s'|s,a) = 0$. 
\end{proof}

\optimalAdv*
\begin{proof}
	We first prove that the expected relative advantage \wrt the vertex models is non-positive and then we extend it to all the models. By contradiction, suppose there exists a vertex model $P_i \in \bm{P}$ having positive expected relative advantage. Then, we can perform a step of model update with SPMI starting from $P_{\mathbr{\omega}^*}$ and getting the new model $P_{\overbar{\mathbr{\omega}}}$ with a performance improvement of:
	\begin{equation*}	
		J^{P_{\overbar{\mathbr{\omega}}}} - J^{P_{\mathbr{\omega}^*}} \ge \frac{(1-\gamma) {\mathds{A}_{P_{{\mathbr{\omega}}^*},\mu}^{P_i}}^2}{2 \gamma^2 \Delta Q^{P,\pi} D^{P_i, P_{{\mathbr{\omega}}^*}}_{\infty} D^{P_{P_i, {\mathbr{\omega}}^*}}_{\ev}} > 0,
	\end{equation*}
	which is impossible as $P_{\mathbr{\omega}^*}$ is the optimal model. Let us consider a generic model $P_{\mathbr{\omega}}$, its advantage decomposes linearly in the vertex models:
	\begin{equation*}
		\mathds{A}_{P_{{\mathbr{\omega}}^*}, \mu}^{P_{\mathbr{\omega}}} = \sum_{i=1}^{M} \omega_i \mathds{A}_{P_{{\mathbr{\omega}}^*}, \mu}^{P_i} \le 0.
	\end{equation*}
	Let us now consider the subset of vertex models having non-zero coefficient for the optimal model $\{P_i \in \bm{P} : \omega^*_i > 0 \}$. From Lemma~\ref{thr:oppositeSignAdv} we have:
	\begin{equation}
		\sum_{i=1}^M \omega_i^* \mathds{A}_{P_{\mathbr{\omega}^*}, \mu}^{P_i} = \sum_{i: \omega_i^* > 0} \omega_i^* \mathds{A}_{P_{\mathbr{\omega}^*}, \mu}^{P_i} = 0.
	\end{equation}
	Since $\mathds{A}_{P_{\mathbr{\omega}^*}, \mu}^{P_i} \le 0$ from the first part of the theorem, it must be that all $\mathds{A}_{P_{\mathbr{\omega}^*}, \mu}^{P_i} = 0$. As an immediate consequence, all transition models in $ \mathrm{co}\big( \{P_i \in \bm{P} : \omega^*_i > 0 \} \big)$ must have zero expected relative advantage, due to the linear decomposition of the advantage.
\end{proof}

\performanceGap*
\begin{proof}
	Using Theorem~\ref{thr:perfImprovement} and Lemma~\ref{thr:lemmaAdvantage} we can write:
	\begin{align*}
		J^{P_{\mathbr{\omega}^*}}_\mu - J^{P_{\overbar{{\mathbr{\omega}}}}}_\mu & = \frac{1}{1-\gamma} \int_{\mathcal{S}} d_{\mu}^{P_{\mathbr{\omega}^*}}(s) \int_{\mathcal{A}} \pi(a|s) A_{P_{\overbar{{\mathbr{\omega}}}}}^{P_{\mathbr{\omega}^*}}(s,a) \mathrm{d}s \mathrm{d}a \le\\
		& \le \frac{1}{1-\gamma} \int_{\mathcal{S}} \int_{\mathcal{A}} \delta_{\mu}^{P_{\mathbr{\omega}^*}}(s,a)  \mathrm{d}s \mathrm{d}a \sup_{s\in\mathcal{S}, a\in\mathcal{A}} A_{P_{\overbar{{\mathbr{\omega}}}}}^{P_{\mathbr{\omega}^*}}(s,a) \le \\
		& \le \frac{1}{1-\gamma} \sup_{s\in\mathcal{S}, a\in\mathcal{A}} A_{P_{\overbar{{\mathbr{\omega}}}}}^{P_{\mathbr{\omega}^*}}(s,a),
	\end{align*}
	Now we observe that the relative advantage decomposes linearly in the target models:
	\begin{align*}
		A_{P_{\overbar{{\mathbr{\omega}}}}}^{P_{\mathbr{\omega}^*}}(s,a) = \sum_{i=1}^M\omega_i^* A^{P_i}_{P_{\overbar{\mathbr{\omega}}}}(s,a) \le \max_{i=1,2,...,M} A^{P_i}_{P_{\overbar{\mathbr{\omega}}}}(s,a),
	\end{align*}	 
	from which the theorem follows.
\end{proof}

\PGradientTheorem*
\begin{proof}
	We just rephrase the proof of the Policy Gradient Theorem~\cite{sutton2000policy}. Let us compute the gradient of the Q-function:
	\begin{align}
		\nabla_{\mathbr{\omega}} Q^{P_{\mathbr{\omega}}} (s,a) & = \nabla_{\mathbr{\omega}}  \int_{\mathcal{S}} P_{\mathbr{\omega}}(s'|s,a) U^{P_{\mathbr{\omega}}}(s,a,s') \mathrm{d}s' = \notag\\
		& =  \int_{\mathcal{S}} \bigg( \nabla_{\mathbr{\omega}}  P_{\mathbr{\omega}}(s'|s,a) U^{P_{\mathbr{\omega}}}(s,a,s') +  P_{\mathbr{\omega}}(s'|s,a) \nabla_{\mathbr{\omega}}  U^{P_{\mathbr{\omega}}}(s,a,s') \bigg) \mathrm{d}s'=  \label{line:PGradientTheorem1}\\
		& = \int_{\mathcal{S}} \bigg( \nabla_{\mathbr{\omega}}  P_{\mathbr{\omega}}(s'|s,a) U^{P_{\mathbr{\omega}}}(s,a,s') +  P_{\mathbr{\omega}}(s'|s,a) \nabla_{\mathbr{\omega}}  \bigg( R(s,a) + \gamma \int_{\mathcal{A}} \pi(a'|s') Q^{P_{\mathbr{\omega}}}(s',a') \mathrm{d}a \bigg) \bigg) \mathrm{d}s' =  \label{line:PGradientTheorem2} \\
		& = \int_{\mathcal{S}}  \nabla_{\mathbr{\omega}}  P_{\mathbr{\omega}}(s'|s,a) U^{P_{\mathbr{\omega}}}(s,a,s') \mathrm{d}s' +  \gamma \int_\mathcal{S} P_{\mathbr{\omega}}(s'|s,a)  \int_{\mathcal{A}} \pi(a'|s')  \nabla_{\mathbr{\omega}}  Q^{P_{\mathbr{\omega}}}(s',a') \mathrm{d}a \mathrm{d}s',\label{line:PGradientTheorem3}
	\end{align}
	where~\eqref{line:PGradientTheorem2} follows from~\eqref{line:PGradientTheorem1} by expressing the U-function with the Bellman equation. After unfolding~\eqref{line:PGradientTheorem3} we get:
	\begin{align*}
		\nabla_{\mathbr{\omega}} Q^{P_{\mathbr{\omega}}} (s,a) = \int_{\mathcal{S}} \int_{\mathcal{A}} \delta_{(s,a)}^{P_{\mathbr{\omega}}} (s'',a'') \int_{\mathcal{S}}  \nabla_{\mathbr{\omega}}  P_{\mathbr{\omega}}(s'|s'',a'') U^{P_{\mathbr{\omega}}}(s'',a'',s') \mathrm{d}s'' \mathrm{d}a'' \mathrm{d}s',
	\end{align*}
	where $\delta_{(s,a)}^{P_{\mathbr{\omega}}} (s'',a'')$ is the $\gamma$-discounted state-action distribution when forcing the first state to be $s$ and the first action to be $a$. We obtain the gradient of the expected return by observing that $J^{P_{\mathbr{\omega}}}_\mu = \int_{\mathcal{S}} \int_{\mathcal{A}} \mu(s) \pi(a|s) Q^{P_{\mathbr{\omega}}} (s,a) \mathrm{d}s \mathrm{d}a$ and therefore:
	\begin{align*}
		\nabla_{\mathbr{\omega}} J^{P_{\mathbr{\omega}}}_\mu = \int_{\mathcal{S}} \int_{\mathcal{A}} \mu(s) \pi(a|s) \nabla_{\mathbr{\omega}} Q^{P_{\mathbr{\omega}}} (s,a) \mathrm{d}s \mathrm{d}a =  \int_{\mathcal{S}} \int_{\mathcal{A}} \delta_{\mu}^{P_{\mathbr{\omega}}} (s'',a'') \int_{\mathcal{S}}  \nabla_{\mathbr{\omega}}  P_{\mathbr{\omega}}(s'|s'',a'') U^{P_{\mathbr{\omega}}}(s'',a'',s') \mathrm{d}s'' \mathrm{d}a'' \mathrm{d}s',
	\end{align*}
	by observing that $\int_{\mathcal{S}} \int_{\mathcal{A}} \mu(s) \pi(a|s) \delta_{(s,a)}^{P_{\mathbr{\omega}}} (s'',a'') \mathrm{d}s \mathrm{d}a = \delta_{\mu}^{P_{\mathbr{\omega}}} (s'',a'')$.
\end{proof}

\gradientAdv*
\begin{proof}
	Exploiting theorem~\eqref{thr:PGradientTheorem} and the definition of $P'$ we can write the expression of the gradient:
	\begin{align*}
		\frac{\partial J^{P'}_\mu}{\partial \beta} &= \int_{\mathcal{S}} \int_{\mathcal{A}} \delta_{\mu}^{P'}(s,a) \int_{\mathcal{S}} \frac{\partial}{\partial \beta} P'(s'|s,a) U^{P'}(s,a,s') \mathrm{d}s' \mathrm{d}a \mathrm{d}s = \\
		& = \int_{\mathcal{S}} \int_{\mathcal{A}} \delta_{\mu}^{P'}(s,a) \int_{\mathcal{S}} \Big( \overline{P} (s'|s,a) - P(s'|s,a) \Big) U^{P'}(s,a,s') \mathrm{d}s' \mathrm{d}a \mathrm{d}s = \\
		& = \sum_{i=1}^{M} \eta_i \int_{\mathcal{S}} \int_{\mathcal{A}} \delta_{\mu}^{P'}(s,a) \int_{\mathcal{S}} \Big( P_i (s'|s,a) -P(s'|s,a) \Big) U^{P'}(s,a,s') \mathrm{d}s' \mathrm{d}a \mathrm{d}s.
	\end{align*}
	For $\beta=0$ we have that $P'=P$ therefore:
	\begin{align*}
		\frac{\partial J^{P'}_\mu}{\partial \beta} \bigg\rvert_{\beta = 0} = \sum_{i=1}^{M} \eta_i \int_{\mathcal{S}} \int_{\mathcal{A}} \delta_{\mu}^{P}(s,a) \int_{\mathcal{S}} \Big( P_i (s'|s,a) -P(s'|s,a) \Big) U^{P}(s,a,s') \mathrm{d}s' \mathrm{d}a \mathrm{d}s = \sum_{i=1}^{M} \eta_i  \mathds{A}_{P,\mu}^{P_i}.
	\end{align*}
	
\end{proof}

\section{Dissimilarity functions for policies and models}
\label{apx:diss}
Given a set $\mathcal{X}$ a \emph{premetric} (or prametric, quasi-distance or divergence)~\cite{deza2009encyclopedia} is a function $f:\mathcal{X}\times \mathcal{X} \rightarrow \mathds{R}$ that satisfies the axioms:
\begin{itemize}
	\item $f(x,y) \ge 0, \; \forall x,y\in\mathcal{X}$ (non-negativity);
	\item $f(x,x) = 0, \; \forall x \in \mathcal{X}$ (reflexivity).
\end{itemize}
If $f$ satisfies also:
\begin{itemize}
	\item $f(x,y) = 0 \iff x = y, \; \forall x,y \in \mathcal{X}$ (identity of indiscernables);
	\item $f(x,y) = f(y,x),\; \forall x,y \in \mathcal{X}$ (symmetry);
	\item $f(x,y) \le f(x,z) + f(z,y),\; \forall x,y,z\in\mathcal{X}$ (triangular inequality).
\end{itemize}
then $f$ is a \emph{metric}.

The bounds presented in the paper make use of the new dissimilarity functions between policies and models defined as:
\begin{equation*}
	D_{\ev}^{\pi', \pi} = \ev_{s\sim d_{\mu}^{P,\pi}} \big\| \pi'(\cdot|s) - \pi(\cdot|s) \big\|_1, \qquad D_{\ev}^{P', P} = \ev_{(s,a)\sim \delta_{\mu}^{P,\pi}} \big\| P'(\cdot|s,a) - P(\cdot|s,a) \big\|_1,
\end{equation*}
in addition to the well-known ones:
\begin{equation*}
	D^{\pi',\pi}_{\infty}= \sup_{s\in\mathcal{S}} \big\| \pi'(\cdot|s) - \pi(\cdot|s) \big\|_1, \qquad D^{P',P}_{\infty}= \sup_{s\in\mathcal{S},a\in\mathcal{A}} \big\| P'(\cdot|s,a) - P(\cdot|s,a) \big\|_1.
\end{equation*}
The latter are clearly metric on the space of policies and models respectively, while the former are just premetric since, in general, they are not symmetric (since the expected value is computed under the $\gamma$-discounted distribution of $\pi$ or $P$) and they do not satisfy the triangular inequality. However the following result hold:

\begin{prop}
	Let $(P,\pi)$ and $(P',\pi')$ two model-policy pairs. If $D_{\ev}^{{P'}, P} = D_{\ev}^{\pi',\pi} = 0$ then $J_\mu^{P',\pi'} = J_\mu^{P,\pi}$.
\end{prop}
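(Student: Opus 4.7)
The plan is to chain Corollary~\ref{thr:distributionsBound} with a direct manipulation of the definition of expected return. The key observation is that each hypothesis supplies exactly one ingredient we need: $D_{\ev}^{P',P} = D_{\ev}^{\pi',\pi} = 0$ together force the discounted state distributions to coincide, while $D_{\ev}^{\pi',\pi} = 0$ additionally forces $\pi'$ to agree with $\pi$ on the support of that distribution. No appeal to the performance-difference identity (Theorem~\ref{thr:perfImprovement}) or the decoupled bound (Theorem~\ref{thr:boundUncoupled}) will be required.

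First, I would apply Corollary~\ref{thr:distributionsBound} to the pairs $(P',\pi')$ and $(P,\pi)$: under the stated hypotheses the upper bound $\frac{\gamma}{1-\gamma}\big(D_{\ev}^{\pi',\pi} + D_{\ev}^{P',P}\big)$ equals zero, so $\big\|d_{\mu}^{P',\pi'} - d_{\mu}^{P,\pi}\big\|_1 = 0$ and hence $d_{\mu}^{P',\pi'} = d_{\mu}^{P,\pi}$ as measures on $\mathcal{S}$.

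Next, I would substitute this identity into the definition of $J_\mu^{P',\pi'}$ and subtract the definition of $J_\mu^{P,\pi}$, which factorises the common discounted state distribution:
\[
J_\mu^{P',\pi'} - J_\mu^{P,\pi} = \frac{1}{1-\gamma} \int_\mathcal{S} d_{\mu}^{P,\pi}(s) \int_\mathcal{A} \big( \pi'(a|s) - \pi(a|s) \big) R(s,a) \,\mathrm{d}a\,\mathrm{d}s.
\]
Since $\big\|\pi'(\cdot|s) - \pi(\cdot|s)\big\|_1 \ge 0$ pointwise and its expectation under $d_{\mu}^{P,\pi}$ vanishes by assumption, it must vanish $d_{\mu}^{P,\pi}$-almost everywhere. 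Using $R(s,a)\in[0,1]$, the inner $\mathcal{A}$-integral is bounded in absolute value by $\big\|\pi'(\cdot|s) - \pi(\cdot|s)\big\|_1$, so the outer integral is zero and therefore $J_\mu^{P',\pi'} = J_\mu^{P,\pi}$.

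The only delicate point is measure-theoretic: converting the two norm-zero hypotheses into genuine almost-everywhere equalities and carrying those equalities through the two nested integrals. This is routine once Corollary~\ref{thr:distributionsBound} has been invoked, and I would not expect any real obstacle beyond that bookkeeping.
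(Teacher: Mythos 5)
Your proposal is correct and follows essentially the same route as the paper: both invoke Corollary~\ref{thr:distributionsBound} to conclude $d_{\mu}^{P',\pi'}=d_{\mu}^{P,\pi}$ and then use $D_{\ev}^{\pi',\pi}=0$ to kill the remaining policy-difference term in the expected return (the paper phrases this step as showing $\big\|\delta_\mu^{P',\pi'}-\delta_\mu^{P,\pi}\big\|_1=0$, which is the same computation as your bound on the inner integral via $R(s,a)\in[0,1]$). No substantive difference.
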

\begin{proof}
	If $D_{\ev}^{{P'}, P} = D_{\ev}^{\pi',\pi} = 0$ , from Proposition~\ref{thr:distributionsBound} we have that $\big\| d_\mu^{P',\pi'} - d_\mu^{P,\pi} \big\|_1 =0$ and therefore $d_\mu^{P',\pi'}(s) = d_\mu^{P,\pi}(s)$ for all $s\in\mathcal{S}$ being $\big\| \cdot - \cdot \big\|_1$ a metric. Let us consider the difference in the $\gamma$-discounted state-action distributions:
	\begin{align*}
		\delta_\mu^{P',\pi'}(s,a) - \delta_\mu^{P,\pi}(s,a) = \pi'(a|s)d_{\mu}^{P',\pi'}(s) - \pi(a|s)d_{\mu}^{P,\pi}(s) = \big(\pi'(sa|s) - \pi(a|s) \big) d_{\mu}^{P,\pi}(s),
	\end{align*}
	and the $\ell^1$-norm becomes:
	\begin{align*}
		\big\| \delta_\mu^{P',\pi'} - \delta_\mu^{P,\pi} \big\|_1 & = \int_{\mathcal{S}} d_{\mu}^{P,\pi}(s) \int_{\mathcal{A}} \Big| \pi'(a|s) - \pi(a|s)  \Big| \mathrm{d}a \mathrm{d}s = \\
		& = \ev_{s\sim d_{\mu}^{P,\pi}} \big\| \pi'(\cdot|s) - \pi(\cdot|s) \big\|_1 = D_{\ev}^{\pi',\pi}=0.
	\end{align*}
	Therefore $\delta_\mu^{P',\pi'}(s,a) = \delta_\mu^{P,\pi}(s,a)$ for all $s\in\mathcal{S}$ and all $a\in\mathcal{A}$ being $\big\| \cdot - \cdot \big\|_1$ a metric. Thus:
	\begin{equation*}
		J_\mu^{P',\pi'} = \frac{1}{1-\gamma} \int_{\mathcal{S}} \int_{\mathcal{A}} \delta_\mu^{P',\pi'}(s,a) R(s,a) \mathrm{d}a \mathrm{d}s  = \frac{1}{1-\gamma} \int_{\mathcal{S}} \int_{\mathcal{A}} \delta_\mu^{P,\pi}(s,a) R(s,a) \mathrm{d}a \mathrm{d}s = J_\mu^{P,\pi}.
	\end{equation*}
\end{proof}

\section{Examples of Conf-MDPs}
In this section we report two examples of Conf-MDPs having some interesting behaviors. In all figures, the transition probabilities are reported on the edges and 
the reward is written below the state name.

\subsection{An example of Conf-MDP with local optima}
\label{apx:C1}
Let us consider the Conf-MDP represented in Figure~\ref{fig:zero_adv_mdp} where $\omega \in [0 , 1]$ is the parameter, $p\in [ 0 , 1 ]$ is a small fixed probability (say $0.1$) and $M$ is a large positive number.
\begin{figure}[t]
	\includegraphics[width=\textwidth]{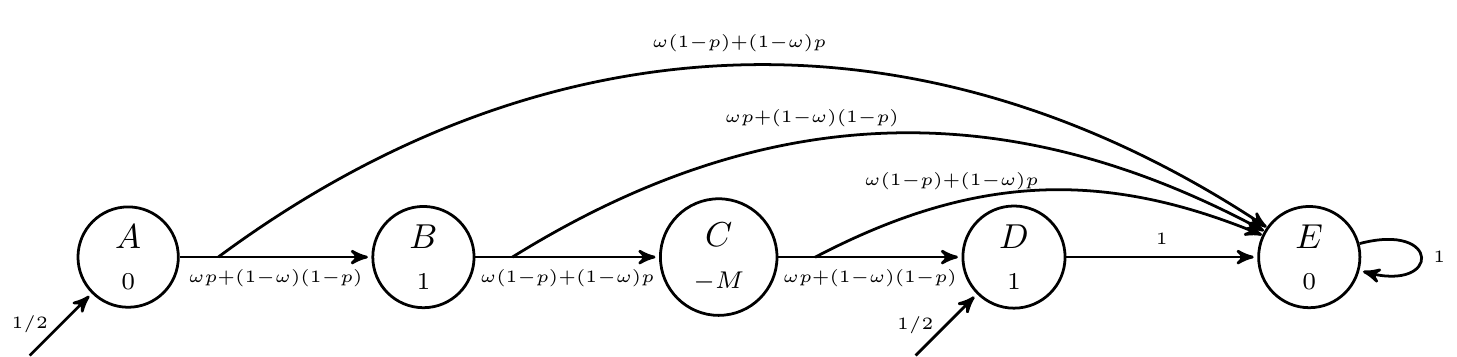}
	\caption{An example ofsome additional plots. Conf-MDP with local maxima.}\label{fig:zero_adv_mdp}
\end{figure} 
In each state there is only one action available (\ie all policies are optimal). The vertex models
are obtained for $\omega\in \{0,1\}$. For both target models there is a small probability to get the punishment $-M$ since for $\omega=0$ the probability to reach state $C$ from $B$ is $p$ and for $\omega=1$ state $B$ is reachable from $A$ with probability $p$. We expect that by mixing the two target models we can only worsen the performance. It is simple to realize that the expected return is a cubic function of $\omega$. We report the expression for $p=0.1$ and $\gamma=1$:
\begin{equation*}
	J^{P_{\omega}}_\mu = \frac{1}{2} \Big( 0.512 \omega^3 + (0.64M - 1.088) \omega^2 - (0.64M+0.296) \omega + 1.981 - 0.09 M \Big).
\end{equation*}
We can find the stationary points by looking at the derivative:
\begin{equation*}
	\frac{\partial J^{P_{\omega}}_\mu}{\partial\omega} = 0.768 \omega^2 + (0.64M - 1.088)\omega - 0.32M -0.148.
\end{equation*}
For $M$ sufficiently large the derivative has one sign variation thus it has two solutions of opposite sign, having expression:
\begin{equation*}
	\omega_{1,2} = \frac{1}{24} \Big( 17 - 10M \pm 10 \sqrt{M^2 - M -4} \Big).
\end{equation*}
Clearly, we are interested only in the solutions within $[0, 1]$ thus we discard the negative one. It is simple to see that the positive solution is approximately $\frac{1}{2}$ for $M$ sufficiently large, as:
\begin{equation*}
	\lim_{M\rightarrow +\infty}  \frac{1}{24} \Big( 17 - 10M + 10 \sqrt{M^2 - M -4} \Big) = \frac{1}{2}.
\end{equation*}
However, having a look at the second derivative we realize that this is a point of minimum, since
\begin{equation*}
	\frac{\partial^2 J^{P_{\omega}}_\mu}{\partial\omega^2} = 1.536 \omega + 0.64M  - 1.088 \big\rvert_{\omega = \frac{1}{2}} > 0.
\end{equation*}
Notice that in the unfortunate case in which SPMI is initialized at this value of $\omega$ the expected
relative advantage (which is the same as the gradient) is zero for both the vertex models and therefore there would be no update.
Therefore, the maximum must lie on the border, specifically either for $\omega=0$ or $\omega=1$. It is simple to see that $J^{P_1}_\mu > J^{P_0}_\mu$. Moreover, if we compute the value of the gradient for $\omega=0$ and $\omega=1$ we realize that in both cases the value is negative. Having a negative advantage, SPMI would never make any step even when the model is initialized at the lower performance vertex $\omega=0$.

\subsection{An example of Conf-MDP with mixed optimal model}
\label{apx:C2}
\begin{figure}[t]
	\includegraphics[width=\textwidth]{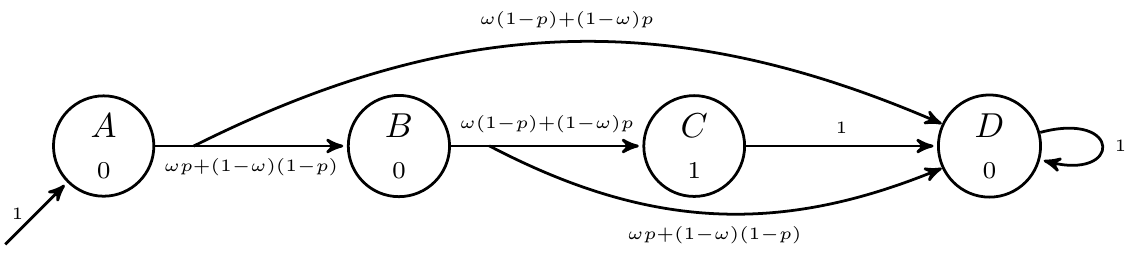}
	\caption{An example of Conf-MDP with mixed optimal model.}\label{fig:mdp_with_max}
\end{figure}
We consider the Conf-MDP as represented in Figure~\ref{fig:mdp_with_max}. As in the previous case, the parameter is $\omega\in [ 0, 1 ]$ and $p\in[0,1]$ is a fixed probability. We want to show that there exists no value of $\omega$ such that $P_{\omega}$ maximizes the value function in all states, while there exist one value of $\omega$ maximizing the expected return. It is
simple to compute the value function in each state:
\begin{flalign*}
	& V^{P_{\omega}}(A) = \gamma^2 \big( \omega p + (1-\omega)(1-p) \big) \big( \omega (1-p) + (1-\omega)p \big),\\
	& V^{P_{\omega}}(B) = \gamma \big( \omega (1-p) + (1-\omega)p \big),\\
	& V^{P_{\omega}}(C) = 1,\\
	& V^{P_{\omega}}(D) = 0. 
\end{flalign*}

Since the initial state is $A$ we have that $J^{P_{\omega}}_\mu = V^{P_{\omega}}(A)$ which is maximized for
$\omega=1/2$. However, there is no value of $\omega$ for which the value function of each state is maximized. As shown in Figure~\ref{fig:plot_value_mdp_with_max}, while $V^{P_{\omega}}(A)$ is maximal in $\omega=1/2$, $V^{P_{\omega}}(B)$ is maximal for $\omega = 1$. All values of $\omega \in [1/2 , 1]$ are indeed Pareto optimal (Figure~\ref{fig:plot_pareto_mdp_with_max}).

With some boring calculation we can determine the expression of the expected relative advantage functions:
\begin{align*}
	\mathds{A}_{P_{\omega},\mu}^{P_1} = \gamma^2 (1-\omega)(1-2\omega) (1-2p)^2  \\
	\mathds{A}_{P_{\omega},\mu}^{P_2} = -\gamma^2 \omega(1-2\omega) (1-2p)^2.
\end{align*}
We clearly see that they both vanish for $\omega=1/2$. 

\begin{figure}[h!]
\centering
\begin{minipage}{.45\textwidth}
  \centering
  \includegraphics[scale=1]{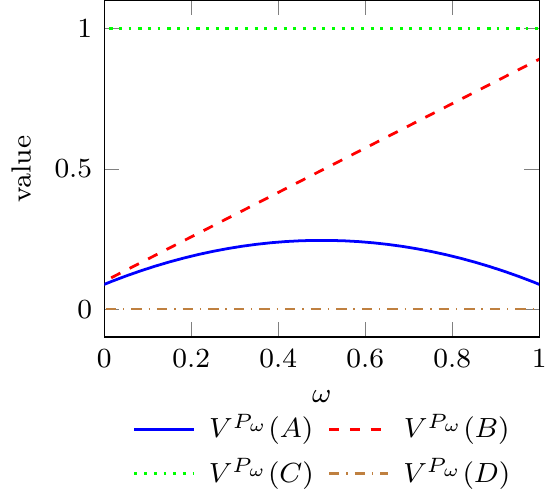}
  \captionof{figure}{The state value function of the Conf-MDP in Figure~\ref{fig:mdp_with_max} as a function of the parameter.}
  \label{fig:plot_value_mdp_with_max}
\end{minipage}%
\hfill
\begin{minipage}{.45\textwidth}
  \centering
  \includegraphics[scale=1]{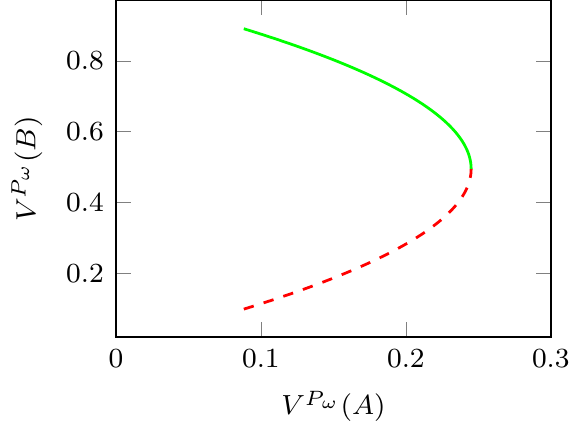}
  \captionof{figure}{The state value function of states $A$ and $B$ (the only ones varying with the parameter) of the Conf-MDP in Figure~\ref{fig:mdp_with_max}. The green continuous line is the Pareto frontier.}
  \label{fig:plot_pareto_mdp_with_max}
\end{minipage}
\end{figure}

\section{Environment description}
\label{apx:env_description}
In this appendix we provide a more detailed description of the environments used in the experimental evaluation.

\subsection{Student-Teacher domain}
\label{apx:student_description}
The teaching/learning process involves two entities: the teacher
and the student (learner) that interact. We assume both entities
share the same goal, i.e., maximizing the learning. The teaching
model, however, should be suited for the specific learning policy
of the student. For instance, not all students have the same skills and are able to capture the information provided by the teacher with the same speed and effectiveness. Thus, the teaching model should be tailored in order to meet the student's needs. Given the goal of maximizing learning, a teaching model induces an optimal learning policy (within the space of the policies that a certain student can play). Symmetrically, a learning policy determines an optimal teaching model (within the space of models available to the teacher). The question we want to answer in this experiment is: can we dynamically adapt the teaching model to the learning policy and the learning policy to the teaching model, so to maximize the learning?

We take inspiration from~\cite{rafferty2011faster} and we formalize the teaching/learning process as an MDP in which the student is the agent and the teacher is the environment. To fit our framework to this context, we can think to the teacher as an online learning platform that can be configured by the student in order to improve the learning experience. As in~\cite{rafferty2011faster} we test the model on the \quotes{alphabet arithmetic} a concept-learning task in which literals are mapped to numbers.

We consider $n$ literals $L_1, ..., L_n$, to which the student can assign the values $\{0,...,m\}$. The teacher, at each time step, provides an \quotes{example}, \ie an equation where a number of distinct literals (from $2$ to $p\le n$) sum to a numerical answer. The set of all possible examples
is given by:
\begin{equation*}
	\mathcal{E} = \Big\{ \sum_{i \in I} L_i = l \;: I \subseteq \{1,...,n\}, 2 \le |I| \le p , l \in \{0,...,|I|m\} \Big\}.
\end{equation*}
The student reacts to an example by performing an action, \ie an assignment of literals. The set of all assignments, \ie actions, is given by:
\begin{equation*}
	\mathcal{A} = \Big\{ L_1 = l_1, L_2 = l_2, \dots, L_n = l_n\;: l_i \in \{0,...,m\}, i=1,\dots,n \Big\},
\end{equation*}
thus $|\mathcal{A}|= (m+1)^n$. In order to model the student policy space we assume that a student can modify an arbitrary number of literals under the assumption that two consecutive assignments satisfy $\sum_{i=1}^n |l_i' - l_i| \le k$. This models the learning limitations of the student, in particular how hard is for the student to capture the teacher information. We assume that the teacher can provide any example. The set of states is the combination of an assignment and an example, \ie $\mathcal{S} = \mathcal{E} \times \mathcal{A}$.

The goal of the student is to perform assignments that are consistent with the teacher's examples (within its limitations on the possible assignments). So, while the student is learning the optimal policy it can configure the teacher to provide more suitable examples. The reward is $1$ when the assignment is consistent, $0$, when it is not and the horizon $H$ is finite. Notice that we don't have goal state, differently from~\cite{rafferty2011faster} so the problem becomes fully observable. We assume that, at the beginning, both policy and model are uniform distribution on the allowed actions/states.

\subsection{Racetrack Simulator}
\label{apx:racetrack_description}
The Racetrack Simulator aims to represent a basic abstraction of an autonomous car driving problem. 
In this context the autonomous driver, the learning agent, has to optimize a driving policy in order to run the vehicle to the track finish line as fast as possible. The vehicle and the track naturally compose the model of the learning process, however there is the possibility to tune a set of vehicle parameters, s.t. aerodynamic profile (to affect the vehicle stability) and engine setting.
Therefore, to maximize the performance, the driving policy of the agent and the model configuration has to be jointly considered. It is noteworthy that a specific model parametrization (vehicle setting) induces an optimal driving policy and, in the other hand, a driving policy determines an optimal model parametrization. Moreover a model-policy pair that results to be optimal for a specific track may not be optimal for a (morphologically) different track. Then, the question we aim to answer with this experiment is the following: can we learn the optimal model-policy pair for a given track by dynamically adapt the vehicle parametrization to the driving policy and, conversely, the driving policy to the vehicle parametrization during the learning process?

We formalize the learning process as an MDP in which the driver is the agent and the environment is composed by the track and the vehicle. The track is represented by a grid of positions, each grid point is either of type $roadway$, $wall$, $initial \; position$, $goal \; position$.
A state in the learning process belongs to the set:
\begin{equation*}\label{apx:student_exp}
	\mathcal{S} = \Big\{  (x, y, v_{x}, v_{y}) : x \in\{0,...,x_{\max}\}, y \in\{0,..., y_{\max}\}, v_{x} \in\{v_{\min},...,v_{\max}\}, v_{y}  \in\{v_{\min},...,v_{\max}\} \Big\},
\end{equation*}
where $(x, y)$ corresponds to a grid position and $(v_{x}, v_{y})$ are the speed along the coordinate axes.
At each step the agent can increment the speed along a coordinate direction or do nothing, then the action space is represented by the following:
\begin{equation*}
	\mathcal{A} = \Big\{ keep, increment \; v_{x}, increment \; v_{y}, decrement \; v_{x}, decrement \; v_{y}  \Big\}.
\end{equation*}
The learning process starts at the state corresponding to the initial position with zero velocities; the agent collects reward $1$ when it reaches a state corresponding to the goal position within the finite horizon $H$, he collects $0$ reward in any other case.

The transition model induces a success probability to any action, a failed action causes a random action to occur instead of the one selected by the agent. This probability aims to model the stability of the vehicle, the more the vehicle is unstable, the more is hard for the agent to drive it (or select an action). The model also induces a failure probability to every action: a failure represent a break of the vehicle, thus it directly cause the end of the episode. This feature represents the pressure on the vehicle engine, the more performance the driver asks for, the more it may break down.
We formalize the transition model as a convex combination between a set of vertex models: these correspond to vehicle configuration pushed towards the limit in terms of the aspects described above.
For our purpose we define a model dichotomy related to vehicle stability: $P \_ highspeed$ ($P\_hs$) trades stability at lower speed to have more stability (or high action success probability) in high speed situations, $P \_lowspeed$ ($P\_ls$), instead, provides more stability in low speed situation and poor stability at higher speed. We define also a model dichotomy related to engine boost: $P \_ boost$ ($P\_b$) guarantees higher engine performance and a lower reliability (or higher failure probability), at the opposite $P \_ noboost$ ($P\_nb$) provides higher reliability but poor engine performance. In Figure~\ref{fig:vertex_models} we propose a graphical representation of the features of these extreme models.

\begin{figure}[ht]
	\centering
	\includegraphics[scale=1]{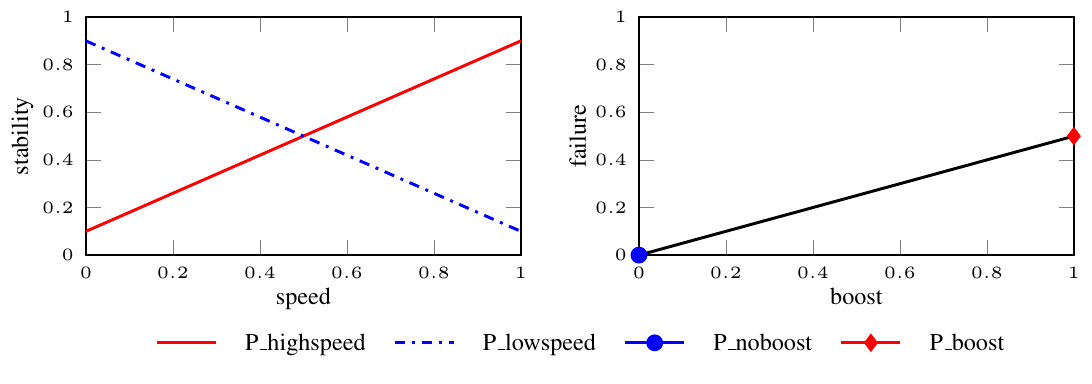}
	\caption{Graphical representation of the racetrack extreme models.}\label{fig:vertex_models}
\end{figure}

Considering any possible combination of stability and engine setting, we define the model set (set of vertex models) $\bm{P} = \{P\_hs\_b, P\_hs\_nb, P\_ls\_b , P\_ls\_nb\}$. Each model in this set is obtained by taking, for each state-action pair, the product of the transition probabilities of the components (\eg $P\_hs\_b(s,a) = P\_hs(s,a) \times P\_b(s,a)$). 
Then, we derive the model space as the convex hull of the vertices in the model set:
\begin{equation*}
	\mathcal{P}_{\omega} = \Big\{ P_{\omega} = \omega_0 P\_hs\_b + \omega_1 P\_hs\_nb + \omega_2 P\_ls\_b + \omega_3 P\_ls\_nb \Big\}.
\end{equation*}
While the agent is learning the optimal driving policy, the model parametrization can be configured (selecting a vector $(\omega_0, \omega_1, \omega_2, \omega_3)$) trying to fit the vehicle settings to the driving policy and simultaneously trying to fit the policy-settings pair to the morphology of the track. At the beginning of the learning process, we assume the policy to be a uniform distribution on the action space and the model to be $(0,0.5,0,0.5)$, that we can consider the most conservative parametrization in our context. We also report in Figure~\ref{fig:tracks} an illustrative representation of the tracks used in the experiments.

\begin{figure}[h]
	\centering
	\includegraphics[scale=1]{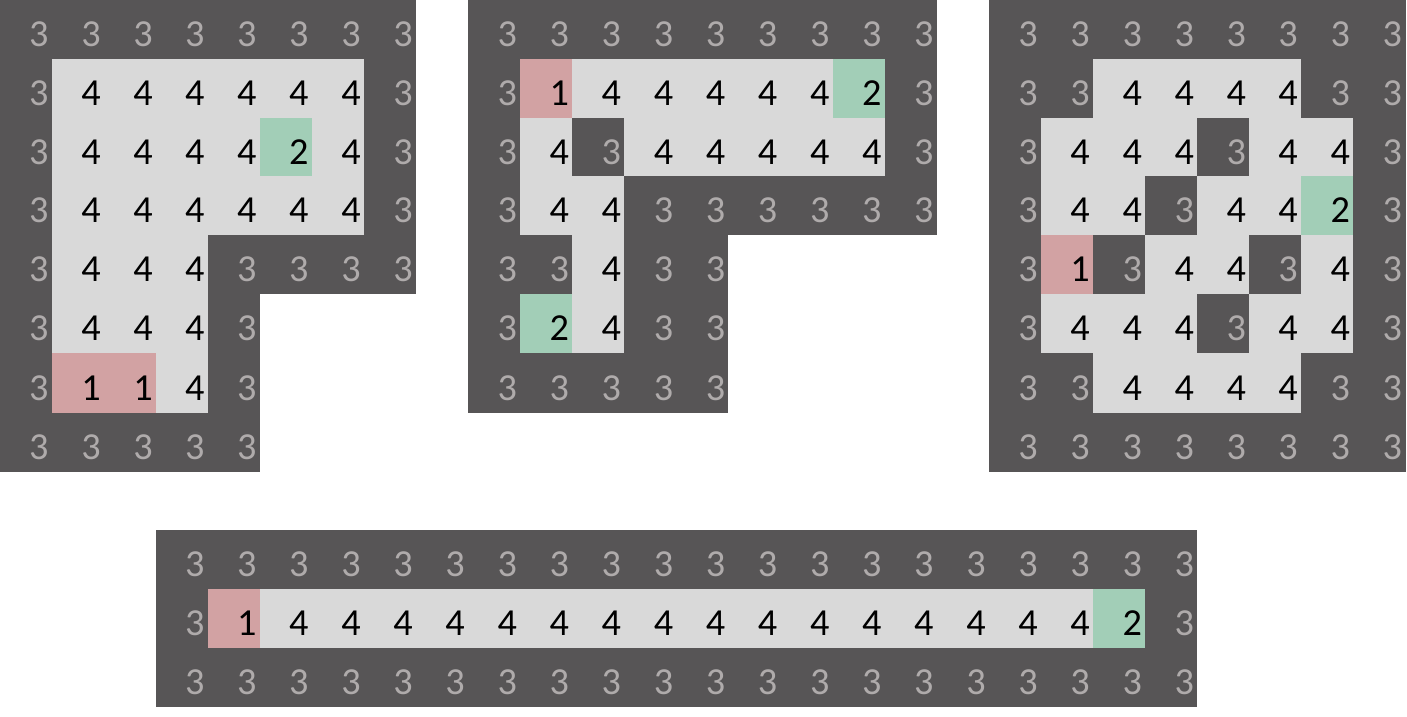}
	\caption{Basic representation of the tracks used in the Racetrack Simulator. From left to right: T1, T3, T4 and T2 just below. Each position have a type label: 1 for initial states, 2 for goal states, 3 for walls, 4 for roadtracks.}\label{fig:tracks}
\end{figure}

\section{Experiments details}
In this appendix, we report the hyper-parameters used for each experiment and some additional plots.

\subsection{Student-Teacher domain}
\label{apx:student_exp}
In Table~\ref{tab:stats_conv} we report the number of iterations to convergence for the different problem settings we considered. 
We can see that SPMI is the first or the second to converge in most of the cases. In Table~\ref{tab:hyper_student} we report the hyper-parameters we used for the runs in the different problem settings. In Figures~\ref{fig:2-1-1-2-10-all}, \ref{fig:2-1-1-2-10-notargettrick}, \ref{fig:2-3-1-2-all} we provide numerous plots showing different interesting metrics for the problem settings we considered in the experimental evaluation.

\begin{table}[h!]
	\caption{Number of steps to convergence for the update strategies in different
	problem settings of the Student-Teacher domain. In \textbf{bold} the best algorithm and \underline{underlined} the second best. The run has been stopped after 50000 iterations.}
	\label{tab:stats_conv}
	\vskip 0.15in
	\begin{center}
	\begin{small}
	\begin{tabular}{lccccc}
		\toprule
		Problem & SPMI & SPMI-sup & SPMI-alt & SPI+SMI & SPI+SMI \\
		\midrule
		2-1-1-2 & \underline{16234} & 18054 & 30923 & 22130 & \textbf{7705}\\
		2-1-2-2 &  \textbf{2839} & 3194 & 5678 &  \textbf {2839} & 12973\\
		2-2-1-2 & 20345 &  \underline{18287} & $>$50000 & 39722 & \textbf{10904}\\
		2-2-2-2 &  \textbf{12025} & \underline{14315} & $>$50000 & $>$50000 & 15257\\
		2-3-1-2 &  14187 & 13391 &  \textbf{11772} & $>$50000 & \underline{12183}\\
		3-1-1-2 &  \underline{15410} & {17929} & 22707 & 31122 & \textbf{14257}\\
		3-1-2-2 &  \textbf{3313} & 3313 & 8434 &  \textbf{3313} & 22846\\
		3-1-3-2 &  \textbf{2945} & 3435 & 5891 &  \textbf{2945} & 18090\\
		\bottomrule
	\end{tabular}
	\end{small}
	\end{center}
	\vskip -0.1in
\end{table}

\setlength{\tabcolsep}{0.2cm}
\begin{table}[h!]
	\caption{Additional information and hyper-parameters used for the different problem settings of the Student-Teacher domain.}
	\label{tab:hyper_student}
	\vskip 0.15in
	\begin{center}
	\begin{small}
	\begin{tabular}{*{7}{c}}
		\toprule
		Problem & $|\mathcal{S}|$ & $|\mathcal{A}|$ & horizon & $\Delta Q^{P,\pi}$ & $\gamma$ & $\epsilon$\\
		\midrule
		2-1-1-2 & 12 & 4 & 10 & $\frac{1-\gamma^{10}}{1-\gamma} \simeq\,$9.56 & 0.99 & 0\\
		2-1-2-2 & 12 & 4 & 10 & 9.56 & 0.99 & 0\\
		2-2-1-2 & 45 & 8 & 10 & 9.56 & 0.99 & 0\\
		2-2-2-2 & 45 & 8 & 10 & 9.56 & 0.99 & 0\\
		2-3-1-2 & 112 & 16 & 10 & 9.56 & 0.99 & 0\\
		3-1-1-2 & 72 & 9 & 10 & 9.56 & 0.99 & 0\\
		3-1-2-2 & 72 & 9 & 10 & 9.56 & 0.99 & 0\\
		3-1-3-2 & 72 & 9 & 10 & 9.56 & 0.99 & 0\\
		\bottomrule
	\end{tabular}
	\end{small}
	\end{center}
	\vskip -0.1in
\end{table}

\begin{figure}[h!]
	\includegraphics[scale=1]{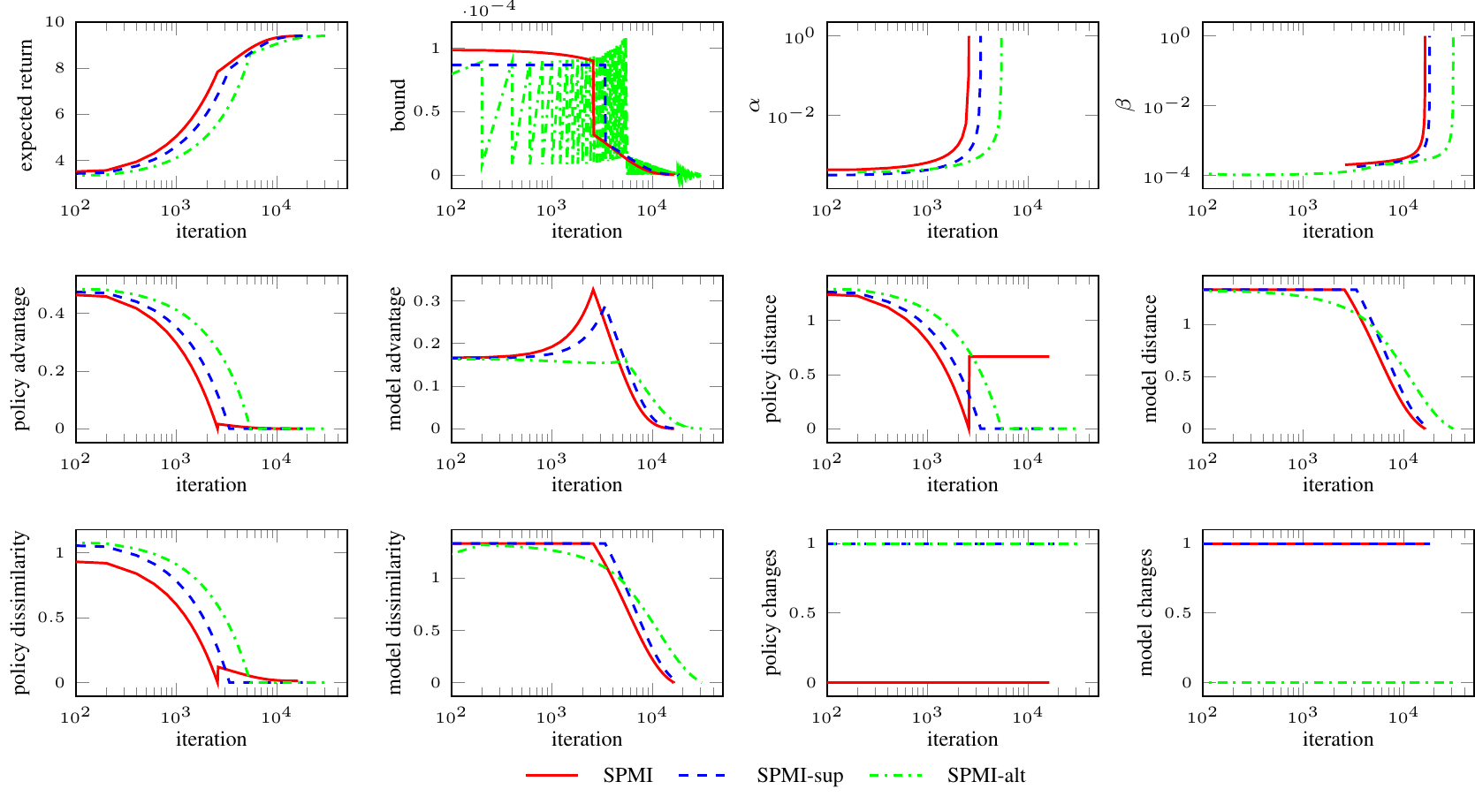}
	\caption{Several statistics of the persistent target choice for different update strategies in the case of Student-Teacher domain 2-1-1-2.}\label{fig:2-1-1-2-10-all}
\end{figure}
\begin{figure}[h!]
	\includegraphics[scale=1]{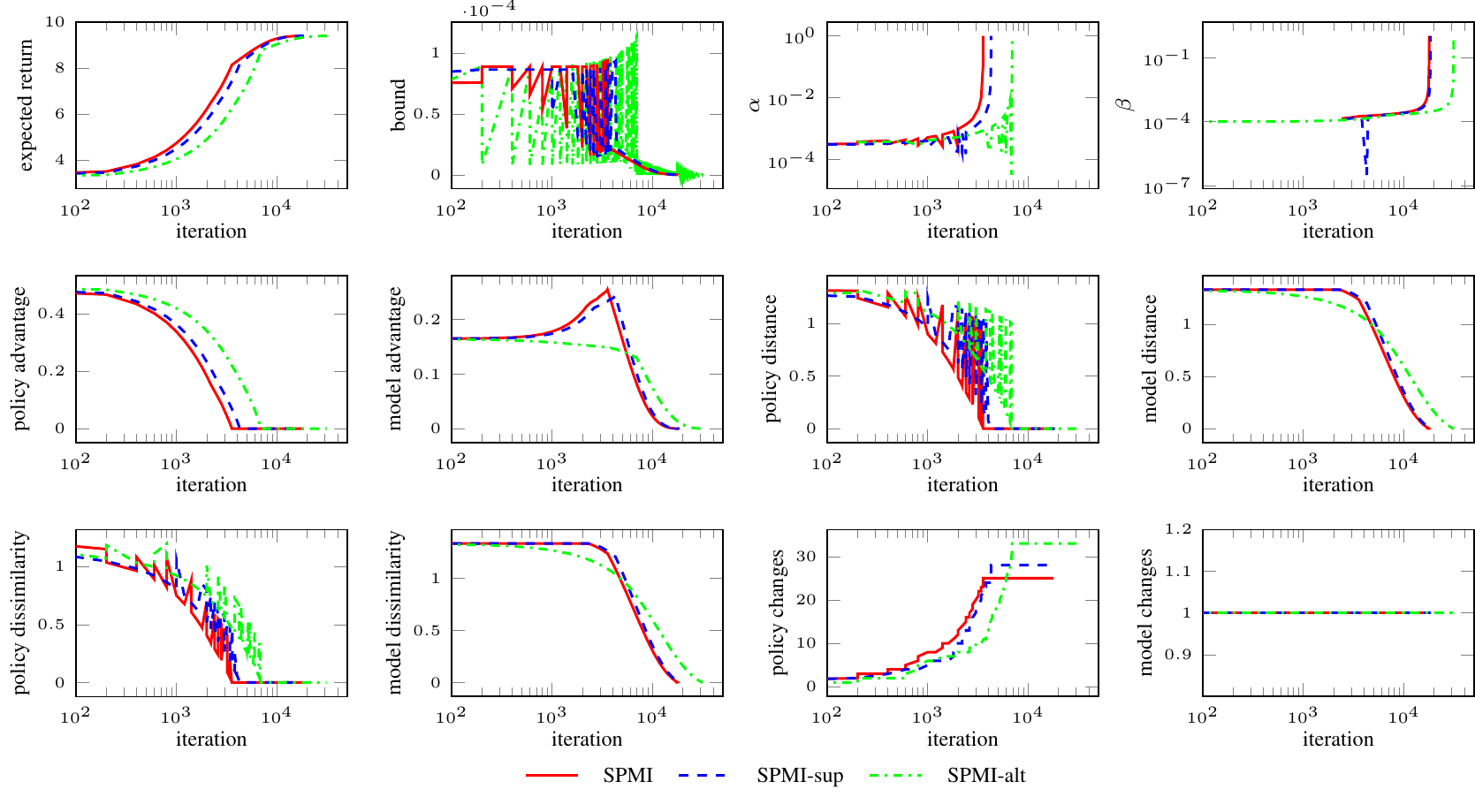}
	\caption{Several statistics of the greedy target choice for different update strategies in the case of Student-Teacher domain 2-1-1-2.}\label{fig:2-1-1-2-10-notargettrick}
\end{figure}
\begin{figure}[h!]
	\includegraphics[scale=1]{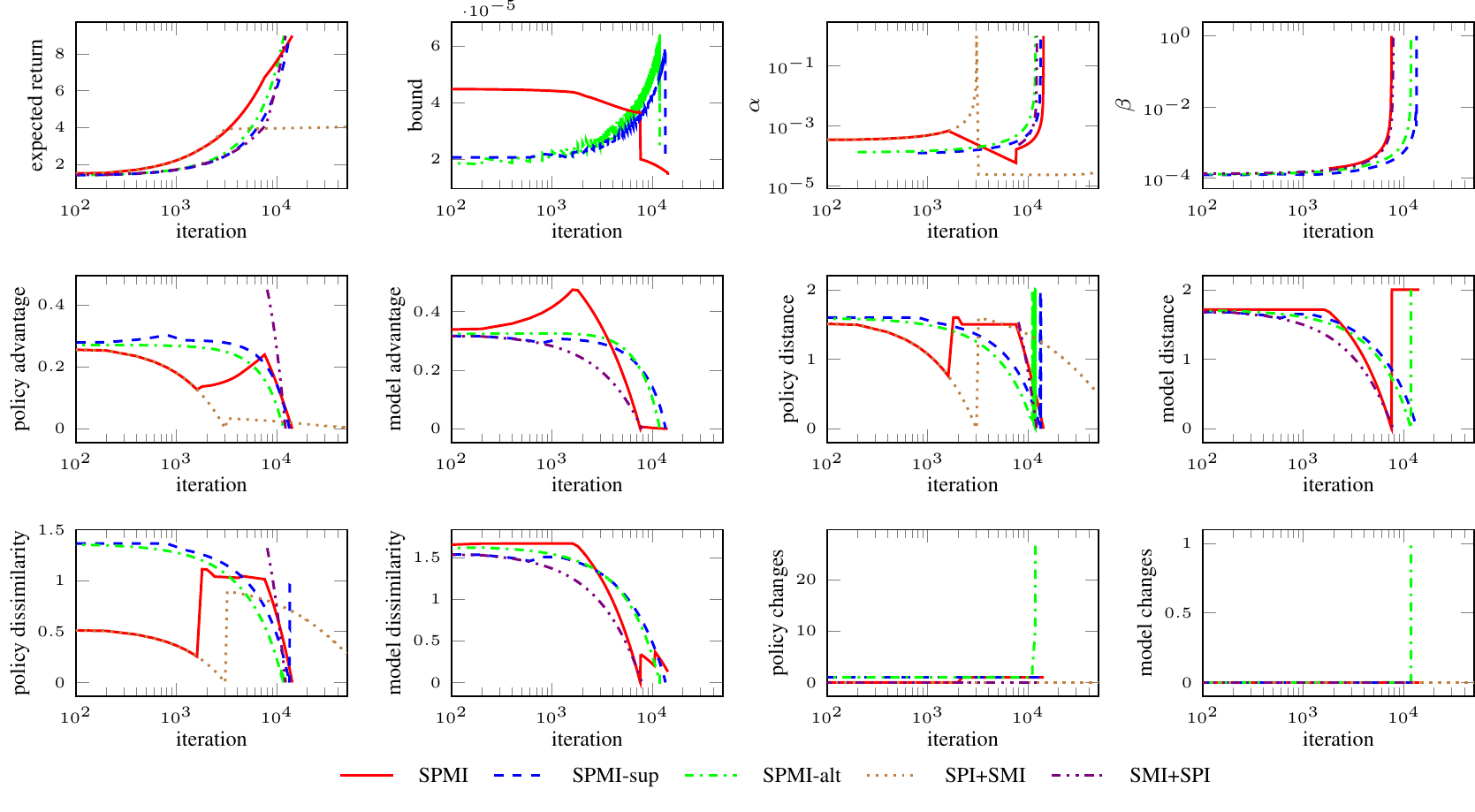}
	\caption{Several statistics of the persistent target choice for different update strategies in the case of Student-Teacher domain 2-3-1-2.}\label{fig:2-3-1-2-all}
\end{figure}
\clearpage

\subsection{Racetrack Simulator}
\label{apx:racetrack_exp}
In this section, we provide additional information about the experiments in the Racetrack Simulator environment (Table~\ref{tab:rcaetrack}) along with a more detailed presentation of the results reported in the Experimental Evaluation section of the paper (Figure~\ref{fig:twostart-2-all} and Figure~\ref{fig:straight-4-all}). Moreover, we present the performance obtained in some cases not mentioned in the paper (Figure~\ref{fig:2-all}).
\setlength{\tabcolsep}{0.2cm}
\begin{table}[ht]
	\caption{Additional information and hyper-parameters used for the different problem settings of the Racetrack Simulator. In the cases with two vertices only stability vehicle configurations are considered.}
	\label{tab:rcaetrack}
	\vskip 0.15in
	\begin{center}
	\begin{small}
	\begin{tabular}{*{8}{c}}
		\toprule
		Track & vertices & $|\mathcal{S}|$ & $|\mathcal{A}|$ & horizon & $\Delta Q^{P,\pi}$ & $\gamma$ & $\epsilon$\\
		\midrule
		 T1 & 2 & 675 & 5 & 20 & 1 & 0.9 & 0\\
		 T3 & 2 & 450 & 5 & 20 & 1 & 0.9 & 0\\
		 T4 & 2 & 675 & 5 & 20 & 1 & 0.9 & 0\\
		 T2 & 2 & 450 & 5 & 20 & 1 & 0.9 & 0\\
		 T2 & 4 & 451 & 5 & 20 & 1 & 0.9 & 0\\
		\bottomrule
	\end{tabular}
	\end{small}
	\end{center}
	\vskip -0.1in
\end{table}

\begin{figure}[h]
	\includegraphics[scale=1]{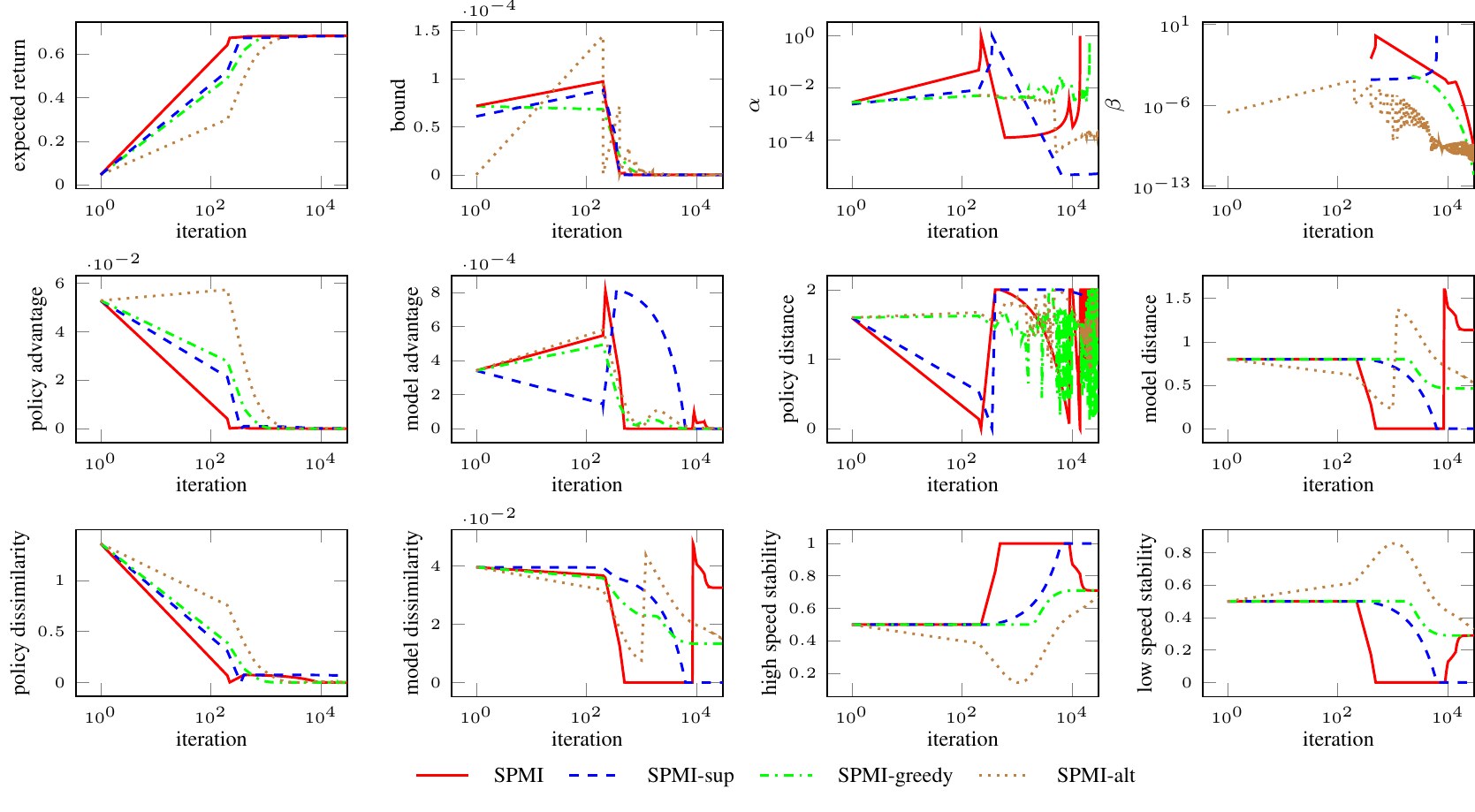}
	\caption{Several statistics of different update strategies and target choices in the case of Racetrack Simulator in the T1 considering vehicle stability only.}\label{fig:twostart-2-all}
\end{figure}
\begin{figure}[h]
	\includegraphics[scale=1]{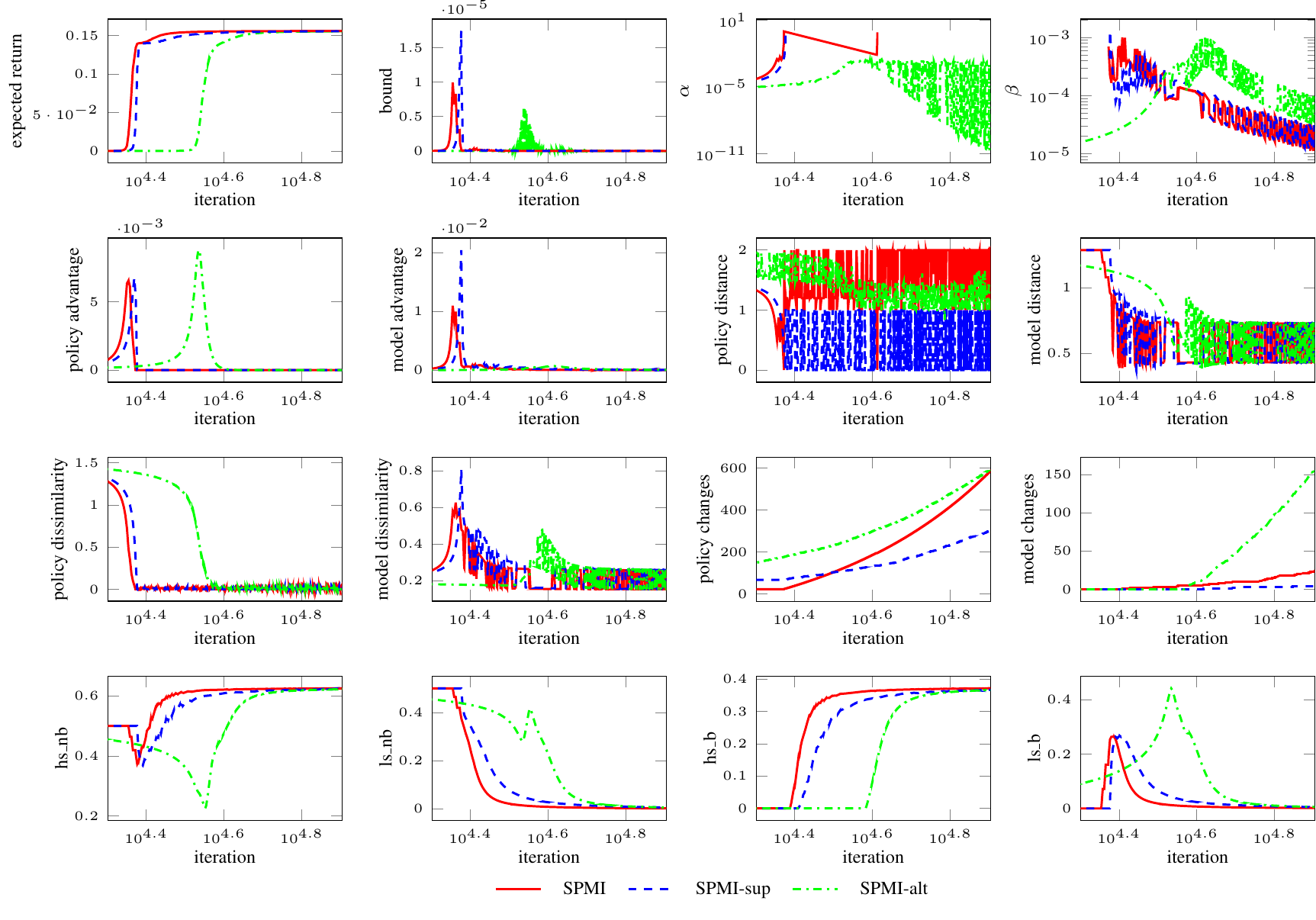}
	\caption{Several statistics of different update strategies in the case of Racetrack Simulator in the T2 considering vehicle stability and engine setting.}\label{fig:straight-4-all}
\end{figure}
\begin{figure}[h]
	\centering
	\includegraphics[scale=1]{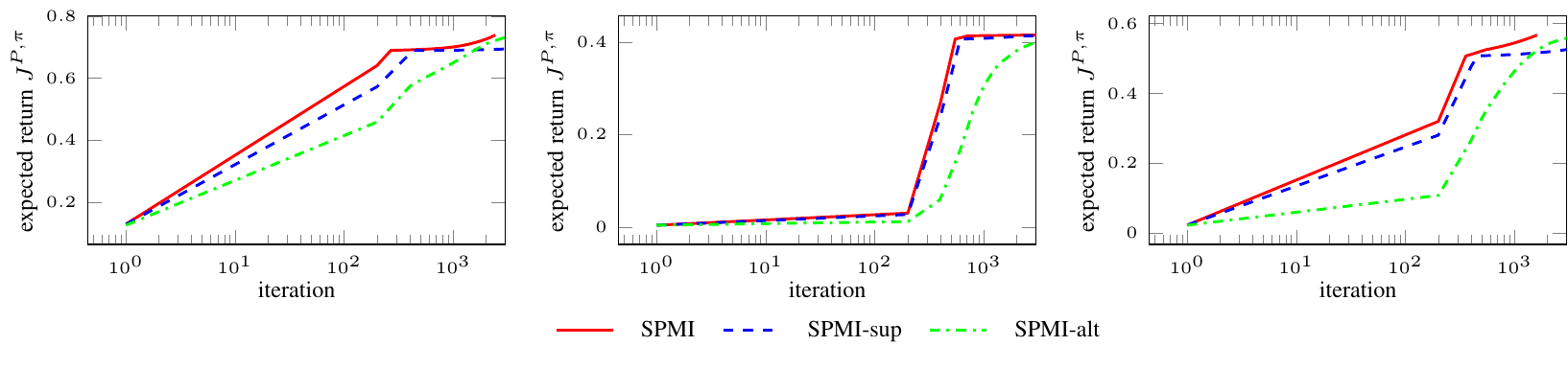}
	\caption{Expected return of the Racetrack Simulator in the T2, T3, T4 for different update strategies and considering vehicle stability configuration only.}\label{fig:2-all}
\end{figure}

\end{document}